\newtheorem{assumption}{Assumption}
\newtheorem{proposition}{Proposition}
\newtheorem{theorem}{Theorem}
\newtheorem{definition}{Definition}
\newcommand{\kron}{\boxtimes}  
\newcommand{\argmin}{\mathop{\text{~argmin~}}}
\DeclareMathOperator{\vecn}{vec}
\newcommand{\tens}[1]{\boldsymbol{\mathcal{#1}}}
\newcommand{\tT}{\tens{T}}
\title{Efficient Algorithms for Regularized Nonnegative Scale-invariant Low-rank Approximation Models}
\date{} 					
\author{ \href{https://orcid.org/0000-0001-8319-8566}{\includegraphics[scale=0.06]{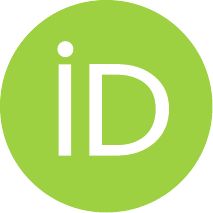}\hspace{1mm}Jeremy E. Cohen}\thanks{Jeremy Cohen is supported by ANR JCJC LoRAiA
ANR-20-CE23-0010.} \\
	CREATIS UMR 5220, U1294\\
	INSA Lyon, UCBL, CNRS, Inserm\\
	Lyon, France \\
	\texttt{jeremy.cohen@cnrs.fr} \\
	\And
	\href{https://orcid.org/0000-0002-3313-1547}{\includegraphics[scale=0.06]{orcid.pdf}\hspace{1mm}Valentin~Leplat} \\
	Faculty of Computer Science and Engineering\\
	Innopolis University\\
	Innopolis, Russia \\
	\texttt{V.Leplat@innopolis.ru} \\
}
\begin{document}
\maketitle

\begin{abstract}

Regularized nonnegative low-rank approximations, such as sparse Nonnegative Matrix Factorization or sparse Nonnegative Tucker Decomposition, form an important branch of dimensionality reduction models known for their enhanced interpretability. From a practical perspective, however, selecting appropriate regularizers and regularization coefficients, as well as designing efficient algorithms, remains challenging due to the multifactor nature of these models and the limited theoretical guidance available.
This paper addresses these challenges by studying a more general model, the Homogeneous Regularized Scale-Invariant model. We prove that the scale-invariance inherent to low-rank approximation models induces an implicit regularization effect that balances solutions. This insight provides a deeper understanding of the role of regularization functions in low-rank approximation models, informs the selection of regularization hyperparameters, and enables the design of balancing strategies to accelerate the empirical convergence of optimization algorithms.

Additionally, we propose a generic Majorization-Minimization (MM) algorithm capable of handling $\ell_p^p$-regularized nonnegative low-rank approximations with non-Euclidean loss functions, with convergence guarantees. Our contributions are demonstrated on sparse Nonnegative Matrix Factorization, ridge-regularized Nonnegative Canonical Polyadic Decomposition, and sparse Nonnegative Tucker Decomposition.

\end{abstract}

\keywords{Constrained low-rank approximations, non-convex optimization, sparse approximation}

\section{Introduction}\label{sec:introduction}

\subsection{Context}

Unsupervised machine learning heavily relies on Low-Rank Approximation (LRA) models, which are fundamental tools for dimensionality reduction. These models exploit the inherent multilinearity in datasets represented as matrices or tensors. One of the most commonly used LRA methods is Principal Component Analysis (PCA), which identifies orthogonal principal components (\textit{i.e.}, rank-one matrices) within a dataset. However, the orthogonality constraint can be detrimental when a physical interpretation of the principal components is required~\cite{harshman1970foundations, Comon2010Handbook}.

Over the past two decades, researchers have explored various constrained LRA models with enhanced interpretability. Notably, Nonnegative Matrix Factorization (NMF) replaces PCA’s orthogonality constraint with an elementwise nonnegativity constraint, allowing for a part-based decomposition of the data~\cite{Lee1999Learning,gillis2020bk}. For tensorial data, models such as Nonnegative Canonical Polyadic Decomposition (NCPD) have received significant attention. The Nonnegative Tucker Decomposition (NTD), a multilinear dictionary-learning model, has also gained popularity~\cite{morup2008algorithms}. Unconstrained LRA models face interpretation challenges due to inherent rotation ambiguities. Consequently, the interpretability of LRA models depends largely on the constraints or regularizations imposed on their parameters.

While the theoretical and algorithmic aspects of regularized Low-Rank Approximation have been extensively explored, several practical challenges remain. As regularized LRA are inherently multi-factor problems, these challenges include selecting appropriate regularization functions and hyperparameters for each parameter matrix, as well as developing efficient algorithms for non-Euclidean loss functions that offer convergence guarantees. In this work, we address these challenges by introducing a broader framework, which we term the Homogeneous Regularized Scale-Invariant (HRSI) problem. Through this framework, we examine the implicit regularization effects arising from the scale invariance of explicitly regularized LRA.

The following sections of this introduction formalize the general framework of our study, outline its current limitations, summarize our contributions, and provide an overview of the manuscript's structure.

\subsection{Notations}
Vectors, matrices and tensors are denoted $x$, $X$ and $\mathcal{X}$ respectively. The set $\mathbb{R}_+^{\times_{i=1}^{d} m_i}$ contains $d$-order tensors with dimensions $m_1\times \ldots \times m_d$ and nonnegative entries. Product $\mathcal{X} \times_i U$ is the n-mode product along the $i$-th mode between tensor $\mathcal{X}$ and matrix $U$.  The Frobenius norm of matrix or tensor $X$ is denoted as $\|X\|_F$. The entry indexed by $(i,j)$ and the column indexed by q in matrix $X$ are respectively written as $X[i,j]$ and $X[:,q]$. $\{X_i\}_{i\leq n}$ is a set of matrices or tensors $X_i$ indexed by $i$ from $1$ to $n$. The set of vectors $x$ such that $g(x)<+\infty$ is denoted by $\text{dom}(g)$, and $\eta_{\mathcal{C}}$ is the characteristic function of convex set $\mathcal{C}$. Matrix $I_r$ denotes the identity matrix of size $r\times r$,
$x\otimes y$ the tensor (outer) product, $A \odot B$ and $A \kron B$ respectively the Hadamard and Kronecker product between two tensors $A$ and $B$ and   $X^\alpha$ the elementwise power of tensor $X$ with exponent $\alpha$.
More details on multi-linear algebra can be found in \cite{Kolda2009Tensor}.

\subsection{The Homogeneous Regularized Scale-Invariant model}\label{sec:HRSI_intro}
Consider the following optimization problem
\begin{equation}\label{eq:hrsi}
  \underset{\forall i\leq n,\; X_i\in\mathbb{R}^{m_i\times r}}{\argmin}f(\{X_i\}_{i\leq n}) + \sum_{i=1}^{n} \mu_{i} \sum_{q=1}^{r} g_i(X_i[:,q]),
\end{equation}
where $f$ is a continuous map from the Cartesian product $\times_{i=1}^{n} \mathbb{R}^{m_i\times r} \cap \text{dom}(g_i)$ to $\mathbb{R}_+$, $\{\mu_i\}_{i\leq n}$ is a set of positive regularization hyperparameters, and $\{g_i\}_{i\leq n}$ is a set of lower semi-continuous regularization maps from $\mathbb{R}^{m_i}$ to $\mathbb{R}_+$. We assume that the total cost is coercive\footnote{Coercivity is satisfied by the regularization functions considered in this work.}, ensuring the existence of a minimizer. Furthermore, we assume the following assumptions hold:

\begin{itemize}
    \item [\textbf{A1}] The function $f$ is invariant to column-wise scaling of the parameter matrices. For any sequence of positive diagonal matrices $\{\Lambda_i\}_{i\leq n}$,
    \begin{equation}\label{eq:h1finv}
        f(\{X_i\Lambda_i\}_{i\leq n}) = f(\{X_i\}_{i\leq n})\;\;\text{if}\; \prod_{i=1}^{n}\Lambda_i = I_r,
    \; \Lambda_i>0.
    \end{equation}
    In other words, scaling any two columns $X_{i_1}[:,q]$ and $X_{i_2}[:,q]$ inversely proportionally has no effect on the value of $f$. 
    \item [\textbf{A2}] Each regularization function $g_i$ is positive homogeneous of degree $p_i$, that is for all $i\leq n$ there exists $p_i>0$ such that for any $\lambda > 0$ and any $x\in \mathbb{R}^{m_i}$,
    \begin{equation}
        g_i(\lambda x) = \lambda^{p_i} g_i(x).
    \end{equation}
    This property holds in particular for any $\ell_p^p$ norm in the ambient vector space.
    \item [\textbf{A3}] Each function $g_i$ is positive-definite, meaning that for any $i\leq n$, $g_i(x)=0$ if and only if $x$ is null.\footnote{This assumption limits the framework's applicability, excluding positive homogeneous regularizations like Total Variation.} 
\end{itemize}
We refer to the approximation problem~\eqref{eq:hrsi} with assumptions \textbf{A1}, \textbf{A2} and \textbf{A3} the Homogeneous Regularized Scale-Invariant problem (HRSI). 

HRSI is quite general and encompasses many instances of regularized LRA problems. To illustrate this, consider the computation of solutions to a (double) sparse NMF (sNMF) model, where an $\ell_1$-norm regularization promotes sparsity in the matrices $X_1$ and $X_2$ and the Kullback-Leibler divergence serves as the loss function. This problem can be formalized as the following optimization problem:
\begin{equation}\label{eq:sNMF1}
    \underset{X_1 \geq 0, X_2\geq 0}{\argmin} \text{KL}(M, X_1X_2^T) + \mu_1\|X_1\|_1 + \mu_2 \|X_2\|_1,
\end{equation}
where $M\in\mathbb{R}^{m_1\times m_2}$ is the input data matrix. This formulation for sNMF fits within the HRSI framework because the cost function is scale-invariant and the regularizations are homogeneous, positive-definite, and columnwise separable. Specifically, sNMF as defined in Equation\eqref{eq:sNMF1} is an HRSI problem with $f(\{X_i\}_{i\leq n})=\text{KL}(M, X_1X_2^T)$, $g_1(x) = \|x\|_1 + \eta_{\mathbb{R}_+^{m_1}}(x)$ and $g_2(x) = \|x\|_1 + \eta_{\mathbb{R}_+^{m_2}}(x)$ where $\|x\|_1$ denotes the $\ell_1$ norm in $\mathbb{R}^{m_i}$ applied to each column of $X_i$. 


\subsection{Current limitations}
Let us consider conventional variational problems, focusing on the well-known LASSO problem~\cite{tibshiraniRegressionShrinkageSelection2011}. A single regularization hyperparameter is used for this problem, although more complex models have been proposed~\cite{bogdanSLOPEADAPTIVEVARIABLE2015}. Representer theorems state that solutions to such problems are combinations of extreme points within the regularization unit ball~\cite{boyerRepresenterTheoremsConvex2019,unserUnifyingRepresenterTheorem2021}. A maximum value for the regularization hyperparameter exists beyond which all solutions become degenerate~\cite{tibshiraniLassoProblemUniqueness2013}, and it is simple to compute. Additionally, homotopy methods establish a connection between the regularity of solutions and the values of the regularization hyperparameters~\cite{efronLeastAngleRegression2004}. In simpler terms, the relationship between the penalty function choice, the regularization hyperparameter, and the solution set is well understood in standard cases. 
For scale-invariant multifactor models however, there are no known representer theorems or maximum regularization hyperparameter characterizing the regularity of the solutions to the best of our knowledge. The presence of multiple regularization hyperparameters also complicates model selection and evaluation.

Furthermore, the design of algorithms for HRSI with non-Euclidean loss functions --- particularly in the context of nonnegative parameter matrices and beta-divergence loss~\cite{cichockiCsiszarDivergencesNonnegative2006,fevotte2011algorithms} --- has been underexplored. The design of efficient optimization algorithms for NMF with non-Euclidean loss is a relatively mature area. 
Many algorithms, including the popular Majorization-Minimization (MM) algorithm, have been proposed to solve the unregularized problem~\cite{Lee1999Learning, fevotte2011algorithms}, but these methods do not trivially extend for regularized factorization. Many regularized LRA models rely on normalization constraints to simplify the tuning of multiple hyperparameters and mitigate degeneracy (see~\cref{sec:illposed}). This approach makes the problem more challenging to solve due to nonconvexity issues. To the best of our knowledge, MM algorithms have no known convergence guarantees in this context.

\subsection{Contributions and Impact}
Our main contribution is to show that solutions to the HRSI problem inherently benefit from implicit regularization due to their scale-invariance. We show the following:
\begin{itemize}
    \item Solutions to HRSI are balanced with respect to the regularization functions, as detailed in~\cref{sec:scale-optimality}.
    \item Explicit regularization on matrices $X_i$ induces implicit regularization on the rank-one components, providing insight into the properties of the solutions. For instance, applying ridge regularization to all parameter matrices in a ridge CPD implicitly enforces a group LASSO regularization on the rank-one components, revealing the tensor CP rank.
    \item The hyperparameters for each matrix $X_i$ can be chosen simultaneously, as discussed in~\cref{sec:choicemu}.
\end{itemize}
These results extend the known properties of specific regularized LRA models. For further details, see~\cref{subsec:balancingPrinciple} for two examples,  and~\cref{sec:sota} for an overview of existing results.

On the front of algorithm design, focusing on the case where the loss function $f$ is a $\beta$-divergence, we make the following contributions. 
\begin{itemize}
    \item A general alternating MM Meta-algorithm is introduced in~\cref{sec_algorithms} which provably computes a stationary point of the regularized LRA problem and accommodates various regularization functions (in particular any $\ell_p^p$ regularization). In particular, 
    our proposed MM update rules for various combinations of $\beta$-divergences and $\ell_p^p$ regularizations are an original contribution, as they guarantee convergence. To the best of our knowledge, existing works typically propose heuristic adaptations of unregularized MM updates in the presence of regularizations, but without convergence guarantees for the iterates~\cite{leroux2015sparse}.
    \item This Meta-algorithm features a balancing strategy that scales the columns of the parameter matrices to minimize the regularization terms. We show formally on a toy problem and experimentally on several practical examples that this procedure improves the convergence speed for the iterates, which otherwise may be sublinear.
\end{itemize}

We illustrate these implicit regularization effects and the proposed Meta-algorithm through the lens of three models: sNMF, ridge-regularized NCPD (rNCPD), and sparse NTD (sNTD). These models are tested on simple synthetic and more complex real-world datasets in~\cref{sec:Showcases}.

These contributions have the potential to improve the usability of regularized LRA models across a wide range of applications, such as spectral unmixing in remote sensing~\cite{Bioucas-Dias2012Hyperspectral} or medical microscopy~\cite{careddaSeparableSpectralUnmixing2023}, music information retrieval~\cite{marmoret2020uncovering}, source separation in chemometric~\cite{Bro1998Multi}, neuroimaging~\cite{roaldAOADMMApproachConstraining2022} and antenna array processing~\cite{Sidiropoulos2000Parallel}. See~\cite{gillis2020bk, Kolda2009Tensor} for surveys about matrix and tensor decomposition applications, respectively. In these applications, regularization is often crucial for ensuring or enhancing the interpretability of the results. While CPD enjoys uniqueness properties under mild conditions~\cite{Kruskal1977Three}, both NMF and NTDare generally not unique without further regularization~\cite{Cichocki2009Nonnegative, gillis2020bk}. Homogeneous regularizations also reshape the optimization landscape by making the cost function coercive, thereby avoiding issues such as swamps that can arise in tensor decompositions~\cite{Lim2009Nonnegative, mohlenkampDynamicsSwampsCanonical2019}. This work guides how to choose such regularization functions, select individual regularization hyperparameters, and design efficient optimization algorithms with convergence guarantees for a broad range of loss functions, accommodating various noise models. Finally, we show that in many scenarios, explicit regularization of all matrices in regularized LRA leads to group sparsity on the components. This helps in selecting the rank, a significant practical issue with these models. This work also includes Python code\footnote{\url{https://github.com/vleplat/NTD-Algorithms}} providing off-the-shelf algorithms for sNMF, rCPD and sNTD.

\section{Scale invariance in HRSI leads to implicit balancing}\label{sec:Sparsity_scale_inva_models}

We begin by discussing the ill-posedness of HRSI, emphasizing the importance of regularizing all the modes. After addressing this issue, we move on to our main contribution: demonstrating that the scale-invariance in the cost function leads to optimal scaling with respect to the regularization, effectively balancing the solutions. An underlying optimization problem emerges that is essentially equivalent\footnote{We use ``essential equivalence'' to describe two optimization problems where any solution to one is also a solution to the other, up to a scaling that complies with assumption \textbf{A1}.} to HRSI, exhibiting an implicit regularization effect. This implicit HRSI formulation provides valuable insights for selecting regularization functions $\{g_i\}_{i\leq n}$ and regularization hyperparameters.

\subsection{Ill-posedness of partially regularized homogeneous scale-invariant problems}\label{sec:illposed}
In Equation~\eqref{eq:hrsi} which defines the HRSI problem, we assumed that the regularization hyperparameters are nonnegative. It is known in the literature~\cite{marminMajorizationMinimizationSparseNonnegative2023}
that if the parameter $\mu_j$ for a specific index $j$ is zero, the scale-indeterminacy of the loss leads to degenerate solutions for HRSI. A stronger result is shown in this work: a partially regularized HRSI problem yields the same optimal cost as an unregularized problem, but with unattainable solutions. In other words, the regularization functions do not change the minimum cost unless all parameter matrices are regularized.
\begin{proposition}\label{prop:ill_posed}
Let $\mu_j=0$ for at least one index $j\leq n$. Then  the infimum of the function 
\begin{equation}\label{eq:l1prop11}
f(\{X_i\}_{i\leq n}) + \sum_{i=1}^{n} \mu_{i} \sum_{q=1}^{r} g_i(X_i[:,q])
\end{equation}
is equal to $\underset{X_i\in\mathbb{R}^{m_i\times r}}{\inf}~f(\{X_i\}_{i\leq n})$. Moreover, the infimum is not attained unless f attains its minimum at $X_i=0$ for all $i\leq n$.
\end{proposition}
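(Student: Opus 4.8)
The plan is to exploit assumption \textbf{A1} to drive the regularization terms of the unregularized mode $j$ to zero while leaving $f$ unchanged, and then use positive-definiteness (\textbf{A3}) and homogeneity (\textbf{A2}) to see that this also kills the other regularization terms in the limit. Concretely, fix any point $\{X_i\}_{i\leq n}$ in the common domain. For a scalar $t>0$, define $\Lambda_j = t\, I_r$, $\Lambda_{j'} = t^{-1/(n-1)} I_r$ for a single other index $j'\neq j$ (or, more symmetrically, spread the compensating factor $t^{-1}$ across all modes $i\neq j$), and $\Lambda_i = I_r$ otherwise, so that $\prod_i \Lambda_i = I_r$. By \textbf{A1}, $f(\{X_i\Lambda_i\}_i) = f(\{X_i\}_i)$ for every $t>0$. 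Meanwhile the penalty at the scaled point is
\begin{equation}\label{eq:scaledpen}
\sum_{i=1}^n \mu_i \sum_{q=1}^r g_i\big(\Lambda_i[q,q]\, X_i[:,q]\big) = \sum_{i=1}^n \mu_i \, (\Lambda_i[q,q])^{p_i}\!\!\sum_{q=1}^r g_i\big(X_i[:,q]\big),
\end{equation}
using homogeneity (\textbf{A2}); since $\mu_j = 0$, the $i=j$ term contributes nothing regardless of how large $\Lambda_j[q,q]=t$ is, while each other term carries a factor $t^{-p_i/(n-1)}\to 0$ as $t\to\infty$. Hence the objective \eqref{eq:l1prop11} evaluated along this path converges to $f(\{X_i\}_i)$. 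Taking the infimum over the starting point shows the infimum of \eqref{eq:l1prop11} is at most $\inf_{X_i} f(\{X_i\}_i)$; the reverse inequality is immediate since $g_i\geq 0$ and $\mu_i\geq 0$. This gives the first claim.

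For the ``not attained'' part, suppose toward a contradiction that the infimum is attained at some $\{X_i^\star\}_i$. By the equality just proved, $\{X_i^\star\}_i$ must satisfy $f(\{X_i^\star\}_i) = \inf f$ and $\sum_i \mu_i \sum_q g_i(X_i^\star[:,q]) = 0$. Since all $\mu_i\geq 0$ and all $g_i\geq 0$, for every index $i$ with $\mu_i>0$ — and by hypothesis at least the complement of $\{j\}$ is nonempty, in fact we may take all other $\mu_i>0$ as in the HRSI definition — we need $g_i(X_i^\star[:,q]) = 0$ for all $q$, which by positive-definiteness (\textbf{A3}) forces $X_i^\star[:,q] = 0$, i.e. $X_i^\star = 0$. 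Then $\inf f = f(\{X_i^\star\}_i)$ with at least one (indeed all regularized) $X_i^\star = 0$, and one checks this actually forces $f$ to attain its minimum at the all-zero point: if $f$ attained its minimum somewhere with some $X_i\neq 0$ we could still scale to push the penalty to zero as above, but the minimizer of the penalized problem is already at an all-zero configuration, so the global minimum value of $f$ coincides with a value $f$ takes at $\{0\}_i$. Thus, unless $f$ attains its minimum at $X_i=0$ for all $i$, the infimum of \eqref{eq:l1prop11} cannot be attained.

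The only delicate point is the coercivity/attainment bookkeeping in the last step: I must be careful that ``$\mu_i>0$ for $i\neq j$'' is what the HRSI setup guarantees (the proposition only stipulates $\mu_j=0$ for \emph{at least one} $j$, while \eqref{eq:hrsi} assumes the remaining hyperparameters are positive), and that the scaling argument, which uses $n\geq 2$, is available — for $n=1$ the statement is vacuous or trivial since a single unregularized mode makes the penalized and unpenalized problems identical. I also want to phrase the path $t\to\infty$ carefully so that the scaled iterates stay in $\mathrm{dom}(g_i)$; this is automatic because positive diagonal scaling preserves membership in $\mathrm{dom}(g_i)$ for the homogeneous $g_i$ considered (e.g. $\ell_p^p$ plus a nonnegativity indicator). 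With these caveats handled, the argument above is complete.
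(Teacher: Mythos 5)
Your proposal is correct and follows essentially the same route as the paper's proof in its appendix: scale the unregularized mode up and the regularized modes down (the paper's $\lambda\to 0$ is your $t\to\infty$ in disguise), use homogeneity (\textbf{A2}) to send the penalty to zero while \textbf{A1} keeps $f$ fixed, and use positive-definiteness (\textbf{A3}) to force the regularized factors to vanish at any putative minimizer. The one place your write-up is weaker is the final step of the non-attainment argument: your justification that the attained value must equal $f$ at the all-zero configuration reads as circular, whereas the paper closes this cleanly by observing that once the regularized factors are zero, scale-invariance (the compensating scaling acts on zero matrices) together with continuity of $f$ gives $f(\{X_i^\star\}_{i\leq n})=f(\{0\}_{i\leq n})$.
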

A proof is provided in~\cref{AppendB}.
In the following, we assume that all matrices $X_i$ are regularized, with associated regularization hyperparameter $\mu_i>0$.

\subsection{Optimal balancing of parameter matrices in HRSI}\label{sec:scale-optimality}
We now turn to the central discussion of this section: the effect of scaling invariance on regularization. Since the loss function $f$ is scale-invariant, it follows that one can minimize the whole cost by adjusting the scaling of the columns with respect to the regularization functions only. A closed-form solution for the optimal scaling derives from the homogeneity and the positive-definiteness of the regularization functions. This solution offers valuable insights into the implicit regularizations and invariances present within the HRSI framework.

\subsubsection{Optimal scaling of regularization terms}
Let $\{X_i\}_{i\leq n}$ be a set where all $X_i$ have nonzero columns, and $q\leq r$ be a column index. To simplify the notation, we let \(a_i=\mu_ig_i(X_i[:,q])\) where $a_i>0$ due to assumption \textbf{A3}. Consider the optimization problem obtained by considering the HRSI problem~\eqref{eq:hrsi} with respect to only the scales $\{\lambda_i\}_{i\leq n}$ of the $q$-th columns of all matrices $X_i$,
\begin{equation}\label{eq:scale_cost}
    \underset{\forall i\leq n,\;\lambda_i\geq 0}{\min} \sum_{i=1}^{n} \lambda_i^{p_i} a_i  \text{ such that } \prod_{i=1}^{n}\lambda_i = 1.
\end{equation}
The scale-invariant loss function $f$ and the regularizations over other columns $g_i(X_i[:,p])$ for $p\neq q$, which are constant with respect to a scaling of the $q$-th columns represented by the parameters $\{\lambda_i\}_{i\leq n}$, have been removed. Our objective is to determine the optimal scales.

One can observe that Problem~\eqref{eq:scale_cost} admits a unique optimal solution. 
The optimal scales can provably be computed from the weighted geometrical mean $\beta$ of $\{p_ia_i, \frac{1}{p_i} \}$ as follows: 
\begin{equation}
    \lambda_i^{\ast} = \left(\frac{\beta}{p_ia_i}\right)^{\frac{1}{p_i}}, \text{ with }  \beta = \left(\prod_{i\leq n} \left(p_ia_i\right)^{\frac{1}{p_i}}\right)^{\frac{1}{\sum_{i\leq n} \frac{1}{p_i}}}.
\end{equation}
Here, the set $\{ \lambda_i^\ast\}_{i\leq n}$ contains the solutions to the scaling problem~\eqref{eq:scale_cost}.

Since at optimality the balancing procedure should have no effect, given an optimal solution ${X_i^\ast}$ to HRSI, it holds that $\lambda_i^\ast=1$ for all $i\leq n$, where $a_i = \mu_ig_i(X_i[:,q]^\ast)$. In that case, we can infer that all additive terms in~\eqref{eq:scale_cost} must be equal up to a multiplicative factor. Specifically, for any $i\leq j\leq n$, we have
   $ p_ia_i = p_ja_j = \beta $.
We can now state the following proposition characterizing the solutions of HRSI.
\begin{proposition}\label{prop:opt_scal}
    If $\mu_i>0$ for all $i$, any solution $\{X_i^\ast\}_{i\leq n}$ to Problem~\eqref{eq:hrsi} satisfies $p_i\mu_ig_i(X_i^{\ast}[:,q]) = \beta_q$ for all $i\leq n$ and $q\leq r$, where 
    \begin{equation}\label{eq:def_beta}
        \beta_q = \left(\prod_{i\leq n} \left(p_i\mu_ig_i(X^\ast_i[:,q])\right)^{\frac{1}{p_i}}\right)^{\frac{1}{\sum_{i\leq n} \frac{1}{p_i}}} .
    \end{equation}
\end{proposition}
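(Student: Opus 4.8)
The plan is to fix an optimal $\{X_i^\ast\}_{i\leq n}$ and a column index $q\leq r$, and to reduce the statement to the one-column scaling problem~\eqref{eq:scale_cost}. First I would restrict the perturbations to diagonal rescalings acting only on the $q$-th column: set $\Lambda_i = \mId + (\lambda_i-1)\,e_q e_q^{\T}$ with $\lambda_i>0$ and $\prod_{i}\lambda_i = 1$, so that $\prod_i \Lambda_i = \mId$ and \textbf{A1} applies. Since scaling a point of $\mathrm{dom}(g_i)$ by a positive constant keeps it in $\mathrm{dom}(g_i)$ (by \textbf{A2}) and preserves any nonnegativity constraint hidden in $g_i$, the rescaled iterate $\{X_i^\ast \Lambda_i\}$ is feasible; and by \textbf{A1}--\textbf{A2} the total cost equals $f(\{X_i^\ast\}) + \sum_i \mu_i \sum_{p\neq q} g_i(X_i^\ast[:,p]) + \sum_{i=1}^n \lambda_i^{p_i} a_i$ with $a_i = \mu_i g_i(X_i^\ast[:,q])$. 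Because $\{X_i^\ast\}$ is a global minimizer, $\lambda=\one$ must solve~\eqref{eq:scale_cost} with these $a_i$.

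Next I would deal with the zero-column cases, which is where positive-definiteness \textbf{A3} earns its keep. I claim that at optimality, for each fixed $q$ either all columns $X_i^\ast[:,q]$ vanish or none do: if $X_j^\ast[:,q]=0$ while $X_k^\ast[:,q]\neq 0$ for some $k\neq j$, choose $\lambda_k\in(0,1)$, $\lambda_j = 1/\lambda_k$, and $\lambda_i=1$ otherwise; by \textbf{A1} $f$ is unchanged, the $(j,q)$ regularization term stays $0$ by \textbf{A2}, while the $(k,q)$ term strictly decreases by \textbf{A2}--\textbf{A3}, contradicting optimality. When all columns $X_i^\ast[:,q]$ vanish, $\beta_q=0$ and both sides of the asserted identity are zero, so there is nothing to prove; henceforth I assume all columns are nonzero, so that $a_i>0$ for every $i$.

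Then I would analyze~\eqref{eq:scale_cost} via the substitution $t_i=\log\lambda_i$, which turns it into $\min\{\sum_i a_i e^{p_i t_i} : \sum_i t_i = 0\}$ --- a strictly convex program with a linear constraint, hence with the unique minimizer announced in the excerpt. Writing the first-order (Lagrange) optimality condition gives $a_i p_i e^{p_i t_i} = \nu$ for a common multiplier $\nu$ and all $i$. Since we already know $\lambda=\one$, i.e.\ $t=0$, is that minimizer, this forces $p_i a_i = \nu$ for all $i$, which is exactly the balancing identity $p_i\mu_i g_i(X_i^\ast[:,q]) = \beta_q$ once we set $\beta_q:=\nu$. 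Finally, substituting $p_i a_i = \nu$ into the constraint $\prod_i\lambda_i=\prod_i(\nu/(p_i a_i))^{1/p_i}=1$ --- equivalently, raising $p_i a_i=\nu$ to the power $1/p_i$ and multiplying over $i$ --- yields $\nu^{\sum_i 1/p_i} = \prod_i (p_i a_i)^{1/p_i}$, i.e.\ $\nu = \big(\prod_i (p_i\mu_i g_i(X_i^\ast[:,q]))^{1/p_i}\big)^{1/\sum_i 1/p_i}$, matching~\eqref{eq:def_beta}.

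The routine part is the convex-duality computation for~\eqref{eq:scale_cost}; the delicate points, and the ones I would write out carefully, are (i) verifying that restricting to column-$q$ rescalings with $\prod_i\lambda_i=1$ indeed gives back problem~\eqref{eq:scale_cost} (domain preservation under positive scaling, and the equivalence $\prod_i\Lambda_i=\mId \iff \prod_i\lambda_i=1$ for such rescalings), and (ii) the zero-column dichotomy, since a ``mixed'' column would otherwise break the stated formula for $\beta_q$. Note that only global optimality and \textbf{A1}--\textbf{A3} are used, so the argument applies verbatim to \emph{any} solution of~\eqref{eq:hrsi}.
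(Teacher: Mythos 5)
Your proof is correct and follows essentially the same route as the paper's: reduce to the per-column scaling problem~\eqref{eq:scale_cost}, apply the Lagrange conditions, and use the fact that a global minimizer must be a fixed point of the balancing (all $\lambda_i=1$), which forces $p_i a_i=\nu$ and yields~\eqref{eq:def_beta}. Your two refinements --- the substitution $t_i=\log\lambda_i$ that makes the scaling problem strictly convex (so the stationarity conditions are sufficient and the minimizer unique), and the explicit zero-column dichotomy via \textbf{A1}--\textbf{A3} --- are somewhat more careful than the paper's treatment, which asserts uniqueness without proof and dismisses the null case in a single sentence.
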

A proof for~\cref{prop:opt_scal} is provided in~\cref{app:balancing_eq}. 
If all regularization hyperparameters $\mu_i$ are nonzero but a column $X_{i_0}[:,q]$ is null for a given index $i_0\leq n$, by scaling to zero columns $X_i[:,q]$ for $i\neq i_0$, the penalty functions are trivially minimized with a constant cost. Therefore, the optimal balancing in the presence of a null column $X_{i_0}[:,q]$ is to set the corresponding columns $X_{i\neq i_0}[:,q]$ to zero as well. Hence,~\cref{prop:opt_scal} allows for solutions with zero columns.

\subsubsection{Balancing algorithm and essentially equivalent scale-free implicit HRSI}\label{subsec:balancingPrinciple}
In the previous section, we demonstrated that it is possible to compute the optimal scaling of parameter matrices to minimize the regularization terms while keeping the cost function $f$ constant. In simpler terms, at any specific point within the parameter space, we can reduce the cost function in \eqref{eq:hrsi} by applying a scaling procedure, which we refer to as the balancing procedure.

Formally, given a set of parameter matrices $\{X_i\}_{i\leq n}$ with nonzero columns, and assuming that all regularization hyperparameters $\mu_i$ are positive, the balancing procedure is defined as follows:
\begin{equation}\label{eq:scale_aast_final}
    \tilde{X}_i[:,q] = \left(\frac{\beta_q}{p_i\mu_ig_i(X_i[:,q])}\right)^{1/p_i} X_i[:,q],
\end{equation}
where $\beta_q$ is given by~\cref{prop:opt_scal}. The balancing procedure results in a new set of parameters $\{\tilde{X}_i\}_{i\leq n}$ which solves the optimal scaling problem
\begin{equation}\label{eq:scale_pb_big}
  \underset{\forall i\leq n,\; \Lambda_i\in\mathbb{R}_+^{r\times r}}{\min}f(\{X_i\Lambda_i\}_{i\leq n}) + \sum_{i=1}^{n} \mu_{i} \sum_{q=1}^{r} g_i(\Lambda_i[q,q]X_i[:,q])
  \text{ s.t. } \Lambda_i \text{ are diagonal and } \prod_{i\leq n} \Lambda_i = I_r.
\end{equation}
The balancing procedure is summarized in~\cref{alg:balancing-Algo}.

\algsetup{indent=2em}
\begin{algorithm}[ht!]
\caption{Balancing algorithm \label{alg:balancing-Algo}}
\begin{algorithmic}[1] 
\REQUIRE A collection of columns $\{X_i[:,q]\}_{i\leq n}$ for a fixed index $q\leq r$, regularization hyperparameters $\mu_i>0$, regularization functions $g_i$ and their homogeneity degree $p_i$.
\IF{$X_i[:,q]=0$ for any $i\leq n$}
    \STATE Set $X_i[:,q]=0$ for all $i\leq n$.
\ELSE
    \STATE Compute the geometric mean $\beta_q = \left(\prod_{i\leq n} \left(p_i\mu_ig_i(X_i[:,q])\right)^{\frac{1}{p_i}}\right)^{\frac{1}{\sum_{i\leq n} \frac{1}{p_i}}} $  
    \FOR{$i$=1 : $n$}
        \STATE Set $\tilde{X}_i[:,q] = \left(\frac{\beta_q}{p_i\mu_ig_i(X_i[:,q])}\right)^{1/p_i} \tilde{X}_i[:,q]$
    \ENDFOR
\ENDIF
\RETURN $\{X_i[:,q]\}_{i\leq n}$
\end{algorithmic}  
\end{algorithm} 

As noted earlier, a global solution $\{X^\ast_i\}_{i\leq n}$  of HRSI must adhere to the balancing condition. Consequently, the solution space of HRSI can be restricted to balanced solutions only.
The core insight of our work is that this restriction transforms the columnwise sum of regularization terms into a columnwise product of regularization.

Since all regularization terms evaluated at a balanced solution are equal up to factors $p_i$, the regularization terms
    $\sum_{q=1}^{r} \sum_{i=1}^{n} \mu_i g_i(X_i[:,q])$
after optimal balancing becomes $
   \sum_{i=1}^{n}\frac{1}{p_i} \sum_{q=1}^{r}  \beta_q$.
This expression can be rewritten as an implicit regularization term 
\begin{equation}\label{eq:implicit_reg}
   \tilde{\mu} \sum_{q=1}^{r}  \left(\prod_{i=1}^{n} g_i(X_{i}[:,q])^{\frac{1}{p_i}}\right)^{\frac{1}{\sum_{i=1}^{n} \frac{1}{p_i}}},
\end{equation}
where the penalty weight $\tilde{\mu}$ is defined as  
$\tilde{\mu} = \left(
\prod_{i=1}^{n}(p_i\mu_i)^{\frac{1}{p_i}}
    \right)^{\frac{1}{\sum_{i=1}^{n} \frac{1}{p_i}}}\left(\sum_{i=1}^{n}\frac{1}{p_i}\right).$ 
When all $p_i$-s are equal, the implicit regularization coefficient conveniently simplifies to $\tilde{\mu}= n\left(\prod_{i=1}^{n}\mu_i\right)^{\frac{1}{n}}$, and the regularization term simplify to $\tilde{\mu}\sum_{q=1}^{r} \left( \prod_{i=1}^{n} g_{i}(X_{i}[:,q]) \right)^{\frac{1}{n}}$.

The implicit regularization~\eqref{eq:implicit_reg} is of significant interest. The optimization obtained by replacing the explicit regularization with the implicit one,
\begin{equation}\label{eq:hrsi_implicit}
  \underset{\forall i\leq n,\; X_i\in\mathbb{R}^{m_i\times r}}{\argmin}f(\{X_i\}_{i\leq n}) + \tilde{\mu} \sum_{q=1}^{r}  \left(\prod_{i=1}^{n} g_i(X_{i}[:,q])^{\frac{1}{p_i}}\right)^{\frac{1}{\sum_{i=1}^{n} \frac{1}{p_i}}},
\end{equation}
is fully scale-invariant in the sense of assumption \textbf{A1}. We can show that the explicit and implicit regularized problems are in fact essentially equivalent. 
\begin{proposition}\label{prop:implicit_equiv}
    Given strictly positive regularization hyperparameters $\lambda_i$, the HRSI problem~\eqref{eq:hrsi} is essentially equivalent to the implicit HRSI problem~\eqref{eq:hrsi_implicit}. 
\end{proposition}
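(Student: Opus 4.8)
The plan is to reduce both problems to a common ``scale-optimal'' objective and then read off the equivalence from \cref{prop:ill_posed} and \cref{prop:opt_scal}. Write $F(\{X_i\})=f(\{X_i\})+\sum_i\mu_i\sum_q g_i(X_i[:,q])$ for the explicit cost of \eqref{eq:hrsi}, $\tilde F(\{X_i\})$ for the implicit cost of \eqref{eq:hrsi_implicit}, and define the scale-reduced objective $\hat F(\{X_i\}):=\inf\{\,F(\{X_i\Lambda_i\}):\Lambda_i\text{ positive diagonal},\ \prod_i\Lambda_i=I_r\,\}$, i.e. the value of the scaling problem \eqref{eq:scale_pb_big}. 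Since $\Lambda_i=I_r$ is feasible, $\hat F\le F$ pointwise; and since $\hat F(\{X_i\})$ is an infimum of values of $F$, we have $\hat F(\{X_i\})\ge\inf F$ pointwise. The heart of the argument is the identity $\hat F\equiv\tilde F$; once this is in hand the equivalence is almost immediate.

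To prove $\hat F=\tilde F$ I would first observe that, because $f$ is \textbf{A1}-invariant and the constraint $\prod_i\Lambda_i=I_r$ amounts to the decoupled constraints $\prod_i\Lambda_i[q,q]=1$, one per column, the scaling problem \eqref{eq:scale_pb_big} splits into the $r$ independent problems \eqref{eq:scale_cost} with $f$ left untouched. For a column $q$ with nonzero entries in every mode, the closed form for $\lambda_i^\ast$ and $\beta_q$ recalled before \cref{prop:opt_scal} gives optimal value $\sum_i(\lambda_i^\ast)^{p_i}\mu_i g_i(X_i[:,q])=\sum_i\beta_q/p_i=\tilde\mu\big(\prod_i g_i(X_i[:,q])^{1/p_i}\big)^{1/\sum_i 1/p_i}$, which is exactly the $q$-th summand of the implicit penalty; this is the computation already carried out to derive \eqref{eq:implicit_reg}. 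For a column $q$ null in at least one mode $i_0$, positive-definiteness \textbf{A3} forces $g_{i_0}(X_{i_0}[:,q])=0$, so the implicit summand vanishes, while on the other side sending $\Lambda_i[q,q]\to0$ for $i\neq i_0$ and $\Lambda_{i_0}[q,q]\to\infty$ keeps the constraint satisfied and drives $\sum_i\lambda_i^{p_i}\mu_i g_i(X_i[:,q])\to0$, so the infimum is $0$ there too. Summing over $q$ and adding $f$ yields $\hat F=\tilde F$.

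With $\hat F=\tilde F$ in place, I would conclude as follows. From $\tilde F\le F$ and $\tilde F\ge\inf F$ pointwise we get $\inf\tilde F=\inf F=:v$, and $v$ is attained for $F$ by coercivity. If $\{X_i^\ast\}$ minimizes $F$, then by \cref{prop:opt_scal} it satisfies $p_i\mu_i g_i(X_i^\ast[:,q])=\beta_q$ for all $i,q$, so for each column $q$ the scaling $\lambda_i=1$ is already optimal in \eqref{eq:scale_cost}; hence $\tilde F(\{X_i^\ast\})=\hat F(\{X_i^\ast\})=F(\{X_i^\ast\})=v=\inf\tilde F$, i.e. $\{X_i^\ast\}$ minimizes \eqref{eq:hrsi_implicit}. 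Conversely, if $\{Y_i^\ast\}$ minimizes $\tilde F$, then $\hat F(\{Y_i^\ast\})=v$; when every column of $\{Y_i^\ast\}$ is nonzero in every mode this infimum is attained at the closed-form scaling $\Lambda_i[q,q]=\lambda_i^\ast$, with $\prod_i\Lambda_i=I_r$ by construction since $\beta_q$ is precisely the geometric mean enforcing $\prod_i\lambda_i^\ast=1$, so the \textbf{A1}-rescaled point $\{Y_i^\ast\Lambda_i\}$ attains $F=v$ and minimizes \eqref{eq:hrsi}. This is exactly the claimed essential equivalence.

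The main obstacle is the treatment of null columns. The balancing map \eqref{eq:scale_aast_final} divides by $g_i(X_i[:,q])$ and is undefined on a vanishing column; moreover, in the reverse direction a minimizer of \eqref{eq:hrsi_implicit} could a priori have a column $q$ null in some mode $i_0$ but nonzero in another, in which case the implicit penalty ignores that column while the explicit one does not and the infimum defining $\hat F$ is not attained by any feasible positive, product-$I_r$ scaling. (On the forward side this cannot occur: \cref{prop:opt_scal} already forces each column of a minimizer of \eqref{eq:hrsi} to be either nonzero in all modes or null in all modes.) Making the reverse direction rigorous requires either invoking the convention established after \cref{prop:opt_scal} --- that a column null in one mode is set null in all modes, which for the losses of interest leaves $f$ unchanged since the corresponding rank-one term already vanishes --- or a short perturbation argument replacing the null blocks by $\varepsilon$-scaled unit vectors and letting $\varepsilon\to0$. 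Getting this boundary bookkeeping right, while keeping the columnwise decoupling and the degenerate-limit step of the second paragraph airtight, is where the care is needed; the rest is the weighted AM--GM computation already performed in the derivation of \eqref{eq:implicit_reg}.
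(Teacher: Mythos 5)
Your proof is correct and follows essentially the same route as the paper's (very terse) argument: reduce the explicit problem to its scale-optimal value via the columnwise scaling problem~\eqref{eq:scale_cost}, identify that value with the implicit penalty through the weighted geometric-mean computation behind~\eqref{eq:implicit_reg}, and map minimizers back and forth through the optimal balancing. The additional bookkeeping you supply for null columns and for non-attainment of the scaling infimum goes beyond what the paper records, but it is a refinement of the same argument rather than a different one.
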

The proof of this proposition relies on the fact that, denoting $\phi(\{X_i\}_{i\leq n})=f(\{X_i\}_{i\leq n}) + \sum_{i=1}^{n} \mu_{i} \sum_{q=1}^{r} g_i(X_i[:,q])$, 
\begin{equation}\label{eq:hrsi_proof}
  \underset{\forall i\leq n,\; X_i\in\mathbb{R}^{m_i\times r}}{\min} \phi(\{X_i\}_{i\leq n}) =\underset{\forall i\leq n,\; X_i\in\mathbb{R}^{m_i\times r},~\prod_{i\leq n}\Lambda_i=1}{\min}~\phi(\{X_i\Lambda_i \}_{i\leq n}).
\end{equation}
This shows that there is a trivial mapping between the minimizers of the explicit and implicit HRSI problems, where the solution to one can be transformed into the solution to the other through a scaling of the columns of the parameter matrices.

The implicit HRSI formulation provides an alternative approach to the original HRSI problem and could be leveraged in practice to design more efficient optimization algorithms. While we have not pursued this direction in this work,
implicit HRSI has already served as the foundation for the development of an efficient algorithm for NMF, as proposed in recent work~\cite{marminMajorizationMinimizationSparseNonnegative2023}.

\paragraph{Example 1: Ridge penalization applied to all parameter matrices induces low-rankness.}
It has been established in the matrix low-rank approximation literature that applying squared $\ell_2$ norm penalizations to the Burer-Montero representation of the low-rank approximation matrix induces implicit low-rank regularization~\cite{srebroMaximumMarginMatrixFactorization2004}.

Consider the matrix LRA problem with ridge penalizations, using notations from Equation~\eqref{eq:sNMF1}:
\begin{equation}
    \underset{X_1\in\mathbb{R}^{m_1\times r}, X_2\in\mathbb{R}^{m_2\times r}}{\min} ~ \|M - X_1X_2^T\|_F^2 + \mu \left(\|X_1\|_F^2 + \|X_2\|_F^2\right).
\end{equation}
\Cref{prop:implicit_equiv} shows that this problem is essentially equivalent to
\begin{equation}
    \underset{X_1\in\mathbb{R}^{m_1\times r}, X_2\in\mathbb{R}^{m_2\times r}}{\min} ~ \|M - X_1X_2^T\|_F^2 + 2\mu \sum_{q=1}^{r} \|X_1[:,q]\|_2\|X_2[:,q]\|_2.
\end{equation}
Noticing that the product of the $\ell_2$ norms of two vectors is the $\ell_2$ norm of their tensor product, we define $L_q= X_1[:,q]\otimes X_2[:,q]$ where $a\otimes b=ab^T$ is the outer product. The problem can thus be rewritten as
\begin{equation}
    \underset{L_q\in\mathbb{R}^{m_1\times m_2},\; \text{rank}(L_q)\leq 1}{\min} ~ \|M - \sum_{q=1}^{r}L_q\|_F^2 + 2\mu \sum_{q=1}^{r} \|L_q\|_2.
\end{equation}
This is a group LASSO regularized low-rank approximation problem. Group LASSO regularization is known to promote sparsity in groups of variables~\cite{yuanModelSelectionEstimation2006}, in this case, the rank-one terms $L_q$. The expected behavior of this regularization is semi-automatic rank selection, driven by the value of $\mu$.

These derivations are not specific to matrix problems. The squared $\ell_2$ norm penalties also lead to a group norm implicit regularization on the rank-one terms of tensor low-rank approximations. This is illustrated in the case of rNCPD in~\cref{sec:Showcases}, where ridge regularization induces low nonnegative CP-rank approximations. However, since the group lasso regularization applies atop the rank-one constraints, we were unable to extend existing guarantees on the sparsity level of the solution. Additionally, to make the nuclear norm apparent in the matrix LRA problem and recover the result of Srebro and co-authors~\cite{srebroMaximumMarginMatrixFactorization2004}, we would need to rely on the rotation ambiguity of the $\ell_2$ norm, which falls outside of the framework proposed here. In other words, whether there are scenarios where the (implicit) group lasso penalty does not induce low-rank solutions remains an open problem. 

\paragraph{Example 2: Sparse $\ell_1$ penalizations may yield sparse and group sparse low-rank approximations, but individual sparsity levels cannot be tuned}
In certain applications of regularized LRA, promoting sparsity across all parameter matrices $\{ X_i\}_{i\leq n}$ is desirable. For example, in the context of NMF with $n=2$, a common formulation to achieve this is:
\begin{equation}\label{eq:l1l1nmf}
    \underset{X_1\in\mathbb{R}_+^{m_1\times r}, X_2\in\mathbb{R}_+^{m_2\times r}}{\min} ~ \|M - X_1X_2^T\|_F^2 + \mu_1 \|X_1\|_1 + \mu_2 \|X_2\|_1.
\end{equation}
This model has been studied with a perspective similar to ours~\cite{Papalexakis2013From}, hence the observations we make below are known for sNMF.

One might intuitively expect that $\mu_1$ and $\mu_2$ control the sparsity levels of $X_1$ and $X_2$, respectively. However, this expectation must be tempered. Solutions to~\eqref{eq:l1l1nmf} are equal, up to scaling, to minimizers of the implicit problem
\begin{equation}\label{eq:l1l1nmfimplicit}
    \underset{L_q\in\mathbb{R}_+^{m_1\times m_2},\; \text{rank}(L_q)\leq 1}{\min} ~ \|M - \sum_{q=1}^{r}L_q\|_F^2 + 2\sqrt{\mu_1\mu_2}\sum_{q=1}^{r} \sqrt{\|L_q\|_1}.
\end{equation}
We may observe that only the product of parameters $\mu_1$ and $\mu_2$  is significant at the implicit level. Consequently, one cannot independently tune $\mu_1$ or $\mu_2$ to adjust the sparsity levels of $X_1$ and $X_2$ individually.
While the intuition that stronger regularization leads to sparser solutions holds, it applies only to the product $\mu_1\mu_2$, affecting the sparsity level of rank-one components $L_q$, as confirmed experimentally in~\cref{sec:Showcases}. Further details on hyperparameter design are provided in~\cref{sec:choicemu}.

Another observation is that the implicit regularization term in~\eqref{eq:l1l1nmfimplicit} is a group sparsity regularization with a mixed $\ell^{0.5}_{0.5,1}$ norm. While this regularization is unconventional, it intuitively promotes both sparsity in the parameter matrices and group sparsity in the rank-one components, \textit{i.e.} perform automatic rank detection. However, these two effects are not controlled independently. This interdependence creates ambiguity in the resulting sparsity patterns: it is unclear whether individual sparsity within parameter matrices or group sparsity across rank-one components will dominate.

\paragraph{What happens with $\ell_1+\ell_2$ regularizations} Another commonly encountered pair of regularization functions in the context of regularized LRA is the $\ell_1$ and the squared $\ell_2$ norms, penalizing respectively $X_1$ and $X_2$. In this case~\cref{prop:implicit_equiv} shows that the implicit regularization is a group norm on the rows or columns of rank-one matrices $L_q$, therefore inducing sparsity on the implicit problem as intended in the explicit formulation.

\subsection{Choosing the hyperparameters}\label{sec:choicemu}
For a given set of hyperparameters $\{\mu_i\}_{i\leq n}$, proportional scaling of all hyperparameters does not alter the minimum of the cost function, although it impacts the specific column scaling of the optimal parameter matrices.
Thus it appears that hyperparameters $\{\mu_i\}_{i\leq n}$ may be chosen equal: $\mu_i =: \mu$. This choice simplifies the problem formulation and ensures that, at optimality, the equality $p_ig_i(X_i[:,q]) = p_jg_j(X_j[:,q])$ holds for all $i<j\leq n$ and for all $q\leq r$. Consequently, the solutions are balanced independently of the hyperparameter value $\mu$. When the functions $g_i$ are identical (\textit{e.g.}, all $\ell_1$ norm), this choice results in uniform penalization and scaling of all parameter matrices. Furthermore, as $\mu$ goes to zero, all regularization terms vanish. This is practically desirable because the limit case $\mu=0$ is consistent with the unregularized problem. Conversely, fixing a single hyperparameter $\mu_j$ while allowing the others to approach zero renders the problem ill-posed, as demonstrated in~\cref{sec:illposed}. When the dimensions $m_1$ and $m_2$ are vastly different, it may be of interest to choose $\mu_i = \frac{1}{m_i}\mu$ to scale the parameter matrices and avoid large discrepancies in their entries numerical values.

\subsection{Related literature}\label{sec:sota}
The implicit regularization effects of explicit regularization in scale-invariant models have been previously recognized in the literature. 
An early contribution by Benichoux \textit{et al.}~\cite{benichouxFundamentalPitfallBlind2013} highlighted the pitfalls of employing $\ell_1$ and $\ell_2$ regularizations in blind deconvolution. In particular, they demonstrated that solutions to such regularized model can be degenerate and non-informative. Their analysis is however tailored to convolutive mixtures, and similar degeneracies were not observed in the context of low-rank approximations.

Another closely related work studies the effect of scale-invariance for sNMF~\cite{Papalexakis2013From}. Their study derived the same implicit problem as Equation~\eqref{eq:l1l1nmfimplicit} and identified the pitfall of independently tuning hyperparameters. They also notice empirically the existence of "scaling swamps" which we analyse in~\cref{sec:balance_speed}. 
However, their work is confined to sNMF, lacks a discussion of implicit regularization effects, and does not generalize the findings to broader classes of regularized low-rank approximations.

Further contributions within the sNMF framework include the work of Marmin \textit{et. al.} that established an equivalence between sNMF with normalization constraints and a scale-invariant problem~\cite{marminMajorizationMinimizationSparseNonnegative2023}. This work then proceeds to minimize the scale-invariant loss. Tan and Févotte, by generalizing the work of M{\o}rup~\cite{morupAutomaticRelevanceDetermination2009}, have studied automatic rank selection using $\ell_1$ or $\ell_2$ penalizations, where the scales are controlled via the variance of the parameter matrices~\cite{tanAutomaticRelevanceDetermination2013}. Srebro \textit{et. al.} demonstrated that ridge penalization in two-factor low-rank matrix factorization is equivalent to nuclear-norm minimization, a foundational result that initiated broader investigations into implicit regularization effects~\cite{srebroMaximumMarginMatrixFactorization2004}. 
Despite their significance, these works are confined to specific models and do not capture the systemic influence of scale-invariance on regularized LRA or address the potentially misleading intuitions associated with explicit regularization.

Beyond sNMF, several works have examined implicit regularization in regularized low-rank factorization problems. Uschmajew observed that ridge regularization balances solutions in canonical polyadic decompsition~\cite{Uschmajew2012Local}, while Roald \textit{et. al.} showed that regularizations hyperparameters can be collectively tuned in regularized low-rank approximations with homogeneous regularizations~\cite{roaldAOADMMApproachConstraining2022}. Parallel insights have emerged in the context of deep learning, where scale-invariance also plays a significant role. 
Ridge penalization on the linear layers of a feedforward ReLU network is equivalent to a group sparse LASSO problem~\cite{NeyshaburTS14}, as recalled in a recent note by Tibshirani~\cite{tibshirani2021equivalences}. Practical implications of explicit and implicit regularizations in deep neural networks have also been explored~\cite{ergenImplicitConvexRegularizers2020, stock2018equinormalization, gribonvalScalingAllYou}, see this recent overview for further references~\cite{parhiDeepLearningMeets2023}.

Finally, it is worth noting that~\cref{prop:opt_scal} is reminiscent of the quadratic variational formulation of norms~\cite{bachOptimizationSparsityInducingPenalties2011} albeit adapted to the multifactor setting. 

\section{Algorithms}\label{sec_algorithms}
In this section, we focus on developing practical optimization strategies tailored to regularized low-rank approximations with elementwise $\beta$-divergence loss functions. While the HRSI model is more general, a broad derivation of optimization strategies for it would be impractical.


More precisely, we consider nonnegative decompositions of input tensors or matrices, using the elementwise $\beta$-divergence~\cite{fevotte2009nonnegative} as the loss function $f$. The regularization terms are chosen from a family of functions that can be upper-bounded by separable quadratics, which includes $\ell_p^p$ norms.
Let us recall that the $\beta$-divergence between two matrices $A$ and $B$ is defined as
\begin{equation*}
    D_\beta(A,B) = \sum_{i,j} d_\beta(A[i,j],B[i,j]), 
\end{equation*}
where, for scalars $x$ and $y$, 
\begin{equation} \label{eq:betadivergence}  
d_{\beta}(x,y) = \left\{ 
\begin{array}{cc}
  \frac{x}{y} - \log \frac{x}{y} - 1  & \text{ for }  \beta = 0, \\
 x \log \frac{x}{y} - x + y & \text{ for } \beta = 1, \\ 
\frac{1}{\beta (\beta-1)} \left(x^\beta + (\beta-1) y^\beta - \beta x y^{\beta-1}\right) &  \text{ for } \beta \neq 0,1,  
 \end{array}
\right.
\end{equation} 
see \cite{basu1998robust, eguchi2001robustifying}. 
When $\beta = 2$, the $\beta$-divergence corresponds to the least-squares measurement. For $\beta=1$ and $\beta=0$, it corresponds to the Kullback-Leibler (KL) divergence and the Itakura-Saito (IS) divergence, respectively. \footnote{Note that, with convention  $a\log 0=-\infty$ for $a>0$ and $0\log 0=0$, the KL-divergence is well-defined.} The choice of $\beta$ is usually driven by two main motivations:
\begin{enumerate}
    \item \textbf{Degree of homogeneity:} The $\beta$-divergence $d_{\beta}(x,y)$ is homogeneous of degree $\beta$, meaning that $d_{\beta}(\lambda x, \lambda y) = \lambda^{\beta}(x,y) $. This implies that factorizations obtained with $\beta > 0$ (such as the Euclidean distance or the KL divergence) will give more weight to the largest data values and less precision is expected in estimating of the low-power components. The IS divergence ($\beta =0$) is scale-invariant, \textit{i.e.}, $d_{IS}(\lambda x|\lambda y)=d_{IS}(x|y)$. This is the only divergence in the $\beta$-divergences family with this property, meaning that low-power entries are treated as equally important as large entries in the divergence computation. 
    This property is useful in applications such as audio source separation, where low-power frequency bands contribute perceptually as much as high-power frequency bands.
    For further details, see~\cite{Lefevre_phd} and~\cite{gillis2020bk}. 
    \item \textbf{Noise statistics:}  The choice of \( \beta \) is also influenced by the assumed noise statistics in the data. Specifically, there is an equivalence between the choice of the parameter \( \beta \) and the maximum likelihood estimators of factors in the decomposition, given specific statistical distributions for the noise. In particular, 
    the values \( \beta = 2 \), \( \beta = 1 \), and \( \beta = 0 \) correspond to the assumptions of Gaussian, Poisson, and exponential noise distributions, respectively. For further details, the reader is referred to \cite{gillis2020bk} and references therein.
\end{enumerate}

Our approach is based on three main components: Cyclic Block-coordinate Descent (BCD), Majorization-Mimization (MM), and Optimal balancing. The latter consists in performing a balancing step for all blocks to minimize the regularization terms, as described in~\cref{subsec:balancingPrinciple}.
In the following, we discuss these three components in more detail. We then introduce our general algorithm, referred to as the \textit{Meta-Algorithm}, and conclude by presenting its convergence guarantees.

\subsection{BCD: which splitting ?}\label{sec:bcdwhich}
We begin by splitting the variables into disjoint blocks\footnote{their union is the initial and full set of variables} and sequentially minimize the objective function over each block. The updates for each block will be similar, and the objective function in \eqref{eq:hrsi} is separable columnwise. Therefore to simplify the notations, we consider the update of the block of variables $X_i[:,q]$ (one column of $X_i$, dropping the $i$ index), denoted $x\in\mathbb{R}_+^{m}$, while the other blocks of variables are fixed. Recall that $x[k]$ represents the entry in vector $x$ at index $k$. We are interested in solving the following subproblem:
\begin{equation}\label{eq:subProblem_in_x}
    \min_{x \in \mathbb{R}_+^{m}} f(x) + \mu g(x).
\end{equation}

\subsection{Majorization-Minimization (MM)}\label{sec:MM_framework}

To address Problem \eqref{eq:subProblem_in_x}, we employ the widely-used MM framework. In this approach, the objective function $\phi$ is iteratively globally majorized by a locally tight approximation $\Bar{\phi}$, which can be efficiently minimized. In the context of regularized LRA, such majorizers are typically designed to be separable, enabling cost-efficient update rules. The originality in our approach lies in the presence of the regularizer $g$. 

We now detail how to derive a separable majorizer $\Bar{\phi}(x|\tilde{x})$ tight at a given point $\tilde{x}$ for the objective function $\phi(x):=f(x) + \mu g(x)$. 
The first term $f(x)$ is majorized following the methodology introduced in \cite{fevotte2011algorithms}, where the beta-divergence is decomposed as the sum of a convex and a concave function. The convex part is majorized using Jensen's inequality, while the concave part is upper-bounded by its first-order Taylor approximation. The resulting separable majorizer for $f(x)$ built at the current iterate $\widetilde{x}$ is denoted $\Bar{f}(x|\widetilde{x})$. For the second term, we make the following assumptions:

\begin{assumption}\cite{doi:10.1137/20M1377278}\label{ass1}
For the smooth and positive-definite regularization function $g: x \in \mathbb{R}_+^{m} \rightarrow \mathbb{R}_+$, and any $\Tilde{x} \in \mathbb{R}_+^{m}$, there exists nonnegative constants $C_k$ with $1 \leq k \leq m$ such that the inequality
\begin{equation}\label{eq:MajorizerRegu}
    g(x) \leq \Bar{g}(x|\Tilde{x}):=g(\Tilde{x}) + \langle \nabla g(\Tilde{x}), x - \Tilde{x}\rangle + \sum_{k=1}^m \frac{C_k}{2} (x[k] - \Tilde{x}[k])^2
\end{equation}
holds for all $x \in \mathbb{R}_+^{m}$. 
\end{assumption}
In other words, the regularization function can be upper-bound by a \textit{tight separable} majorizer given in Equation~\eqref{eq:MajorizerRegu}. This property holds for all $\ell^p_p$ norms employed in this work as regularizers. Consequently, a separable majorizer $\Bar{\phi}(x|\widetilde{x})$ for $\phi(x)$ can be constructed at any $\Tilde{x} \in \mathcal{X}$ by combining $\Bar{f}(x|\widetilde{x})$ and $\Bar{g}(x|\Tilde{x})$. This majorizer is then minimized to compute the update $\hat{x}$.
The existence and uniqueness of the minimizer $\hat{x}$ can be established under mild conditions~\cite{doi:10.1137/20M1377278}.


\paragraph{Computation of the update rules for beta divergence and $\ell_p^p$ regularizations}
The MM updates for $\beta$ in $[0,2]$ and regularization functions that are $\ell_p^p$ norms, both of which are important particular cases, are derived in~\cref{app:update_rules} and~\cref{app:beta_zero_l2reg}.
One of our contributions is to derive update rules for $\beta=1$ and $\beta=0$ with squared $\ell_2$ regularizations leading to convergence guarantees for the iterates. These cases have not been addressed in the existing literature. As an illustrative example, for the specific pair $\beta=1$ and $p=2$, the update for each entry of the block of variable under consideration, denoted $x$, has the general form
\begin{equation}\label{eq:updatexkentry}
    \begin{aligned}
       x[k]\leftarrow\frac{\sqrt{c^2+8\mu x[k]t}-c}{4\mu},
    \end{aligned}
\end{equation}
where $\mu > 0$ and $c$ and $t$ are nonnegative real-valued functions of the input data and the other blocks of variables held fixed. 

More generally, a closed form expression for the update $\hat{x}$ can be derived if $\beta$ belongs to $\{0,1,3/2,2\}$ according to \cite{doi:10.1137/20M1377278}.  Outside this set of values for $\beta$, an iterative scheme such as Newton-Raphson is required to compute the minimizer. 
Note that while MM updates for $\beta=2$ can be derived within the proposed framework, in this case, the literature suggests the superiority of an alternating least squares approach called HALS~\cite{Gillis2012Accelerated,marmoret2020uncovering}.

\subsection{Balancing improves convergence speed}\label{sec:balance_speed}
One of the contributions of this work is the introduction of a numerical balancing of the columns of low-rank components, building upon the results presented in~\cref{sec:Sparsity_scale_inva_models}. While the balancing procedure has been established, its practical utility remains to be clarified. To this end, we present an example where the existence of ``scaling swamps''~\cite{Papalexakis2013From} for the Alternating Least Squares (ALS) algorithm can be proven.

For a given real value $y$ and a regularization parameter $0<\lambda\ll y$, consider the scalar optimization problem
\begin{equation}\label{eq:toy_als}
    \underset{x_1,x_2}{\min} f(x_1,x_2) \text{  where  } f(x_1,x_2)= (y-x_1x_2)^2 + \lambda(x_1^2 + x_2^2).
\end{equation}
Using the results from~\cref{sec:Sparsity_scale_inva_models}, we can immediately infer that the minimizers $x_1^\ast$ and $x_2^\ast$ must be balanced. This yields optimal solutions $x_1^\ast=x_2^\ast = \sqrt{y-\lambda}$ (for $\lambda\leq y)$. 

Setting aside the fact that we know the optimal solution, a classical iterative BCD method from the regularized LRA literature for solving this problem would be the ALS algorithm. 
Denoting the iterates at iteration $k$ as $x_1^{(k)}$ and $x_2^{(k)}$, the update rules are straightforwardly derived as 
\begin{equation}
        x_1^{(k+1)} = \frac{x_2^{(k)}y}{{x^2_2}^{(k)}+\lambda} \; \; \text{and} \; \; 
        x_2^{(k+1)} = \frac{x_1^{(k+1)}y}{{x^2_1}^{(k+1)}+\lambda} .
\end{equation}
We show in~\cref{app:proof_als_sublin} that when $\lambda$ is small relative to $y$, and for sufficiently large $k$,
\begin{equation}
  \frac{x_1^{(k+1)} - \sqrt{y-\lambda}}{x_1^{(k)} - \sqrt{y-\lambda}} \approx 1-4\frac{\lambda}{y}
\end{equation}
with a similar expression holding for $x_2$.
Thus, ALS converges almost sub-linearly when $x_{1}$ and $x_2$ are near the optimum, particularly when $\frac{\lambda}{y}\ll 1$. The smaller the ratio $\frac{\lambda}{y}$, the slower the convergence. A numerical simulation of this toy problem solved with the ALS algorithm experimentally confirmed these observations, see~\cref{fig:als_scale}.

Similar slow convergence behaviors are observed in more realistic regularized LRA problems, as shown in~\cref{sec:Showcases}. To address this, the Meta-Algorithm below includes a step to optimally balance the iterates after each outer iteration.

\begin{figure}[!ht]
    \centering
    \includegraphics[width=15cm]{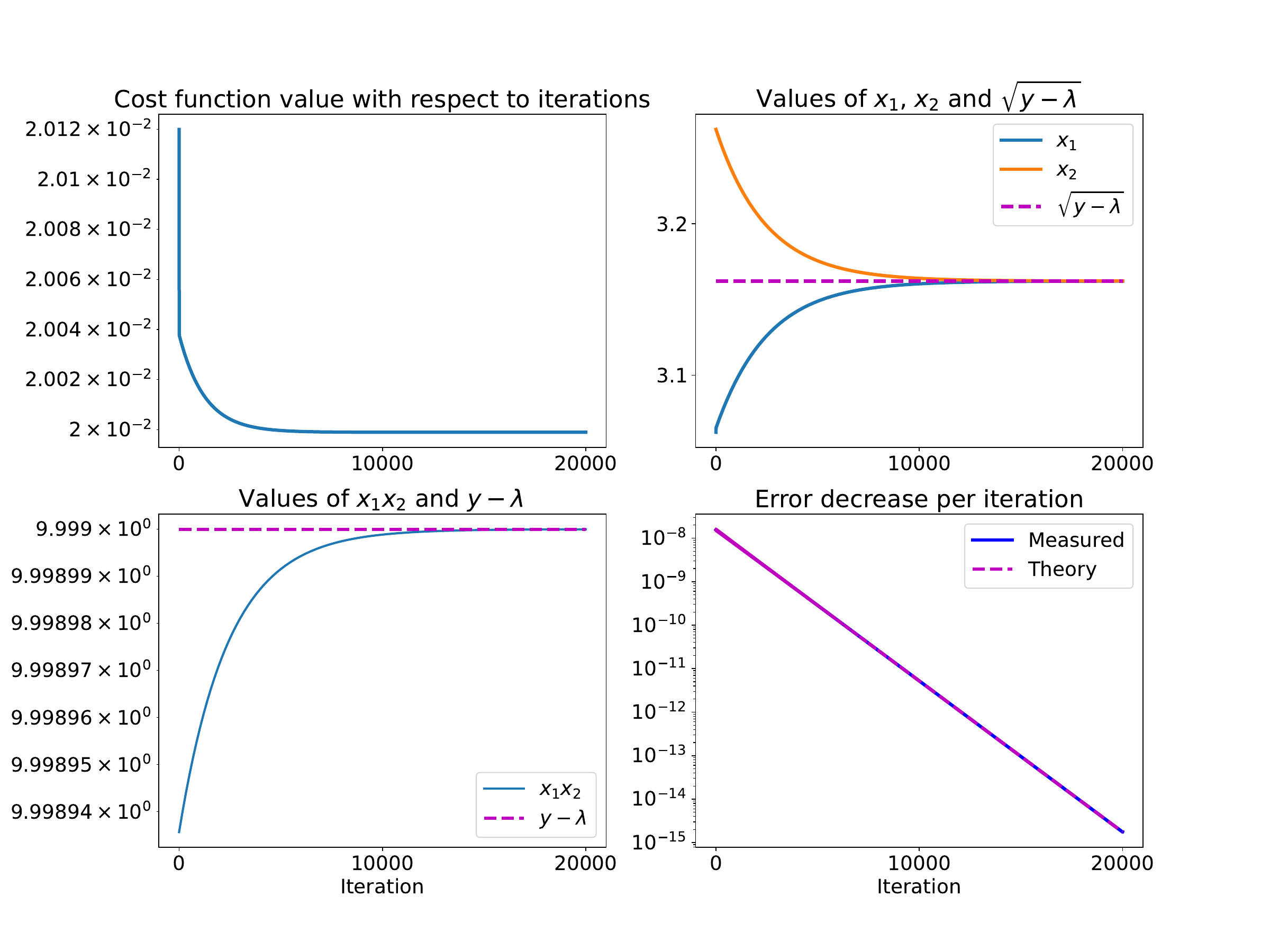}
    \caption{Study of the convergence of ALS for the toy problem~\eqref{eq:toy_als}. We choose $y=10$ and $\lambda=10^{-3}$. The number of iterations of ALS is set to 20000. 
    Top left: cost function $f$ with respect to iteration index. Top right: values of $x_1$ (orange), $x_2$ (blue) and $\sqrt{y-\lambda}$ (magenta). Bottom left: value of $x_1x_2$ (blue) and $y-\lambda$ (magenta). Bottom right: empirical value of the cost function variation (blue) and theoretical value $16\frac{\lambda^2}{y}e_k^2$ (magenta). 
    Several observations can be made: The value of $x_1x_2$ rapidly converges towards $y-\lambda$ (bottom left graph). However the cost converged after at least 5000 iterations, and the individual values of $x_1$ and $x_2$ are slow to converge to the optimum $\sqrt{y-\lambda}$ with respect to how fast $x_1x_2$ converged to $y-\lambda$. We can observe a linear convergence rate with a multiplicative constant very close to $1$ in the bottom right graph, and a close match between the theoretical approximate cost decrease and the practical observation of that decrease.
    }
    \label{fig:als_scale}
\end{figure}

\subsection{Meta-Algorithm}\label{sec:meta_algo}
The proposed method for solving problem~\eqref{eq:hrsi} is summarized in~\cref{alg:meta-Algo}. The algorithm is iterative, alternating, and consists of two main steps within each block update. First the block $\{ X_i\}_{i \leq n}$ is updated using MM. Second, optimal balancing of the block of variables is performed. This balancing step is computationally inexpensive and necessarily decreases the objective function, see~\cref{prop:nonIncreaseRB}. Thus, it does not compromise the convergence guarantee offered by the MM-based updates, ensuring the convergence of the sequence of objective function values. However, the convergence guarantees for the sequence of iterates will depend on the choice of the value $\beta$ in Problem~\eqref{eq:GenOptiProblem}. These points are discussed in more detail in~\cref{sec:specificity_beta} below.

\algsetup{indent=2em}
\begin{algorithm}[ht!]
\caption{Meta-Algorithm \label{alg:meta-Algo}}
\begin{algorithmic}[1] 
\REQUIRE An input tensor $\mathcal{T}\in \mathbb{R}^{m_1 \times \cdots \times m_n}$, 
an initialization $X_i^{(0)}$ and regularization hyperparameters $\mu_i > 0$,
the factorization rank $r$, a maximum number of iterations, maxiter.
\STATE $ \eta \in \text{Argmin}_{\eta \geq 0} f(\{\eta X_i^{(0)}\}_{i\leq n}) + \sum_{i=1}^{n} \mu_{i} \sum_{q=1}^{r} \eta^{p_i} g_i(X_i^{(0)}[:,q])$ \emph{  \% Rescaling}
\STATE $X_i^{(1)} \leftarrow \eta X^{(0)}$ for all $i=1,...,n$.
\STATE \emph{\% Main optimization loop}:
\FOR{$k$ = 1 : maxiter}
    \FOR{$i$ = 1 : $n$ and $q$ = 1 : $r$}
            \STATE  $X_i[:,q]^{(k+1)} \leftarrow \argmin_{x \in \mathcal{X}_i} \Bar{\phi}(x|X_i[:,q]^{(k)})$ \emph{  \% Update of parameter matrices}
    \ENDFOR
    \STATE  $\{X_i^{(k+1)}[:,q]\}_{i\leq n}$ balanced using~\cref{alg:balancing-Algo} for all $q\leq r$ \emph{  \% Optimal balancing}
\ENDFOR
\RETURN $\{X_i[:,q]\}_{i\leq n}$
\end{algorithmic}  
\end{algorithm} 
\paragraph{Initialization}

In~\cref{alg:meta-Algo}, we propose an initialization stage where the blocks of variables are scaled (lines 1-2). This scaling is essential to prevent the "zero-locking" phenomenon. To illustrate this issue, let us consider a matrix case $n=2$.
If the regularization functions $g_1$ and $g_2$ are chosen as $\ell_1$ norms, the updates for the columns of parameter matrix $X_1$ reduce to solving a series of LASSO problems, with the mixing matrix given by $X_2^T$. For such problems, it is well known that if $\|M X_2 \|_{\infty} \leq \mu_1$, the solution is necessarily $X_1=0$. In this scenario, $(0,0)$ being a saddle point it is very likely that~\cref{alg:meta-Algo} will return $X_1=X_2=0$. The scaling step mitigates this issue by ensuring that the initial values of $X_1$ and $X_2$ are not excessively small.

\subsection{Convergence guarantees}\label{sec:conv_gua_Algo1}
We establish two convergence results for the proposed Meta-Algorithm: (1) the convergence of the cost function, and (2) under further assumptions, the convergence of the iterates to a stationary point. While these results are not unexpected, they are not straightforward corollaries of existing literature. Detailed proofs, technical details, and precise formulations of the assumptions are provided in~\cref{AppendC}. 

Before we give the main convergence results, it is convenient to write down a general formulation for our optimization problems. Let $z=(\{X_i[:,q]\}_{i \leq n, q \leq r})$ denote the concatenation of the variables $X_i[:,q]$ and let $\phi$ represent the objective function from Problem~\eqref{eq:hrsi}. We consider the following class of multiblock  optimization problems:
\begin{equation}
    \label{eq:GenOptiProblem_1}
    \min_{z_i\in\mathcal{Z}_i} \phi(z_1,\ldots,z_s),
\end{equation}
where $z$ is decomposed into $s$ blocks $ z=(z_1,\ldots,z_s)$, $\mathcal{Z}_i\subseteq\mathbb{R}^{m_i}$  is a closed convex set, and $\mathcal{Z}=\mathcal{Z}_1\times\ldots\mathcal{Z}_s \subseteq \text{dom} (\phi)$. The function $\phi:\mathbb{R}^m\to\mathbb{R}\cup\{+\infty\}$, with $m=\sum_{i=1}^s m_i$, is continuously differentiable over its domain, lower bounded and has bounded level sets.

\begin{theorem}\label{theo:monotoneNI_main} Let $z^{(k)}=(\{X_i^{(k)}[:,q]\}_{i \leq n, q \leq r}) \geq 0$, and let $\Bar{\phi}_i$ be block-wise tight majorizers\footnote{See~\cref{def:majorizer}} for $\phi: \mathcal{Z}\rightarrow \mathbb{R}_+$ from Problem~\eqref{eq:GenOptiProblem_1},  with $\mathcal{Z}:=\mathcal{Z}_1\times\ldots\mathcal{Z}_s \subseteq \text{dom} (\phi)$ and $1 \leq i \leq nr$. Then $\phi$ is monotone non-increasing under the updates of~\cref{alg:meta-Algo}, and the sequence of cost values converges. 
\end{theorem}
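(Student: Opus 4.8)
The plan is to prove \Cref{theo:monotoneNI_main} as a standard telescoping/monotonicity argument for a block-coordinate Majorization-Minimization scheme augmented with a balancing step, the crucial point being that the balancing step never increases $\phi$. First I would recall the defining properties of a block-wise tight majorizer $\Bar\phi_i(\cdot\,|\,\tilde z)$ for the $i$-th block: (i) $\Bar\phi_i(z_i\,|\,\tilde z)\geq \phi(z_1,\dots,z_i,\dots,z_s)$ for all feasible $z_i\in\mathcal Z_i$, and (ii) $\Bar\phi_i(\tilde z_i\,|\,\tilde z)=\phi(\tilde z)$ (tightness at the current iterate). Then, within one outer iteration $k$, I would track the objective value through the $nr$ inner block updates and the final balancing step, showing each sub-step produces a value no larger than the previous one.

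Second, for a single block update at inner step indexed by $(i,q)$, let $\hat x = \argmin_{x\in\mathcal X_i}\Bar\phi_{(i,q)}(x\,|\,X_i[:,q]^{(k)})$ (which exists and is unique under the mild conditions cited from \cite{doi:10.1137/20M1377278}; existence suffices here). Using the majorization inequality followed by the minimality of $\hat x$ followed by tightness,
\begin{equation}
\phi(\dots,\hat x,\dots)\;\leq\;\Bar\phi_{(i,q)}(\hat x\,|\,X_i[:,q]^{(k)})\;\leq\;\Bar\phi_{(i,q)}(X_i[:,q]^{(k)}\,|\,X_i[:,q]^{(k)})\;=\;\phi(\dots,X_i[:,q]^{(k)},\dots),
\end{equation}
so the objective does not increase. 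Chaining this over all $nr$ blocks gives $\phi(\text{after all MM updates})\leq \phi(z^{(k)})$. Next I would invoke \Cref{prop:nonIncreaseRB} (the balancing step necessarily decreases, i.e.\ does not increase, the objective), so that after the balancing step $\phi(z^{(k+1)})\leq \phi(\text{after all MM updates})\leq \phi(z^{(k)})$. This establishes that $(\phi(z^{(k)}))_k$ is monotone non-increasing. Since $\phi$ is lower bounded (an explicit hypothesis in the general formulation \eqref{eq:GenOptiProblem_1}, and $\phi\geq 0$ by construction from a $\beta$-divergence plus nonnegative regularizers), a non-increasing sequence bounded below converges; this yields the convergence of the sequence of cost values.

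Third, I would handle the two small technical caveats needed for the argument to be airtight: that every iterate stays feasible (nonnegative) — which holds since the MM updates minimize over $\mathcal X_i\subseteq\mathbb R_+^{m_i}$ and the balancing rescalings use strictly positive multipliers (or set columns to zero), preserving $z^{(k)}\geq 0$ and $z^{(k)}\in\mathcal Z$; and that all quantities stay finite, so no $+\infty$ appears — which follows because $z^{(1)}$ after the initialization rescaling lies in $\mathrm{dom}(\phi)$ and the sub-steps only decrease $\phi$, keeping all iterates in $\mathrm{dom}(\phi)$. I do not expect a serious obstacle here: the only nontrivial ingredient is \Cref{prop:nonIncreaseRB}, which is proved separately, and the boundedness/feasibility bookkeeping. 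The main thing to be careful about is the order of operations within an outer iteration and the explicit use of blockwise tightness at the \emph{current} inner iterate (not at the start of the outer loop), since the reference point for each majorizer is updated after each inner block step; making this indexing precise is the one place where sloppiness could hide a gap.
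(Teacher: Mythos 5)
Your proposal is correct and follows essentially the same route as the paper's proof: a chain of inequalities per block using tightness, minimality of the MM update, and the majorization property, followed by an appeal to \cref{prop:nonIncreaseRB} for the balancing step, and convergence from monotonicity plus lower-boundedness. The extra bookkeeping you note on feasibility and finiteness of the iterates is sensible but does not change the argument, which the paper carries out identically via its explicit inequality chain.
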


\begin{theorem}\label{theo:conv_iterates_main}
    Let block-wise tight majorizers $\Bar{\phi}_i(y_i,z)$ be quasi-convex in $y_i$ for $i=1,...,nr$. Additionally, assume that the directional derivatives of the majorizers and the objective coincide locally for each block and that the majorizer is continuous. Further, assume that the MM updates and the optimal scaling have unique solutions. 
    Under these conditions, every limit point $z^{\infty}$ of the iterates $\{z^{(k)}\}_{k \in \mathbb{N}}$ generated by~\cref{alg:meta-Algo} is a coordinate-wise minimum of Problem~\eqref{eq:GenOptiProblem_1}. Furthermore, if $\phi$ is regular at $z^{\infty}$, then $z^{\infty}$ is a stationary point of Problem~\eqref{eq:GenOptiProblem_1}.
\end{theorem}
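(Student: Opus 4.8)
The plan is to follow the classical block-coordinate majorization-minimization convergence argument (in the spirit of Razaviyayn, Hong and Luo, and of Tseng's block-coordinate minimization analysis), adapted to the presence of the extra balancing step. The skeleton has three phases: (i) extract a convergent subsequence and control the iterates along it; (ii) show the limit point is a coordinate-wise minimum using the tightness and directional-derivative matching of the majorizers together with the uniqueness of the block updates; (iii) upgrade coordinate-wise minimality to stationarity under the regularity hypothesis.

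First I would fix a limit point $z^\infty$ of $\{z^{(k)}\}$ and a subsequence $\{z^{(k_j)}\}$ converging to it. Since $\phi$ has bounded level sets and is non-increasing along the iterates by \Cref{theo:monotoneNI_main}, the whole sequence $\{z^{(k)}\}$ is bounded and $\phi(z^{(k)})$ converges to some value $\phi^\infty = \phi(z^\infty)$ by continuity. The key preliminary fact is that successive iterates become asymptotically negligible: along the subsequence, each intermediate iterate produced inside the inner loop (one block update at a time, then the balancing substep) also converges to $z^\infty$. This is the standard ``no jump'' lemma — it follows from the monotone decrease of $\phi$, the sandwiching $\phi(\text{next}) \le \bar\phi_i(\text{next},\cdot) \le \bar\phi_i(z^{(k)},\cdot) = \phi(z^{(k)})$ at each substep, boundedness, and continuity of the majorizers; the uniqueness of the MM minimizer and of the optimal scaling is what lets one pass from ``the cost gap vanishes'' to ``the argument gap vanishes'' (a maximum-theorem / continuity-of-argmin argument on compact sets). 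The balancing step is handled here by \Cref{prop:nonIncreaseRB} (it only decreases $\phi$) together with the assumed uniqueness of the optimal scaling, so it does not obstruct the telescoping.

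Next, to show $z^\infty$ is a coordinate-wise minimum: fix a block index $i$ and an arbitrary $y_i \in \mathcal{Z}_i$. Along the subsequence, the block-$i$ update gives $\bar\phi_i(X_i^{(k_j)\,+},z^{(k_j)}) \le \bar\phi_i(y_i, z^{(k_j)})$ where $X_i^{(k_j)\,+}$ is the updated block; passing to the limit using continuity of $\bar\phi_i$ and the ``no jump'' fact (so $X_i^{(k_j)\,+}\to z_i^\infty$), and using tightness $\bar\phi_i(z_i^\infty,z^\infty)=\phi(z^\infty)$, yields $\phi(z^\infty) \le \bar\phi_i(y_i,z^\infty)$. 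Since $\bar\phi_i(\cdot,z^\infty)$ majorizes $\phi$ restricted to block $i$, we get $\phi(z^\infty)\le\phi(\ldots,y_i,\ldots)$ for all $y_i$, i.e. coordinate-wise minimality. (Quasi-convexity of $\bar\phi_i$ is what guarantees the MM subproblem is well-behaved and its unique minimizer is genuinely a global minimizer over $\mathcal{Z}_i$, which is used silently in the update step.) Finally, for the stationarity upgrade: at a coordinate-wise minimum, for each $i$ the directional derivative $\phi'(z^\infty; (0,\ldots,d_i,\ldots,0)) \ge 0$ for every feasible direction $d_i$ (this uses the assumed local coincidence of the directional derivatives of $\phi$ and of $\bar\phi_i$, plus convexity of $\mathcal{Z}_i$); then regularity of $\phi$ at $z^\infty$ (in the sense of Tseng — the one-sided directional derivative is nonnegative in every feasible direction as soon as it is nonnegative in each block-coordinate feasible direction) promotes this to $\phi'(z^\infty;d)\ge 0$ for all feasible $d$, which is precisely stationarity of Problem~\eqref{eq:GenOptiProblem_1}.

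The main obstacle I expect is the ``no jump'' step in the presence of the balancing substep: one must verify that inserting \Cref{alg:balancing-Algo} between MM sweeps does not break the asymptotic-regularity argument — concretely, that the composition (full MM sweep) $\circ$ (balancing) still has the property that vanishing cost decrease forces vanishing iterate movement. This is where the hypotheses ``the MM updates and the optimal scaling have unique solutions'' do the real work: uniqueness makes each substep's argmin a single-valued continuous map on the relevant compact set (Berge maximum theorem), so a subsequential limit of $z^{(k_j)}$ is a fixed point of every substep map, and the fixed-point property is exactly coordinate-wise minimality. The remaining routine-but-delicate points — checking that $\beta$-divergence majorizers are continuous and directionally-derivative-matching on the (possibly extended-valued, because of $\log$) domain, and that the feasible sets $\mathcal{Z}_i=\mathbb{R}_+^{m_i}$ satisfy the regularity condition — I would relegate to the appendix.
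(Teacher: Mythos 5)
Your plan is correct in outline and follows the same overall strategy as the paper: an adaptation of the BSUM convergence argument of Razaviyayn, Hong and Luo, in which one (i) shows that successive iterates along a convergent subsequence all tend to the same limit $z^{\infty}$, (ii) passes to the limit in the block minimization property to get $\Bar{\phi}_i(z_i^{\infty},z^{\infty})\leq\Bar{\phi}_i(z_i,z^{\infty})$ for all $z_i\in\mathcal{Z}_i$, and (iii) converts this into coordinate-wise minimality via the directional-derivative matching assumption and into stationarity via regularity, with the balancing step absorbed through \cref{prop:nonIncreaseRB} and the uniqueness of the optimal scaling. The one place where you genuinely diverge is the ``no jump'' step. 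The paper proves it by contradiction: assuming $\|z_1^{(k_j+1)}-z_1^{(k_j),\ast}\|\geq\Bar{\gamma}>0$, it extracts a limiting direction $\Bar{s}$ and uses \emph{quasi-convexity} of $\Bar{\phi}_1(\cdot,z^{(k_j),\ast})$ to sandwich the majorizer value at every point $z_1^{\infty}+\epsilon\Bar{\gamma}\Bar{s}$ of a whole segment between $\phi(z^{\infty})$ and itself, which contradicts uniqueness of the minimizer of $\Bar{\phi}_1(\cdot,z^{\infty})$. You instead invoke a Berge maximum-theorem / continuity-of-argmin argument, treating quasi-convexity as a side condition ensuring the subproblem is ``well-behaved.'' Your route is viable (uniqueness plus continuity of the majorizer plus a compactness restriction coming from the bounded level sets does make the argmin map single-valued and continuous, and the fixed-point argument then closes), and it arguably makes the hypothesis of quasi-convexity dispensable for that step; but be aware that in the paper quasi-convexity is precisely the load-bearing ingredient of the no-jump argument, not a background regularity condition, and that your route requires you to actually carry out the restriction to a compact set on which Berge applies, since $\mathcal{Z}_i=\mathbb{R}_+^{m_i}$ is unbounded. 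Neither point is a gap in substance, but the quasi-convexity misattribution and the unproved compactness restriction are the two items you would need to tighten to turn the plan into a complete proof.
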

The proofs of these theorems are adapted from the convergence results to stationary points obtained in \cite[Theorem~2]{RHL_BSUM} for the Block Successive Upperbound Minimization (BSUM) framework. They leverage the fact that the balancing procedure always decreases the cost $\phi$. 

All the hypotheses of~\cref{theo:monotoneNI_main} and~\cref{theo:conv_iterates_main}, except for directional differentiability ( see~\cref{sec:specificity_beta}), are easily satisfied by many regularized LRA models, in particular those discussed in~\cref{sec:Showcases}.

\subsection{Handling near-zero entries with the $\beta$-divergence} \label{sec:specificity_beta}
To ensure the convergence of the iterates generated by~\cref{alg:meta-Algo} to critical points of Problem~\eqref{eq:GenOptiProblem_1}, one of the required assumptions is the directional differentiability of the function $\phi$, see~\cref{AppendC}. However, this property does not hold at zero as soon as $\beta < 2$, see for instance the discussions in~\cite{HienNicolasKLNMF}.
To guarantee convergence for the iterates, 
a common solution is to impose constraints $X_i \geq \epsilon$ for all $i \in [1,n]$, where $\epsilon > 0$~\cite{HienNicolasKLNMF}. 
Under these constraints, the updates of parameter matrices in~\cref{alg:meta-Algo} (Step 6) become
\begin{equation}\label{eq:modifiedUpdates}
    X_i[:,q]^{(k+1)} \leftarrow \text{max} \Big(\epsilon , \argmin_{x} \Bar{\phi}(x|X_i[:,q]^{(k)})\Big), 
\end{equation}
see~\cite{Takahashi2014} where this approach has been successfully employed to derive multiplicative updates algorithms for NMF with global convergence guarantees.

Given the modified updates, ensuring the convergence guarantees stated in~\cref{theo:conv_iterates_main} is not straightforward. The reason lies in the optimal balancing step of~\cref{alg:meta-Algo}. Indeed, if one entry of $X_i[:,q]^{(k+1)}$ reached $\epsilon$, the balancing procedure may decrease this entry below $\epsilon$, making it non-feasible. 
We discuss two strategies to address this issue:
\begin{itemize}
    \item \textbf{Suboptimal scaling with projection} If a column of a parameter matrix becomes elementwise smaller than $\epsilon$, all the corresponding columns in other parameter matrices are set to $\epsilon$ as well. When at least one entry in each column is strictly larger than $\epsilon$, we first set all values smaller than $\epsilon$ to zero, optimally scale the resulting projected columns, and then project back the outcome such that all entries are larger than $\epsilon$. This approach leads to a suboptimal scaling and may therefore break convergence, but we anticipate that these theoretical considerations will not affect the algorithm behavior noticeably. This is the recommended approach in practice, used in~\cref{sec:Showcases}.
    \item \textbf{Stop balancing to guarantee convergence} A second approach is to stop balancing as soon as any entry in the parameter matrices is smaller than $\epsilon$. 
    This strategy ensures the convergence of the iterates to critical points of the $\epsilon$-perturbed Problem~\eqref{eq:GenOptiProblem_1} through BSUM, with the balancing steps primarily serving to accelerate the convergence of the algorithms during the initial iterations. Balancing may even only be applied upon initialization. This approach appears to be the most suitable from a theoretical perspective, but in practice, the balancing may not be performed for many iterations which is undesirable. 
    Indeed, anticipating on~\cref{sec:Showcases}, performing optimal balancing along all iterations is practically profitable.
    
\end{itemize}

Finally, for the case $\beta \geq 2$, the $\beta$-divergence exhibits continuous differentiability over the feasible set, thereby ensuring the convergence of~\cref{alg:meta-Algo} as outlined in~\cref{theo:conv_iterates_main}. However, thresholding the entries in parameter matrices $X_i$ to $\epsilon$ when $\beta\geq 2$ may still be of practical interest. This is because, with multiplicative updates, any entry in a parameter matrix that reaches zero cannot be further modified by the algorithm.
This can prevent convergence in practice, especially with poor initialization or insufficient machine precision. 
In the proposed implementations of the meta-algorithm, whenever nonnegativity is enforced, a small $\epsilon$ is used to lower-bound the entries of all parameter matrices.

\section{Showcases}\label{sec:Showcases}
In this section, we study three specific instances of HRSI: sNMF, rNCPD, and sNTD. We begin by relating each model to HRSI, outlining the balancing update and the implicit equivalent formulation, and detailing the implementation of the proposed alternating MM algorithm. Next, we evaluate the performance of the proposed method with and without balancing.

Additionally, we provide an implementation of the proposed algorithm for each showcase problem, along with reproducible experiments, in a dedicated repository\footnote{\url{https://github.com/vleplat/NTD-Algorithms}}. For multilinear algebra, we rely on Tensorly~\cite{kossaifi2019tensorly}, and we use the package shootout\footnote{\url{https://github.com/cohenjer/shootout}} for running the experiments.

\subsection{Explicit formulations and relation to HRSI}

We consider sNMF, rNCPD, and sNTD respectively, explicitly formulated as follows:
\begin{gather}
\label{eq:l1SparseNMF}\tag{sNMF}
        \min_{X_1 \geq0, X_2\geq0 }  \text{KL}(M,X_1X_2^T) + \mu_1 \|X_1\|_1 +  \mu_2 \| X_2 \|_1, \\
        \label{eq:rNCPD}\tag{rNCPD}
    \underset{X_i\geq 0}{\argmin}  \| \mathcal{T} - \mathcal{I}_r \times_1 X_1 \times_2 X_2 \times_3 X_3 \|^2_F +  \mu \left(\|X_1\|_F^2 + \|X_2\|_F^2 + \|X_3\|_F^2\right), \\
\label{eq:sNTD_model}\tag{sNTD}
    \underset{X_i\geq 0, \mathcal{G}\geq 0}{\argmin}  \text{KL}(\mathcal{T}, \mathcal{G} \times_1 X_1 \times_2 X_2 \times_3 X_3) +  \mu \left(\|\mathcal{G}\|_1 + \|X_1\|_F^2 + \|X_2\|_F^2 + \|X_3\|_F^2\right),
\end{gather}
where sNTD has sparsity enforced on the core tensor $\mathcal{G}$. For sNMF, two regularization hyperparameters $\mu_1\neq \mu_2$ are introduced to test the observations made in~\cref{sec:scale-optimality}. 

As discussed in~\cref{sec:HRSI_intro}, sNMF, and similarly rNCPD, are instances of HRSI. sNTD may also be formulated as a rank-one HRSI model, by considering a vectorized formulation:
\begin{equation}\label{eq:sNTDvec}
    \underset{x_1\geq 0, x_2\geq 0, x_3\geq 0, g\geq 0}{\argmin} \text{KL}(\mathcal{T},\{x_1,x_2,x_2,g\}) + \mu (\|g\|_1 + \|x_1\|_F^2 + \|x_2\|_F^2 + \|x_3\|_F^2),
\end{equation}
where $x_i=\vecn(X_i)$ and $g=\vecn(\mathcal{G})$ for a chosen arbitrary vectorization procedure, and where $\text{KL}$ is the KL divergence against tensor $\mathcal{T}$ with vectorized inputs.

\subsubsection{Optimal balancing and implicit formulations}
We can apply the results of~\cref{sec:Sparsity_scale_inva_models} straightforwardly for each model to obtain the following balancing updates for each column indexed by $q\leq r$ and each parameter matrix $X_i$:
\begin{align}\label{eq:sNMFbalanc}
    &X_i[:,q] \leftarrow \frac{\sqrt{\mu_1\mu_2\|X_1[:,q]\|_1\|X_2[:,q]\|_1}}{\mu_i\|X_i[:,q]\|_1}X_i[:,q], \tag{sNMFb} \\\label{eq:rNCPD_balanc}
    &X_i[:,q] \leftarrow \left( \sqrt{2} \|X_i[:,q]\|_2^{-\frac{1}{3}}\prod_{j\neq i}\|X_j[:,q]\|_2^{\frac{2}{3}} \right)X_i[:,q]. \tag{rNCPDb}
\end{align}
Balancing for sNTD is done with respect to the vectorized inputs and therefore there is virtually only one column to scale for each parameter matrix:
\begin{equation}\label{eq:sNTD_balanc}\tag{sNTDb}
X_i \leftarrow \sqrt{\frac{2^{\frac{3}{5}}\left(\|\mathcal{G}\|_1\prod_{j=1}^{3}\|X_j\|_F\right)^{2/5}}{2 \|X_i\|^2_F}} X_i 
\; \text{and} \;
\mathcal{G} \leftarrow \frac{2^{\frac{3}{5}}\left(\|\mathcal{G}\|_1\prod_{j=1}^{3}\|X_j\|_F\right)^{2/5}}{\|\mathcal{G}\|_1}  \mathcal{G}.
\end{equation}
    
Furthermore, the essentially equivalent implicit HRSI problem for sNMF and rNCPD can respectively be cast as
\begin{align}\label{eq:l1l1nmfimplicit2}
    &\underset{L_q\in\mathbb{R}_+^{m_1\times m_2},\; \text{rank}(L_q)\leq 1}{\min} ~ \text{KL}(M, \sum_{q=1}^{r}L_q) + 2\sqrt{\mu_1\mu_2} \sum_{q=1}^{r} \sqrt{\|L_q\|_1}. \\
&\underset{\{\mathcal{L}_q\}_{q\leq r},~\text{rank}(\mathcal{L}_q)\leq 1}{\inf} ~ \|\mathcal{T} -\sum_{q=1}^{r} \mathcal{L}_q \|_F^2 + 3\mu \sum_{q=1}^{r} \|\mathcal{L}_q\|_F^{\frac{2}{3}}
\end{align}
with $\mathcal{L}_q = X_1[:,q]\otimes X_2[:,q]\otimes X_3[:,q]$.
In a nutshell, sNMF in its implicit formulation is regularized with a mixed norm that promotes sparsity at both the entry and the component levels, while rNCPD is regularized with a group sparse norm on the rank-one components.

The sNTD model exhibits rotational invariances that our approach does not address, as they do not match the assumptions of the HRSI model. Based on formulation \eqref{eq:sNTD_model}, we can derive an implicit formulation; however, balancing is applied to all components simultaneously, which obscures component-wise group sparse regularization. In \cref{app:NTD_sinkhornbalancing}, we demonstrate that a Sinkhorn-like procedure can be designed to balance sNTD algorithmically. However, this approach did not yield improvements over the simpler vectorization-based balancing and deserves further exploration.

\subsubsection{Majorization-Minimization update rules}
For sNMF, the MM updates take the form of ``multiplicative'' updates:
\begin{equation}\label{eq:sNMFupdateH}
   \hat{X}_1=\max\left(X_1 \odot \frac{X_2 \frac{M}{X_2^TX_1}}{\mu_1 e_{m_1\times r}+e_{m_1}\otimes X^T_{2}e_{m_2}}, \epsilon\right),
\end{equation} 
where $e_{n}$ is a vector or matrix of ones of size $n$ and the maximum is taken elementwise. A similar update is obtained for $X_2$, detailed computations are found in~\cref{app:update_rules}. For sNTD, consider the unfolded tensor along the $i$th mode $\mathcal{T}_{[i]} \in \mathbb{R}_+^{m_i \times \prod_{j\neq i}m_j}$ and the Kronecker products $U_i:=\mathcal{G}_{[i]} \left(\kron_{j\neq i} X_j \right)^T \in \mathbb{R}_+^{r_1 \times \prod_{j\neq i}m_j}$. The MM updates then read
\begin{align} 
       \hat{X_i}&=\max \Big( \frac{\left[C_i^{.2}+S_i\right]^{.\frac{1}{2}}-C_i}{4 \mu}, \epsilon \Big),
\\
    \hat{\mathcal{G}} &= \max \Big(\mathcal{G} \odot \frac{\mathcal{N} \times_1 X_1^T \times_2 X_2^T \times_3 X_3^T}{\mu \mathcal{E}_{r_1\times r_2\times r_3} + \mathcal{E}_{m_1\times m_2\times m_3} \times_1 X_1^T \times_2 X_2^T \times_3 X_3^T}, \epsilon \Big),
\end{align}
where $C_i=EU_i^T$ with $E$ and $\mathcal{E}$  all-one matrices and tensors,
$S_i=8\mu X_i \odot \left( \frac{ \mathcal{T}_{[i]} }{X_iU_i} U_i^T\right) $ and $\mathcal{N} = \frac{ \mathcal{T}}{\mathcal{G} \times_1 X_1 \times_2 X_2 \times_3 X_3 }$. The matrices and tensors of ones are not contracted as such in practice; instead, we use summations of columns in the parameter matrices.

Regarding rNCPD, we solve the problem using HALS~\cite{Gillis2012Accelerated} as explained in~\cref{sec_algorithms}.

\subsubsection{Synthetic experiments}
\paragraph{Experiments description}
We study the impact of the balancing procedure in the meta-algorithm, compared to the same algorithm without explicit balancing. We observe the performance of both strategies in terms of loss value at the last iteration.
In addition, we explore a third option in which balancing is performed only at initialization. Some experiments conducted here are adapted from our previous work~\cite{cohenRegularisationImpliciteFactorisations2023}.

The process for generating synthetic data is as follows:
\begin{enumerate}
    \item We construct the ground truth low-rank matrix/tensor \( M \) with dimensions \( m_i = 30 \) using low-rank models sNMF, rCPD, or sNTD.
    \item The rank \( r \) of the model is fixed at 4, and the estimated rank \( r_e \) for the approximation is also set to 4.
    \item For the generation of noisy data:
   \begin{itemize}
       \item For sNMF and sNTD, the noisy data matrix/tensor is sampled from the Poisson distribution \( \mathcal{P}(\alpha M) \).
       \item For rNCPD, it is sampled from a Gaussian independent and identically distributed (i.i.d.) distribution with variance \( \alpha \) and mean \( M \).
   \end{itemize}
   \item The parameter \( \alpha \) is selected to ensure that the average Signal-to-Noise Ratio (SNR) is 40 dB for sNMF and sNTD, and 200 dB for rNCPD.
   \item Finally, the noisy data is normalized using its Frobenius norm.
\end{enumerate}
The ground-truth parameter matrices \( X_i \) are sampled elementwise from i.i.d. Uniform distributions over the interval \([0, 1]\). For sNMF, these matrices are sparsified using hard thresholding, resulting in \( 30\% \) of their entries being set to zero. In the case of sNTD, the core tensor is sampled similarly and sparsified such that \( 70\% \) of its entries are null.

The hyperparameters are selected as follows: We set \( \epsilon = 10^{-16} \). The initial parameter matrices are drawn like the true ones; however, the first matrix \( X_1 \) is scaled by \( 100 \). This results in unbalanced parameter matrices, which serves to emphasize the effect of the balancing procedure.
We optimally scale the initialization with respect to the \( \ell_2 \) loss, as described in \cref{alg:meta-Algo}. The number of outer iterations is set to \( 500 \) for both sNMF and sNTD, while it is set to \( 50 \) for rNCPD. The number of inner iterations is set to \( 10 \). Each experiment is repeated \( 50 \) times. The rNCPD problem is addressed using the HALS algorithm, which is known for its fast convergence, thereby requiring fewer outer iterations.

We conduct several additional tests specific to each model. For sNMF, to validate that the individual values of \( \mu_1 \) and \( \mu_2 \) do not influence the relative sparsity levels of \( X_1 \) and \( X_2 \), we examine the ratio \( \frac{\|X_1\|_0}{\|X_2\|_0} \) in relation to the regularization hyperparameters. An entry in a matrix is considered null if it is below the threshold \( 2\epsilon \). Moreover, we fix the regularization hyperparameter \( \mu_1 = 1 \), which allows us to study the relative sparsity of the parameter matrices concerning the individual values of the regularization hyperparameters. 
For rNCPD, to investigate the number of zero components, we overestimate the number of components for the approximation \( r_e \) (estimated rank) to \( 6 \). We anticipate that the ridge penalization will function as a rank penalization, effectively pruning two components. A component is deemed null if the weight associated with it (the product of the \( \ell_2 \) norms of each parameter matrix along each column) is lower than \( 1000\epsilon \) (there are \( 30 \times 6 = 180 \) entries). 
For sNTD, the first estimated rank \( r_1 \) is also set to \( 6 \) instead of \( 4 \), and we analyze both the number of nonzero first-mode slices and the overall sparsity in the core tensor \( \mathcal{G} \).


\paragraph{Experiments Results}

Results are shown in~\cref{fig:figs_NMF_loss_fms,fig:figs_CP_loss_fms,fig:figs_NTD_loss_fms}.
We validate several observations from the theoretical derivations:
\begin{itemize}
    \item For small values of \( \mu \), as predicted in \cref{sec:balance_speed}, the meta-algorithm without explicit balancing struggles to converge in terms of loss value. However, as shown in the supplementary material \cref{fig:fms}, the parameter matrices are still well estimated without balancing. This indicates that while the unbalanced algorithm identifies good candidate solutions for small \( \mu \), it requires many iterations to balance these estimates. Explicit balancing reduces the regularization terms, resulting in significantly lower costs after running the algorithm. Balancing the initial guess offers less improvement compared to balancing at each iteration, except for sNMF.
    \item The observations derived from the implicit formulations are validated numerically. The relative sparsity levels in the sNMF parameter matrices remain close to \( 1 \), regardless of the ratio \( \frac{\mu_2}{\mu_1} \). For a wide range of regularization hyperparameter values, the estimated number of components in rNCPD matches the true count of four, indicating that ridge regularization effectively performs automatic rank selection. In sNTD, since all algorithms should converge to similar solutions, the unbalanced method produces solutions denser than expected. Thus, balancing aids in identifying zero entries. The decomposition rank for the first mode was set larger than the true rank, and the balanced algorithm more reliably prunes the extra components.
\end{itemize}

\begin{figure}
    \centering
    \includegraphics[width=7.7cm]{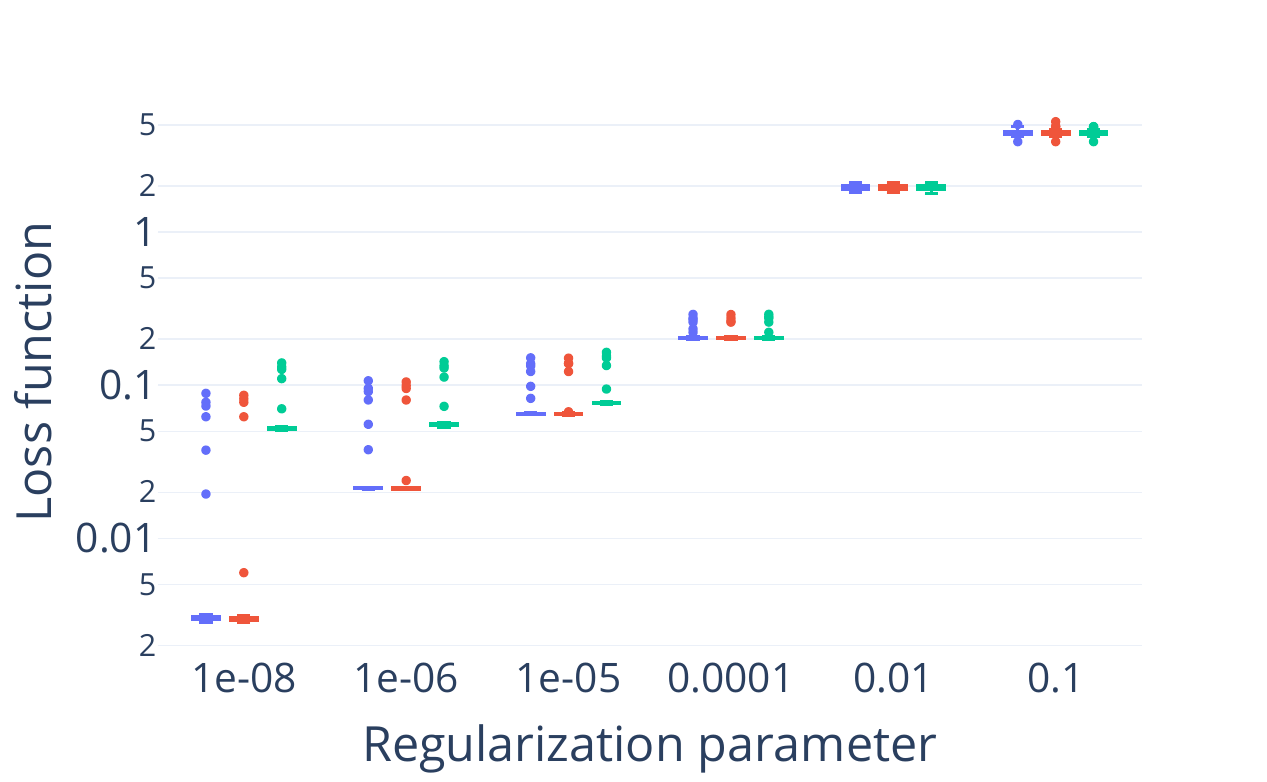}
    \includegraphics[width=7.7cm]{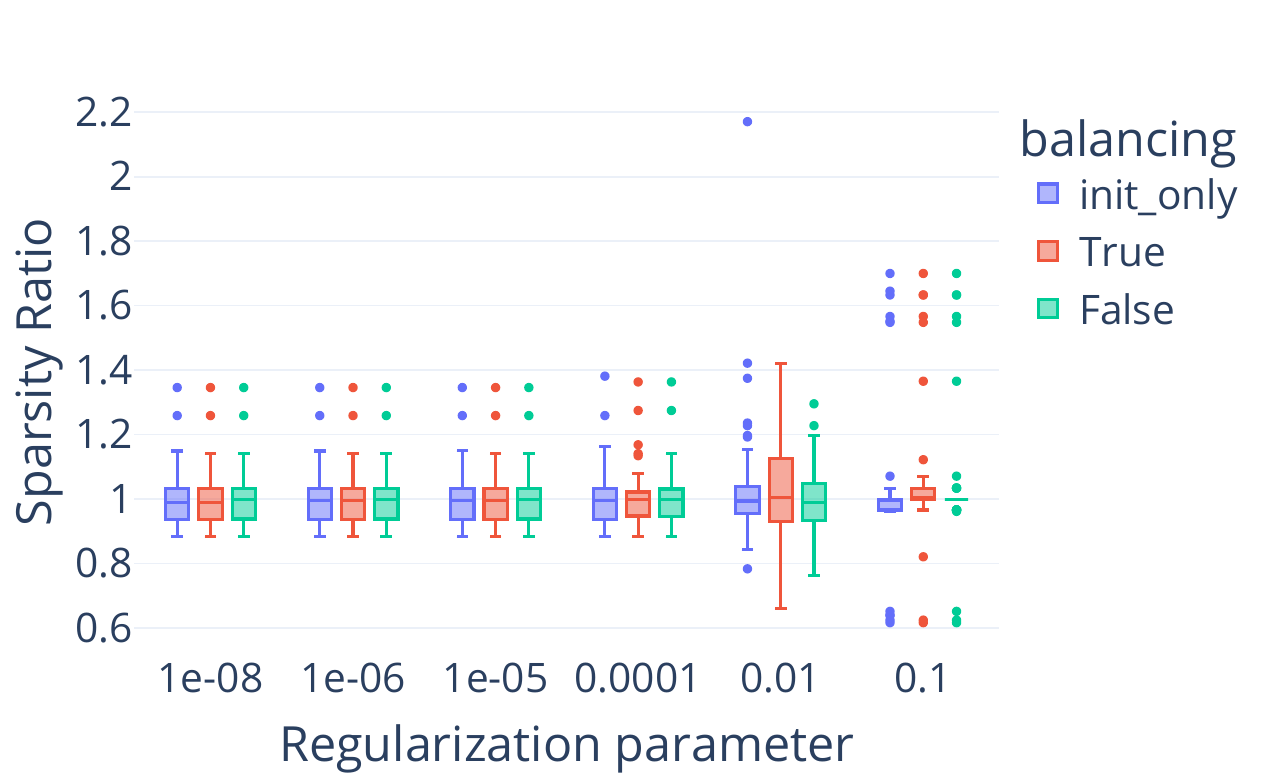}
    \caption{Results for the simulated experiment on sNMF.
    The left plot shows the loss after the stopping criterion is reached, plotted against the regularization parameter $\mu_2$ while $\mu_1=1$. The right plot displays the ratio of the sparsity of the estimated matrix $X_1$ to that of the estimated matrix $X_2$. Results are presented for three scenarios: balancing used only at initialization (blue), balancing applied at each iteration (red), and no balancing (green). Notably, the left plot indicates that balancing reduces the loss function value upon reaching the convergence criterion, particularly for the weakest regularizations. In the right plot, it is observed that tuning the regularization hyperparameter $\mu_2$ while fixing $\mu_1=1$ does not affect the sparsity ratio of the parameter matrices $X_2$ and $X_1$, confirming that the sparsity of matrix $X_2$ cannot be adjusted independently of matrix $X_1$.
    }
    \label{fig:figs_NMF_loss_fms}
\end{figure}

\begin{figure}
    \centering
    \includegraphics[width=7.7cm]{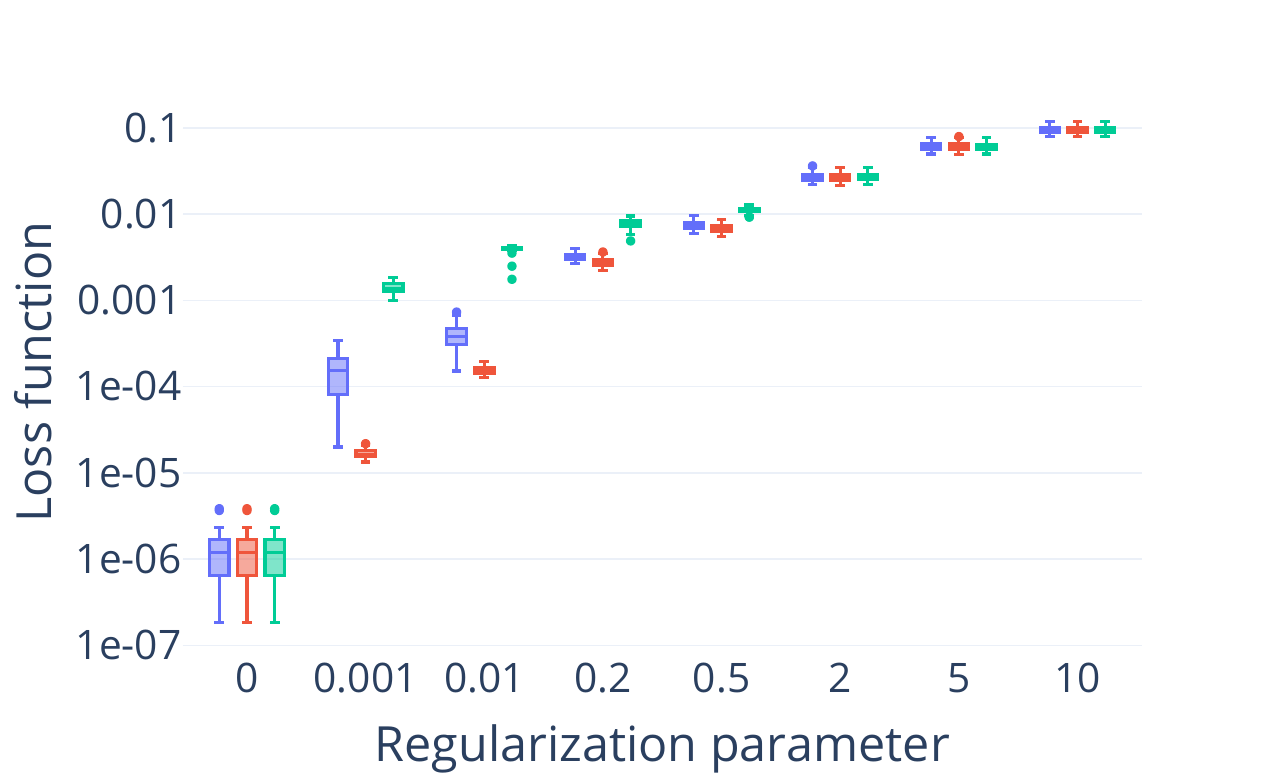}
    \includegraphics[width=7.7cm]{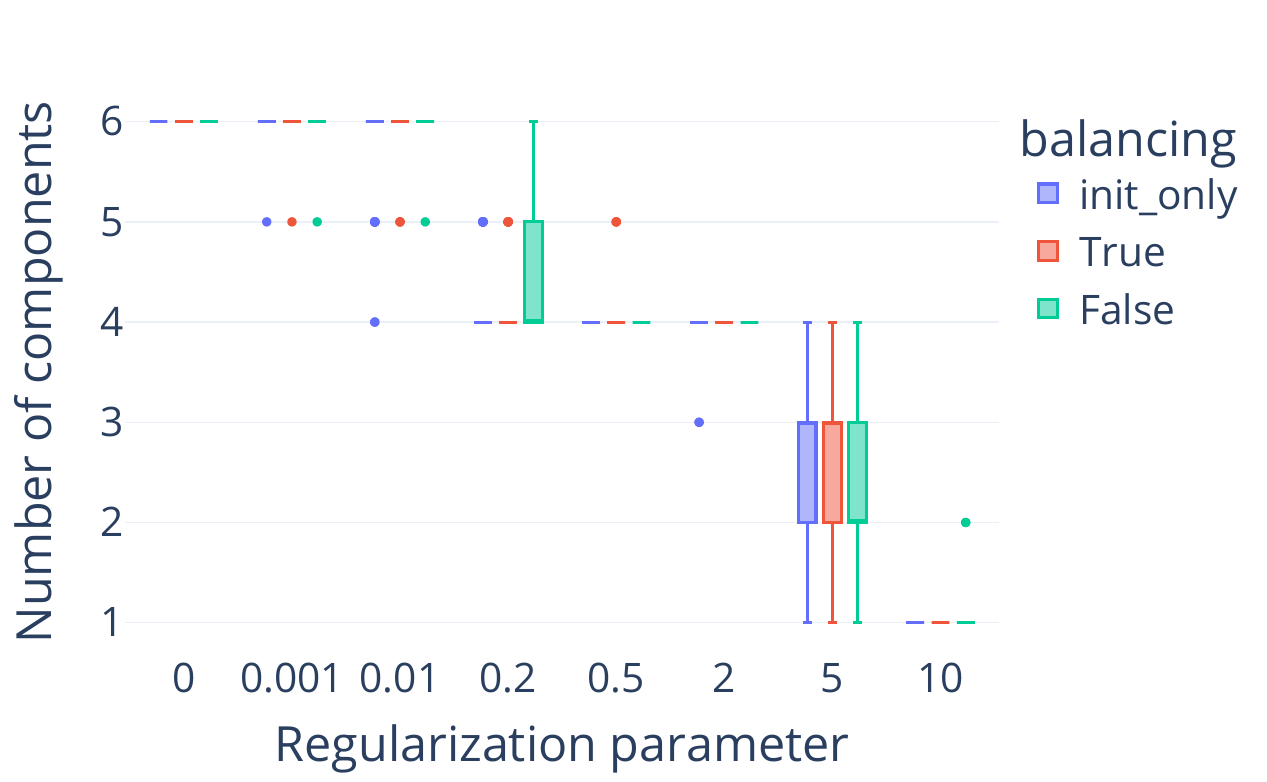}
    \caption{Results for the simulated experiment on ridge Nonnegative CPD. The normalized loss after the stopping criterion is reached is shown with respect to the regularization parameter $\mu$ on the left plot, while 
    the number of components within the estimated rNCPD model with respect to the regularization parameter is shown on the right plot. Results when balancing is used at initialization only, at each iteration, and not used are shown respectively in blue, red, and green. One may observe on the left plot that balancing helps reduce the loss function value when the convergence criterion is reached. This reduction is significant in particular for small regularization parameter $\mu$ and when balancing is performed at each iteration, although balancing only the initialization already improves over no balancing. On the right plot, one may observe that for a wide range of ridge regularization hyperparameter values, the number of estimated components in rNCPD matches true rank $r=4$. Therefore ridge regularization has an implicit group sparse action on the rank-one components as predicted in theory. This phenomenon happens regardless of the balancing procedure.
    }
    \label{fig:figs_CP_loss_fms}
\end{figure}

\begin{figure}
    \centering
    \includegraphics[width=7.7cm]{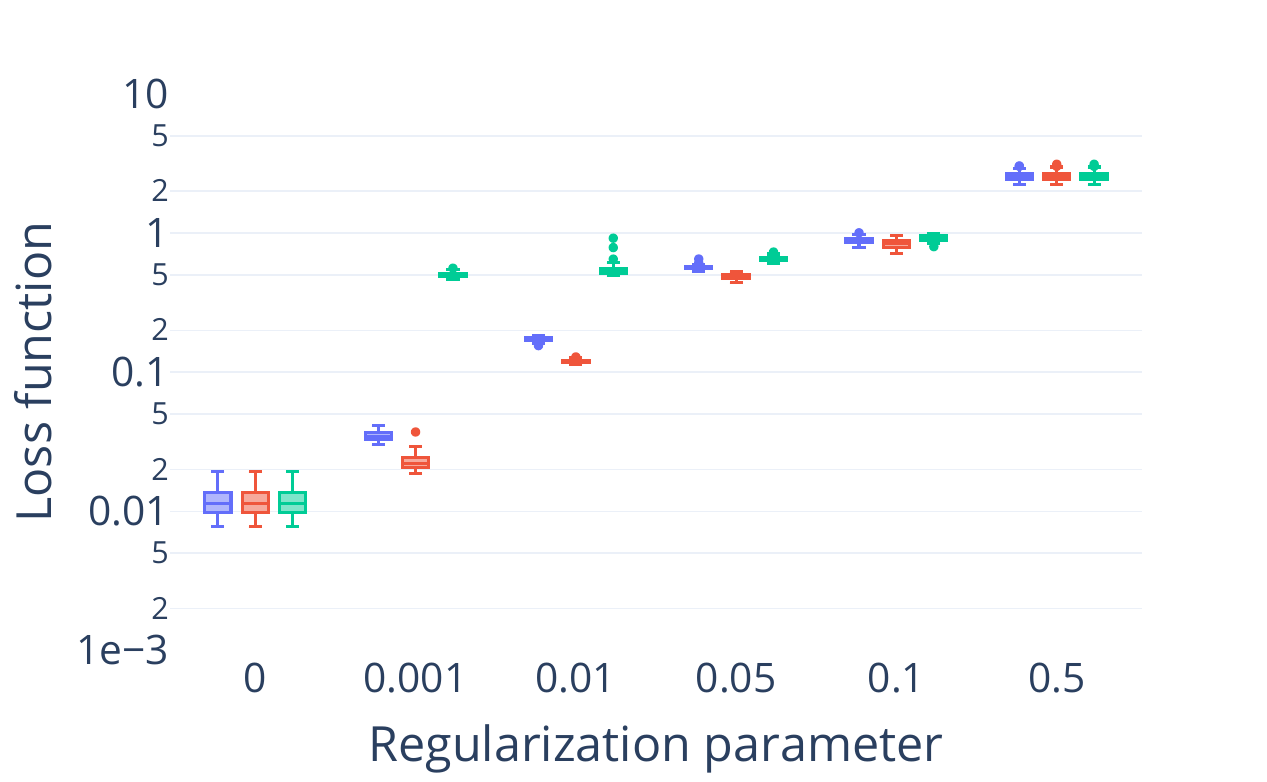}
    \includegraphics[width=7.7cm]{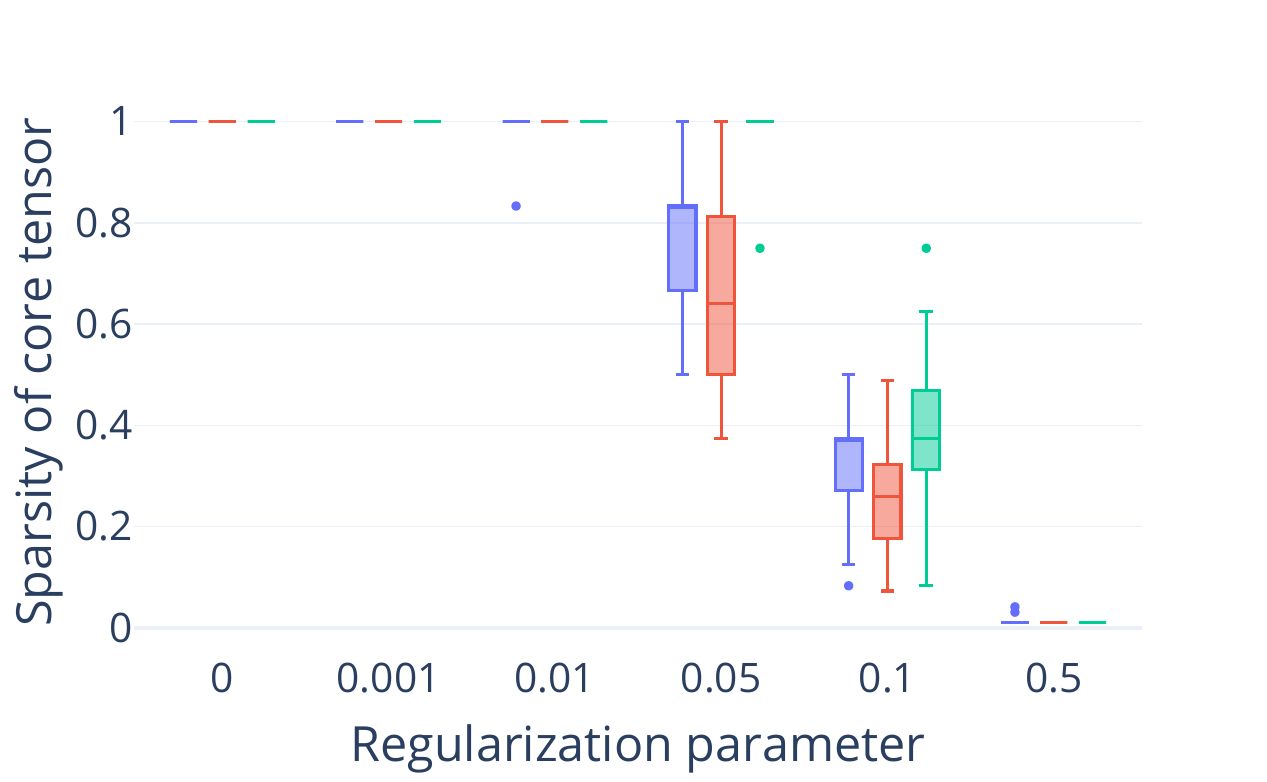}
    \includegraphics[width=7.7cm]{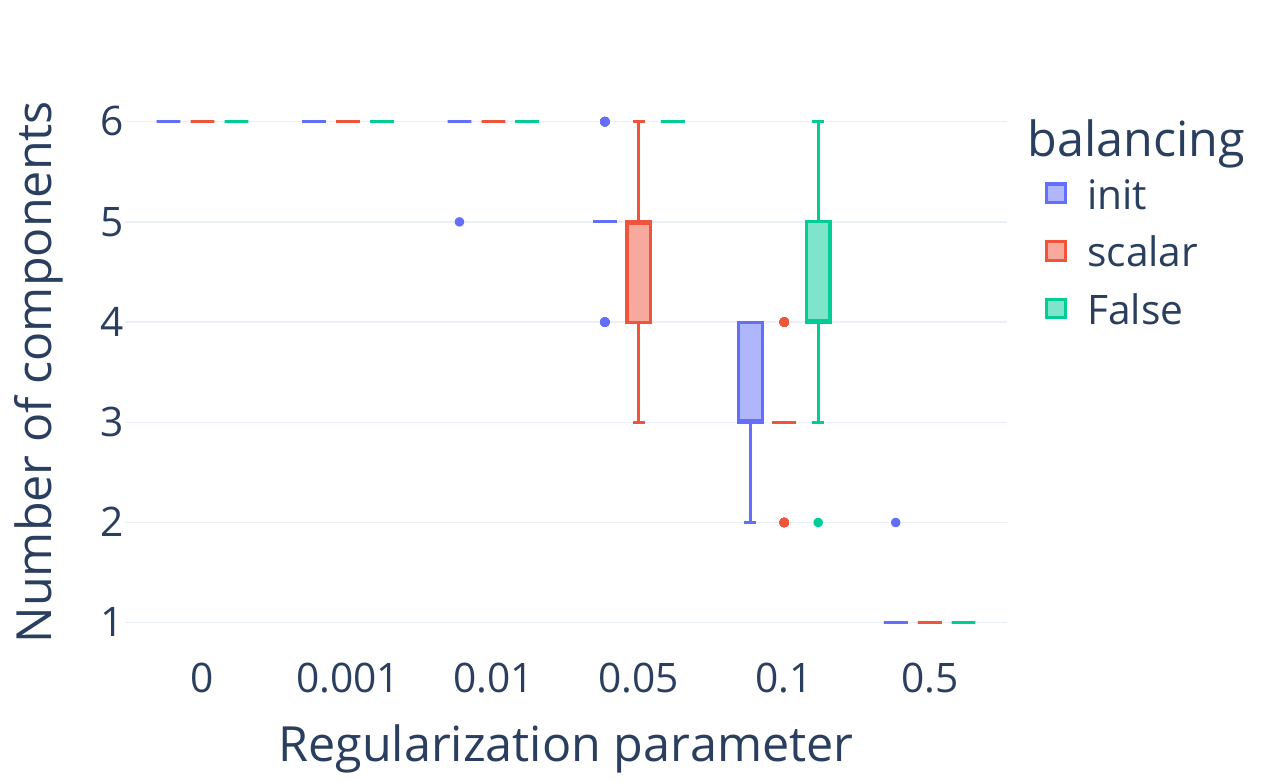}
    \caption{Results for the simulated experiment on sparse NTD.
    The normalized loss after 500 outer iterations is presented in the top left plot, showing its relationship with the regularization parameter $\mu$. The top right plot displays the scaled sparsity of the estimated core tensor against the regularization parameters, while the bottom plot illustrates the number of nonzero components along the first mode of the estimated core. Results are shown in blue for balancing used only at initialization, in red for balancing at each iteration (using scalar balancing), and in green for no balancing. The top left plot indicates that balancing effectively reduces the loss function value upon reaching the convergence criterion, with a notable reduction for small regularization parameter $\mu$, especially when balancing is applied at each iteration. Even balancing only at initialization improves results compared to no balancing. In the top right and bottom plots, $\ell_1$ regularization on the core tensor promotes sparsity and helps select the appropriate number of components, $r_1=4$. Balancing—whether at initialization or during iterations—broadens the range of regularization values for which components are effectively pruned.
    }
    \label{fig:figs_NTD_loss_fms}
\end{figure}

\subsubsection{sNTD for Music Segmentation}

We showcase sNTD with balancing in a music redundancy detection task. The use of NTD for analyzing music was proposed in~\cite{smithNonnegativeTensorFactorization2018} and further developed in~\cite{marmoret2020uncovering}. For an audio recording of a song segmented into bars, a tensor spectrogram is constructed by stacking time-frequency spectrograms computed from each bar of the audio signal. Each column of the time-frequency spectrogram matrix captures frequency information about the one-dimensional audio signal within a small time window. The resulting data forms a tensor spectrogram $\tT$ of size $F \times T \times B$, where $F$ is the number of frequency bins, $T$ is the number of time bins, and $B$ is the number of bars in the song. By applying NTD to the tensor spectrogram, redundancies across the bars can be detected, allowing for the factorization of spectral and short-term temporal content into a collection of patterns. Notably, the redundancy information is contained in the last mode parameter matrix $X_3$, which can be utilized for downstream tasks such as automatic segmentation.
Each column $X_3[:,q]$ of matrix $X_3$ corresponds to a specific pattern, defined by the product of the other parameters $X_1$, $X_2$, and the slice $\mathcal{G}[:,:,q]$ of tensor $\mathcal{G}$. Each row of matrix $X_3$ maps these patterns to the bar indices. Matrix $X_3$ visualizes the relationship between patterns and bars, with the expectation that bars containing similar content will utilize the same patterns. Consequently, matrix $X_3$ should exhibit sparsity, as only a limited number of patterns are employed for each bar.

We consider sNTD with sparsity imposed on the parameter matrix $X_3$ instead of $\mathcal{G}$. The derivations for this variant are similar to those of sNTD with a sparse core tensor and are therefore omitted. The song selected for this experiment is "Come Together" by the Beatles, which has been used previously to demonstrate NTD's ability to detect patterns and redundancies in music. After processing the song as described in~\cite{marmoret2020uncovering}, we obtain a tensor of size $F=80$, $T=96$, and $B=89$. The core dimensions are set to $32\times 12\times 10$, indicating that the song should be factorized into at most $10$ patterns. The algorithm runs for $50$ iterations, with $10$ inner iterations per run. The regularization parameter is chosen from the grid $[0, 10, 100, 300, 500]$. Initialization is performed using HOSVD~\cite{DeLathauwer2004Dimensionality}. While the choice of initialization method is important, we opted for the method proposed in~\cite{marmoret2020uncovering} without further exploration.

\Cref{fig:figs_NTD_audio} displays the loss function for two variants of the sNTD algorithm: one without balancing and one with scalar balancing, along with the estimated parameter matrices $X_3^T$ for each setup. The expert, ground truth segmentation of the song is indicated by green vertical lines. Notably, the balanced algorithms converge faster than the unbalanced variant. Additionally, the sparsity imposed on $X_3$ significantly affects the quality of redundancy detection. Ideally, the pattern indices used for each bar should align closely with the expert segmentation, resulting in group sparsity between the green vertical lines in the rows of the matrices shown in~\cref{fig:figs_NTD_audio}. This alignment is particularly evident with a regularization parameter of three hundred. At this level, balancing visibly influences the final sparsity pattern of matrix $X_3$, although it is arguably difficult to assess the superiority of the balanced algorithm from visual inspection only. 

\begin{figure}
    \centering
    \includegraphics[width=14cm]{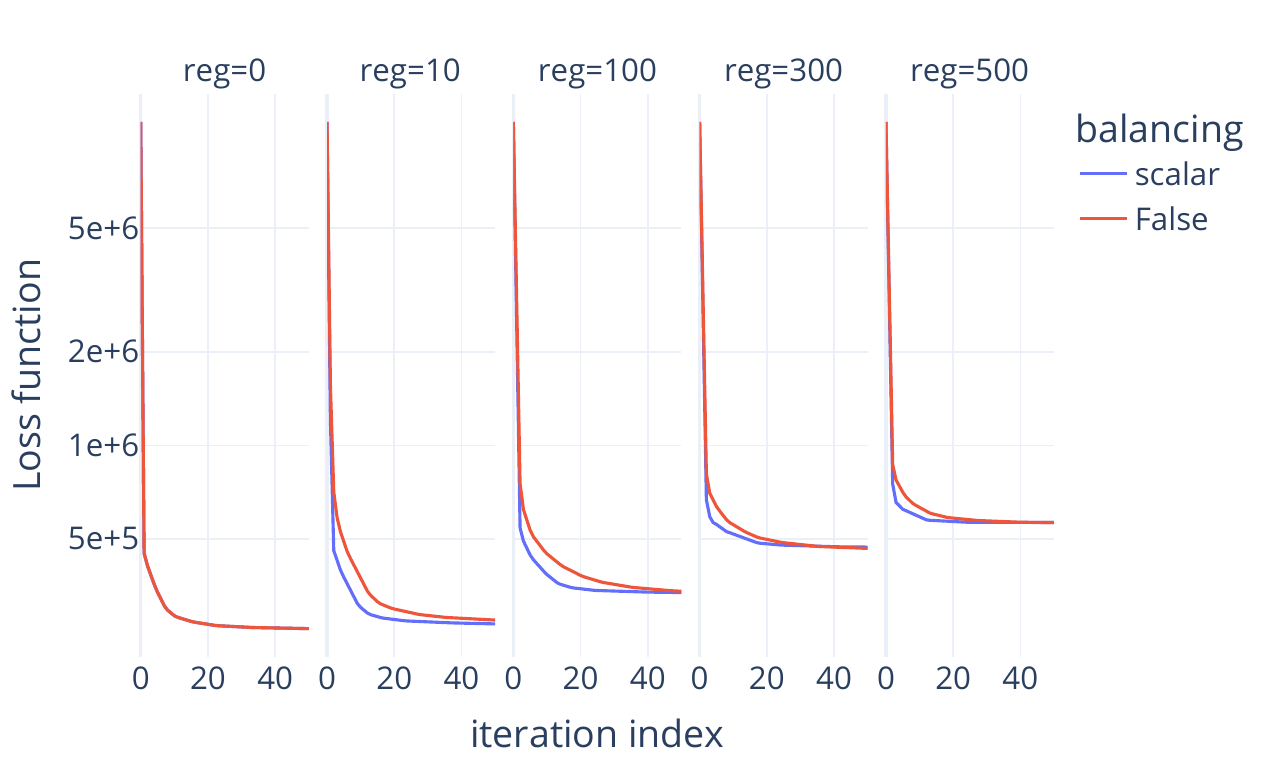}
    \includegraphics[width=14cm]{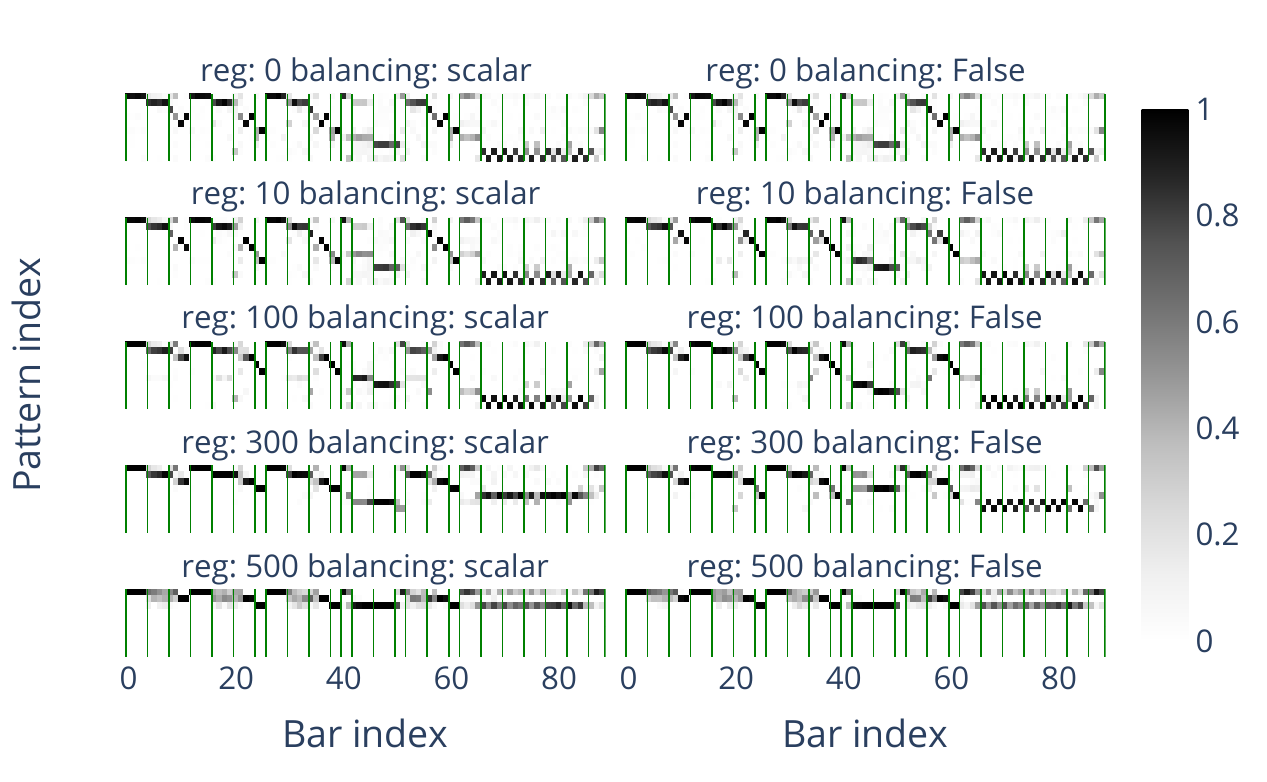}
    \caption{Results for the sNTD experiment on music redundancy detection. \textbf{Top:} the loss function for each tested algorithm is shown across iterations for various values of the regularization hyperparameters. \textbf{Bottom:} The reconstructed matrix $X_3$ is shown, transposed, for each tested algorithm and various values of the regularization hyperparameter. Green vertical bars indicate an expert segmentation of the song (Come-Together). The number of patterns
(rows) is hard to estimate a priori, therefore ideally the estimated matrix $X_3$ should have sparse rows to prune unnecessary patterns. Moreover, it should be group sparse between the green bars to allow for easier segmentation by downstream methods such as dynamic programming or K-means.
    }
    \label{fig:figs_NTD_audio}
\end{figure}

\section{Conclusions and perspectives}\label{sec:conclu}

 The design of efficient algorithms for computing low-rank approximations is an active research field, in particular in the presence of constraints and penalizations. In this work, we have shed light on the hidden effects of regularizations in nonnegative low-rank approximations, by studying a broader class of problems we term Homogeneous Regularized Scale-Invariant problems. We have shown that the scale-ambiguity inherent to these problems induces a balancing effect of penalizations imposed on several parameter matrices. This balancing effect has both practical and theoretical implications. In theory, implicit balancing induces implicit regularizations effects. For instance, imposing ridge penalizations on all parameter matrices in a low-rank approximation is equivalent to using a sparsity-inducing penalization on rank-one components of the model. In practice, enforcing this implicit balancing helps alternating algorithms efficiently minimize the cost function without hurting convergence guarantees. This is demonstrated formally on a toy problem, and practically in three different showcases, namely sparse Nonnegative Matrix Factorization, ridge Nonnegative Canonical Polyadic Decomposition, and sparse Nonnegative Tucker Decomposition.

Our findings open several avenues for future exploration. First, the implicit equivalent formulation of regularized low-rank approximations could be leveraged to derive guarantees regarding the solution space of these models. For instance, it remains unclear under what conditions $\ell_1$-regularized Nonnegative Matrix Factorization yields zero, sparse, or dense solutions. Additionally, our framework is currently limited to positive homogeneous and positive-definite regularizations applied column-wise to parameter matrices of scale-invariant models. While positive homogeneity and positive definiteness allow for a wide range of regularizations, including $\ell_p^p$ norms, some regularizations and constraints commonly used in the literature do not satisfy these criteria. Thus, it would be valuable to investigate how relaxing these assumptions affects our findings. 

\section*{Acknowledgements}
The authors would like to thank Rémi Gribonval for an early assessment of this work and several pointers toward relevant literature about neural network training. The authors also thank Cédric Févotte and Henrique De Morais Goulart for helpful discussions around the implicit and explicit formulations, as well as pointers toward relevant literature in sNMF.

\newpage 

\appendix

\section{HRSI with unregularized modes is ill-posed}\label{AppendB}
First, let us consider without loss of generality the case $n=2$, $r=1$ and $\mu_2=0$ for Problem \eqref{eq:l1prop11}:
\begin{equation}\label{eq:l1prop}
    \phi(X_1,X_2) = f(X_1,X_2) + \mu_1 g(X_1)
\end{equation}
Function $\phi$ is bounded below by zero because $f$ is nonnegative, therefore it must admit an infimum over $\mathbb{R}^{m_1\times r}\times \mathbb{R}^{m_2\times r}$. Let us denote by $p$ this infimum.

We are going to show that, given $l:=\inf_{X_1,X_2} f(X_1,X_2)$, it holds that $p=l$. We know that $p\geq l$ since both $g$ and $\mu_1$ are nonnegative. To prove the converse inequality, note that from Equation~\eqref{eq:l1prop},
for any $\lambda<1$, it holds that $\phi(X_1,X_2)>\phi(\lambda X_1, \frac{1}{\lambda}X_2)$. Therefore for any admissible $(X_1,X_2)$,
\begin{equation}
    p \leq \lim_{\lambda\to 0} \phi(\lambda X_1, \frac{1}{\lambda} X_2).
\end{equation}

Now we may notice that $\phi(\lambda X_1,\frac{1}{\lambda}X_2) = f(X_1,X_2) + o(\lambda)$, and therefore prolonging by continuity the right hand side, we get that $\lim_{\lambda\to 0} \phi(\lambda X_1, \frac{1}{\lambda}X_2) = f(X_1,X_2)$.  This shows that $p \leq l$, which concludes our proof that $p=l$.

One can easily note that for the infimum $p$ to be attained, say at $(X_1^\ast, X_2^\ast)$, one must have $\phi(X_1^\ast,X_2^\ast)=f(X_1^\ast, X_2^\ast)$ and therefore $g(X_1^\ast)=0$, \textit{i.e.} $X_1^\ast=0$ and $p=f(0,X_2^\ast)$ since $g_1$ is positive definite. By contraposition, as soon as $f(0,X_2^\ast)$ is not the infimum of $f$ on the set of matrices with rank at most $r$, the infimum $p$ cannot be attained. Finally, note that $f(0,X_2^\ast)=f(0,0)$ by scale-invariance.

\section{Closed-form expression to solutions of the balancing procedure}\label{app:balancing_eq}
We are interested in computing the solution to the following problem:
\begin{equation}\label{eq:scale_cost_appendix}
    \underset{\forall i\leq n,\;\lambda_i}{\min} \sum_{i=1}^{n} \lambda_i^{p_i} a_i  \text{ such that } \prod_{i=1}^{n}\lambda_i = 1
\end{equation}
Note that we dropped the nonnegativity constraints which are not required to reach our closed-form solution. A particular solution will be naturally nonnegative as long as all products $p_ia_i$ are nonnegative. We also suppose that each constant $a_i$ is strictly positive (see~\cref{sec:illposed} for a justification).

We use the KKT conditions to compute the solutions, as is traditionally done for the related variational formulation of the geometric mean. Indeed,
\begin{equation}\label{eq:geo_mean_def_var}
    \underset{\{\lambda_i\}_{i\leq n}}{\argmin} \sum_{i=1}^{n} \lambda_i  \text{ such that } \prod_{i=1}^{n}\lambda_i = \prod_{i=1}^{n} x_i
\end{equation}
is an alternative definition of the geometric mean $\beta=\left(\prod_{i\leq n}x_i\right)^{\frac{1}{n}}$.

Let us define the Lagrangian for our problem,
\begin{equation}\label{eq:Lag_geomean_var}
    \mathcal{L}(\{\lambda_i\}_{i\leq n}, \nu) = \sum_{i\leq n} \lambda_i^{p_i}a_i + \nu (1-\prod_{i\leq n}\lambda_i) ~.
\end{equation}
Optimality conditions can be written as follows:
\begin{equation}
    \begin{cases}
        \nabla_{\lambda_i} \mathcal{L}(\{\lambda_i^\ast\}_{i\leq n}, \nu^\ast) = p_ia_i {\lambda_i^\ast}^{p_i-1} - \nu^\ast \prod_{j\neq i} \lambda_j^\ast = 0 \\
        1 = \prod_{i\leq n} \lambda_i^\ast
    \end{cases}
\end{equation}
Using the second equality, we can inject $\lambda^\ast_i$ in the right hand side of the first equation, which yields the following necessary condition
\begin{equation}
    \lambda^\ast_i = \left(\frac{\nu^\ast}{p_ia_i}\right)^{\frac{1}{p_i}}~.
\end{equation}
There may be a sign ambiguity depending on the power value $p_i$, in which case we can chose the positive solution since the original problem involved nonnegativity constraints on $\lambda_i$. Moreover, we have used the facts that $p_ia_i$ are nonzero, and that $\lambda_i^\ast$ are also nonzero. This last condition can easily be checked since the constraint $\prod_{i\leq n} \lambda_i^\ast = 1$ prevents any optimal $\lambda_i^\ast$ to be null.

To compute the optimal dual parameter, we can use again the primal feasibility condition:
\begin{equation}
    \prod_{i\leq n}\left(\frac{\nu^\ast}{p_ia_i}\right)^{\frac{1}{p_i}} = 1
\end{equation}
which, after some simple computation, shows that $\nu^\ast$ is exactly the weighted geometric mean of products $p_ia_i$ with weights $\frac{1}{p_i}$,
\begin{equation}
    \nu^\ast = \left(\prod_{i\leq n} \left(p_ia_i\right)^{\frac{1}{p_i}}\right)^{\frac{1}{\sum_{i\leq n} \frac{1}{p_i}}}
\end{equation}

Note that when applying this balancing operation to an optimal solution ${X_i^\ast}$ of HRSI, the scales $\lambda_i$ must equal one, therefore at optimality, 
\begin{equation}
    p_ia_i = \nu^\ast
\end{equation}

If the solution $X_i^\ast$ is null, then $\nu^\ast$ is also null and the result holds.

\section{Proof of sublinear convergence rate for ALS on the scaling problem}\label{app:proof_als_sublin}

We will study the variation of loss function when $x_1$ is updated, $x_2$ fixed, across one iteration. The problem is symmetric therefore we will generalize for all updates of either $x_1$ or $x_2$. 

Let us introduce a few notations. Let $k$ the iteration index, we set $x_i:=x_i^{(k)}, \tilde{x}_1 = x_1^{(k+1)}$. We denote the residuals $r_{k} = y - x_1x_2$ and $r_{k+1} = y - \tilde{x}_1x_2 $, and the regularizations $g_k = \lambda (x_1^2 + x_2^2)$ and $g_{k+1} = \lambda (\tilde{x}_1^2 + x_2^2)$. The cost function value at iteration $k$ is $f_k = r_k^2 + g_k$. Finally, we denote $\delta r_k = r_{k} - r_{k+1}$ the variation in residuals.
Throughout the following derivations, we suppose that $\lambda$ is much smaller than $y$, but also $x_1^2$ and $x_2^2$. Since both $x_1^2$ and $x_2^2$ will eventually converge towards $y-\lambda$, this is equivalent to assuming that $\lambda$ is much smaller than $y$, and $k$ is large enough so that $x_1$ and $x_2$ are relatively close to $\sqrt{y-\lambda}$.

We study the variation of the cost $\delta f_k = f_k - f_{k+1}$. We observe that
\begin{align}
    f_{k+1} &= (-\delta r_k + r_k)^2 + g_{k+1} - g_k + g_k \\
            &= f_k - 2r_k\delta r_k + (\delta r_k)^2 + (g_{k+1} - g_k),
\end{align}
and therefore $\delta f_k =  2r_k\delta r_k - (\delta r_k)^2 + (g_{k} - g_{k+1}) $.
We can expend the update of the regularization terms:
\begin{align}
    g_{k} - g_{k+1} &= \lambda \left( x_1^2 - \tilde{x}_1^2 \right) \\
                    &= \frac{\lambda}{x_2^2} \left( (y^2-\tilde{x}_1^2x_2^2) - (y^2 - x_1^2x_2^2) \right) \\
                    &= \frac{\lambda}{x_2^2} \left( r_{k+1} (y+\tilde{x}_1x_2) - r_{k} (y + x_1x_2) \right)
\end{align} 
Recall that $\tilde{x}_1 = \frac{y}{x_2} \frac{1}{1+\frac{\lambda}{x_2^2}}$\footnote{ALS algorithm updates each variable as follows $\tilde{x}_1=\argmin_{x_1} (y-x_1x_2)^2 + \lambda(x_1^2 + x_2^2)$, solving the KKT condition leads to $\tilde{x}_1=\frac{y x_2}{\lambda + x_2^2}=\frac{y}{x_2} \frac{1}{1+\frac{\lambda}{x_2^2}}$.}. We perform a first approximation here of $\tilde{x}_1(\lambda)$ by using its Taylor expansion around $\lambda=0$, which hold approximately when $\frac{\lambda}{x_2^2}\ll 1$. Then we get 
\begin{equation}
    \tilde{x}_1 = \frac{y}{x_2}(1-\frac{\lambda}{x_2^2} + \mathcal{O}(\lambda^2)) = \frac{y}{x_2} - \frac{y\lambda}{x_2^3} + \mathcal{O}(\lambda^2),
\end{equation}
This means that 
\begin{equation}
    y+\tilde{x}_1x_2  = 2y - \frac{y\lambda}{x_2^2} + \mathcal{O}(\lambda^2).
\end{equation}
Furthermore, under the same hypothesis, we can easily see that the residuals is of order of magnitude $\lambda$:
\begin{align}
    r_{k+1} &= y-\tilde{x}_1x_2 \\
            &= \frac{y\lambda}{x_2^2} + \mathcal{O}(\lambda^2)
\end{align}
and therefore, we can relate the sums $y+x_1x_2$ and $y+\tilde{x}_1x_2$ to the residuals as
\begin{align}
    y + \tilde{x}_1x_2 = 2y - r_{k+1} + \mathcal{O}(\lambda^2) \\
    y + x_1x_2 = 2y - r_{k} + \mathcal{O}(\lambda^2)~.
\end{align}
This yields the following formula for the variation of regularization terms
\begin{align}
    g_{k} - g_{k+1} &= \frac{\lambda}{x_2^2} \left( 2y(r_{k+1} - r_{k}) + (r_k^2 - r_{k+1}^2)  + \mathcal{O}(\lambda^3)\right) \\
                    &= - 2r_{k+1}\delta r_k + \mathcal{O}(\lambda^3)~.
\end{align}
We have assumed that $\delta r_k$ is also of the order of magnitude $\lambda$, which is shown in the derivations below.
Finally, the variation in loss function can be written as
\begin{align}
    \delta f_k &= -2r_{k+1}\delta r_k + 2r_k\delta r_k - (\delta r_k)^2 + \mathcal{O}(\lambda^3) \\
               &= 2\delta r_k (r_{k} - r_{k+1}) - (\delta r_k)^2 + \mathcal{O}(\lambda^3) \\
               &= (\delta r_k)^2 + \mathcal{O}(\lambda^3)\label{eq:dfdrk}
\end{align}
To further detail the expression of $\delta r_k$, we may first relate $\tilde{x}_1$ with $x_1$ since
\begin{equation}
    \tilde{x}_1 = \frac{y}{x_2} \frac{x_2^2}{x_2^2 + \lambda} = \frac{y}{x_2}\left(1-\frac{\lambda}{x_2^2+\lambda}\right).
\end{equation}
Viewing $x_2$ as the least squares update from $x_1$, we further obtain


\begin{equation}
    \tilde{x}_1 = x_1 \frac{1-\frac{\lambda}{x_2^2 + \lambda}}{1-\frac{\lambda}{x_1^2 + \lambda}}
\end{equation}
Therefore we get
\begin{align}
    \delta r_k^2
    &= x_2^2 (x_1 - \tilde{x}_1)^2  \\
    &= x_1^2 x_2^2 \left(1 -\frac{1-\frac{\lambda}{x_1^2-\lambda}}{1-\frac{\lambda}{x_2^2+\lambda}} \right)^2 \\
    &= x_1^2x_2^2\lambda^2 \left(\frac{\frac{1}{x_2^2+\lambda} - \frac{1}{x_1^2+\lambda}}{1-\frac{\lambda}{x_2^2+\lambda}}\right)^2
\end{align}
The denominator can be neglected since it will generate terms in the order of $\lambda^4$ as soon as $\lambda$ is small with respect to $x_2^2$, which is our working hypothesis. This yields
\begin{align}
    \delta r_k^2
    &= x_1^2x_2^2\lambda^2 \left(\frac{x_1^2-x_2^2}{(x_1^2+\lambda)(x_2^2+\lambda)} \right)^2 + \mathcal{O}(\lambda^3) \\
    &= \frac{x_1^2x_2^2(x_1+x_2)^2}{(x_1^2+\lambda)^2(x_2^2+\lambda)^2}\lambda^2(x_1-x_2)^2 + \mathcal{O}(\lambda^3)
\end{align}
Let us introduce the errors $e_1,e_2$ from $x_1,x_2$ to their target $y-\lambda$:
\begin{align}
    x_1 = \sqrt{y-\lambda} + e_1 \\
    x_2 = \sqrt{y-\lambda} + e_2 
\end{align}
We may note that $e_1$ and $e_2$ are of opposite sign as soon as $k>1$, and for simplicity of presentation, we will assume that $e_1\approx -e_2 = e_k$.
We further assume that $k$ is large enough so that $e_1,e_2$ are small enough (of order of magnitude $\mathcal{O}(\lambda)$) to approximate $(x_1+x_2)^2$ as $4y$ 
and $(x_1+\lambda)^2,(x_2+\lambda)^2$ as $y$ up to an error $\mathcal{O}(\lambda)$. The term $x_1-x_2$ is of order $\mathcal{O}(\lambda)$. Therefore for large enough $k$,
\begin{align}
    \delta f_k &= y^2\frac{4y}{y^2y^2}\lambda^2(x_1-x_2)^2 +\mathcal{O}(\lambda^3) \\
    &= 4\frac{\lambda^2}{y}(x_1-x_2)^2 + \mathcal{O}(\lambda^3) \\
    &= 16\frac{\lambda^2}{y}e_k^2 + \mathcal{O}(\lambda^3)
\end{align}

We are left to characterize the convergence speed of $e_k$ towards zero. We can see that $e_k$ converges linearly with however a multiplicative constant that goes to one as $\frac{\lambda}{y}$ grows smaller. Indeed,
\begin{align}
    e_{k+1} &= \tilde{x}_1 - x^\ast \\
    &=  x_1 - x^\ast - x_1 \left( 1 - \frac{1-\frac{\lambda}{x_2^2 + \lambda}}{1-\frac{\lambda}{x_1^2 + \lambda}}  \right) \\
    &= x_1 - x^\ast - x_1\lambda\left( \frac{x_1^2-x_2^2}{(x_1^2+\lambda)(x_2^2+\lambda)}\right) + \mathcal{O}(\lambda^2) \\
    &= x_1 - x^\ast - \frac{2\lambda}{y}\left(x_1-x_2\right) + \mathcal{O}(\lambda^2)
\end{align}
where the last two lines are obtained by reusing the previous derivations for $\delta r_k$. We find that

\begin{equation}
    e_{k+1} = e_k - e_k 4\frac{\lambda}{y} + \mathcal{O}(\lambda^2)
\end{equation}
We see that $\frac{e_{k+1}}{e_k}$ is approximately constant, equal to $1-4\frac{\lambda}{y}$, which proves that ALS is a sublinear algorithm when $\frac{\lambda}{y}$ goes to zero.

\section{Convergence of iterates for \cref{alg:meta-Algo}}\label{AppendC}
In this supplementary section we discuss in details the convergence guarantees offered by~\cref{alg:meta-Algo} w.r.t the sequence of objective functions and the sequence of iterates. 

For convenience, let us briefly recall the formulation of our optimization problems as a general class of multiblock  optimization problems; denoting $z=(\{X_i[:,q]\}_{i \leq n, q \leq r})$ (concatenation of the variables) and $\phi$ the objective function from Problem~\eqref{eq:hrsi}, we consider the problems
\begin{equation}
    \label{eq:GenOptiProblem}
    \min_{z_i\in\mathcal{Z}_i} \phi(z_1,\ldots,z_s),
\end{equation}
where $z$ can be decomposed into $s$ blocks $ z=(z_1,\ldots,z_s)$, $\mathcal{Z}_i\subseteq\mathbb{R}^{m_i}$  is a closed convex set, $\mathcal{Z}=\mathcal{Z}_1\times\ldots\mathcal{Z}_s \subseteq \text{dom} (\phi)$, $\phi:\mathbb{R}^m\to\mathbb{R}\cup\{+\infty\}$, $m=\sum_{i=1}^s m_i$, is a continuous differentiable function over its domain, and $\phi$ is lower bounded and has bounded level sets. 

In~\cref{theo:monotoneNI} we show that~\cref{alg:meta-Algo} generates a sequence of iterates $\{z^i\}_{i=1}^{\infty}$ such that the sequence of objective values $\{ \phi(z^i) \}$ is monotone non-increasing.
First, let us recall the formal definition of a  tight majorizer for convenience.

\begin{definition}\label{def:majorizer}
A function $\Bar{\phi}_i: \mathcal{Z}_i \times \mathcal{Z} \rightarrow \mathbb{R}$ is said to be a tight majorizer of $\phi(z)$ if
\begin{enumerate}
    \item $\Bar{\phi}_i(z_i,z)=\phi(z), \forall z \in \mathcal{Z}$,
    \item $\Bar{\phi}_i(y_i,z) \geq \phi(z_1,...,z_{i-1},y_i,z_{i+1},...,z_s), \forall y_i \in \mathcal{Z}_i, \forall z \in \mathcal{Z}$,
\end{enumerate}
where $z_i$ denotes the $i$-th block of variables of $z$ with $1 \leq i \leq s$.
\end{definition}

At iteration $k+1$,~\cref{alg:meta-Algo} fixes the last values of block of variables $j \neq i$ and updates block $z_i$ as follows:
$$z_i^{(k+1)} = \argmin_{z_i \in \mathcal{Z}_i}\Bar{\phi}_i(z_i,z_1^{(k+1)},...,z_{i-1}^{(k+1)},z_{i}^{(k),\ast},...,z_{s}^{(k),\ast}) $$
where $z_{s}^{(k),\ast}$ denotes the $s$-th block of variable computed at iteration $k$ and optimally scaled using  Equations \eqref{eq:scale_aast_final}. After updating all the blocks as described above,~\cref{alg:meta-Algo} performs optimal scaling on each block. We first present~\cref{prop:nonIncreaseRB} which will be useful for the proof of~\cref{theo:monotoneNI}. 
\begin{proposition}\label{prop:nonIncreaseRB}
    Let $\{X_i\}_{i \leq n}$ be the current decomposition, the balanced columns $X_i[:,q]^\ast$ computed according to~\cref{alg:balancing-Algo} necessarily decrease the objective function value of Problem~\eqref{eq:GenOptiProblem}.
\end{proposition}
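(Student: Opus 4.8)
The plan is to follow the two branches of \cref{alg:balancing-Algo} and to combine scale-invariance (A1), positive homogeneity (A2) and positive-definiteness (A3) with the optimality of the closed-form scaling established in \cref{app:balancing_eq}. Since the objective in \eqref{eq:hrsi} is separable over the column index and $f$ is invariant under the rescaling of any single column, it suffices to fix one index $q\le r$ and check that neither branch of the algorithm increases the objective; summing the resulting per-column inequalities over $q$ then gives the claim.

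In the generic case all columns $X_i[:,q]$, $i\le n$, are nonzero, so $a_i:=\mu_i g_i(X_i[:,q])>0$ by A3 and the balancing branch outputs $\tilde X_i[:,q]=(\beta_q/(p_i a_i))^{1/p_i} X_i[:,q]=\lambda_i^\ast X_i[:,q]$ with $\prod_{i\le n}\lambda_i^\ast=1$, the latter being precisely the primal feasibility used to identify $\beta_q$ in \cref{app:balancing_eq}. I would then argue in two steps. First, viewing the update as left-multiplying each $X_i$ by the diagonal matrix equal to the identity except for the entry $\lambda_i^\ast$ in position $(q,q)$, the product of these matrices is $I_r$, so A1 gives $f(\{\tilde X_i\}_{i\le n})=f(\{X_i\}_{i\le n})$, while the regularization terms of the columns $p\ne q$ are left untouched. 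Second, by A2 the $q$-th regularization term becomes $\sum_{i\le n}\mu_i g_i(\tilde X_i[:,q])=\sum_{i\le n}(\lambda_i^\ast)^{p_i}a_i$, which is exactly the optimal value of Problem~\eqref{eq:scale_cost}; since $\lambda_i=1$ for all $i$ is a feasible point of that problem, this optimal value is at most $\sum_{i\le n}a_i=\sum_{i\le n}\mu_i g_i(X_i[:,q])$. Hence the total objective does not increase, and strictly decreases unless the input columns are already optimally balanced.

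In the degenerate case $X_{i_0}[:,q]=0$ for some $i_0\le n$, the algorithm sets all $q$-th columns to zero; the $q$-th regularization term then drops from $\sum_{i\ne i_0}\mu_i g_i(X_i[:,q])\ge 0$ to $0$, so it only remains to check that $f$ does not increase. The key observation is a limiting argument: for any $i_1\ne i_0$ and any $\lambda>0$, rescaling the $q$-th column of $X_{i_1}$ by $\lambda$ and that of $X_{i_0}$ by $1/\lambda$ leaves $X_{i_0}[:,q]=0$ unchanged and, by A1, leaves $f$ unchanged; letting $\lambda\to 0$ and using the continuity of $f$ shows that setting the $q$-th column of $X_{i_1}$ to zero does not change $f$. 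Iterating this over all $i\ne i_0$ disposes of this case. I expect the main (minor) obstacle to be exactly this last step: A1 only grants invariance under strictly positive column scalings, so the reduction to a zero column is reached only in the limit, and it is the continuity of $f$ together with A1 --- not A1 alone --- that justifies it.
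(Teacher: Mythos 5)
Your proof is correct and follows essentially the same route as the paper: the balanced scales are the optimal solution of the columnwise scaling problem while the identity scaling is a feasible point, so the objective cannot increase (the paper writes this in one chain of inequalities over all columns at once via diagonal matrices $\Lambda_i$). Your explicit treatment of the zero-column branch, via continuity of $f$ and a limit $\lambda\to 0$ of admissible positive scalings, is a welcome extra detail that the paper's proof leaves implicit.
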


\begin{proof}
    The proof directly follows the results presented in~\cref{subsec:balancingPrinciple}, indeed we have:
    \begin{equation}
        \begin{aligned}
            \phi(\{X_i\}_{i \leq n}) & \underset{\forall i\leq n,\; \Lambda_i=I_{r\times r}}{=}f(\{X_i\Lambda_i\}_{i\leq n}) + \sum_{i=1}^{n} \mu_{i} \sum_{q=1}^{r} g_i(\Lambda_i[q,q]X_i[:,q]) \\
            & \geq f(\{X_i\Lambda^\ast_i\}_{i\leq n}) + \sum_{i=1}^{n} \mu_{i} \sum_{q=1}^{r} g_i(\Lambda^\ast_i[q,q]X_i[:,q]) = \phi(\{X_i^\ast\}_{i \leq n}).
        \end{aligned}
    \end{equation}
    where $\Lambda^\ast_i[q,q]=\left(\frac{\beta_q}{p_i\mu_ig_i(X_i[:,q])}\right)^{1/p_i}$ from Equations \eqref{eq:scale_aast_final} with $\beta_q$ as defined in~\cref{prop:opt_scal}. The second inequality holds by definition of $\Lambda_i^\ast[q,q]$, that is the optimal and unique solution of the scaling Problem \eqref{eq:scale_pb_big}.
\end{proof}

\begin{theorem}[also~\cref{theo:monotoneNI_main}]\label{theo:monotoneNI} Let $z^{(k)}=(\{X_i^{(k)}[:,q]\}_{i \leq n, q \leq r}) \geq 0$, and let $\Bar{\phi}_i$ be tight majorizers, as per~\cref{def:majorizer}, for $\phi: \mathcal{Z}\rightarrow \mathbb{R}_+$ from Problem~\eqref{eq:GenOptiProblem},  with $\mathcal{Z}:=\mathcal{Z}_1\times\ldots\mathcal{Z}_s \subseteq \text{dom} (\phi)$ and $1 \leq i \leq s=nr$. Then $\phi$ is monotone non-increasing under the updates of~\cref{alg:meta-Algo}. Moreover, the sequences $\{ \phi(z^{(k),\ast}) \}$ and $\{ \phi(z^{(k)}) \}$ converge.
\end{theorem}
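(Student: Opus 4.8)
The plan is to establish the monotone non-increase by tracking the objective value through the two types of operations performed within one outer iteration of~\cref{alg:meta-Algo}: the sequence of block MM updates, and the optimal balancing step. Since both operations are shown never to increase $\phi$, and $\phi$ is bounded below by zero (as $f$ is nonnegative and the regularizers are nonnegative), a monotone bounded sequence converges, which is all that is asserted.

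\textbf{Step 1: MM updates do not increase $\phi$.} Consider the update of block $z_i$ at iteration $k+1$, holding the other (already-updated or balanced) blocks fixed. Write the partially-updated iterate before this block update as $w$ and after as $w'$, differing only in the $i$-th block. By property~(1) of~\cref{def:majorizer}, $\Bar{\phi}_i(w_i, w) = \phi(w)$; since $w'_i$ is a minimizer of $\Bar{\phi}_i(\cdot, w)$ over $\mathcal{Z}_i$, we have $\Bar{\phi}_i(w'_i, w) \leq \Bar{\phi}_i(w_i, w) = \phi(w)$; and by property~(2), $\phi(w') \leq \Bar{\phi}_i(w'_i, w)$. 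Chaining these gives $\phi(w') \leq \phi(w)$. Iterating over all $s = nr$ blocks in the inner loop shows that the full block-update sweep does not increase $\phi$: $\phi(\text{post-sweep}) \leq \phi(z^{(k),\ast})$ (or $\leq \phi(z^{(1)})$ on the first iteration after the initial rescaling, which itself does not increase $\phi$ since $\eta$ is chosen as a minimizer of the rescaled objective).

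\textbf{Step 2: Balancing does not increase $\phi$.} This is exactly the content of~\cref{prop:nonIncreaseRB}: applying~\cref{alg:balancing-Algo} columnwise replaces $\{X_i[:,q]\}$ by $\{\Lambda^\ast_i[q,q] X_i[:,q]\}$ where $\Lambda^\ast_i[q,q]$ solves the optimal scaling problem~\eqref{eq:scale_pb_big}, whose feasible set contains the identity; hence $\phi(z^{(k+1),\ast}) \leq \phi(\text{post-sweep})$. Combining with Step~1, $\phi(z^{(k+1),\ast}) \leq \phi(z^{(k),\ast})$ for all $k$, so $\{\phi(z^{(k),\ast})\}$ is non-increasing. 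Since $\phi \geq 0$ on $\mathcal{Z}$, this sequence is bounded below, hence convergent by the monotone convergence theorem. The intermediate (post-sweep, pre-balancing) iterates $z^{(k)}$ satisfy $\phi(z^{(k),\ast}) \leq \phi(z^{(k)}) \leq \phi(z^{(k-1),\ast})$ (the first inequality from Step~2, the second from Step~1), so by the squeeze argument $\{\phi(z^{(k)})\}$ converges to the same limit.

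\textbf{Main obstacle.} The argument is essentially bookkeeping once~\cref{def:majorizer} and~\cref{prop:nonIncreaseRB} are in hand; the only subtlety I would be careful about is the feasibility/well-definedness of the MM minimizer $\argmin_{z_i \in \mathcal{Z}_i} \Bar{\phi}_i(\cdot, w)$ and of the balancing step — in particular when an $\epsilon$-lower-bound is imposed (the $\beta < 2$ case of~\cref{sec:specificity_beta}), one must use the version of balancing that stays feasible, or note that projection onto $\{x \geq \epsilon\}$ after the MM step still satisfies the majorizer inequality by convexity of $\Bar{\phi}_i$ in its first argument. But for the monotone-value statement as phrased, it suffices that each sub-update returns a point with no larger objective, which the MM structure guarantees regardless of uniqueness; uniqueness is only needed later for~\cref{theo:conv_iterates_main}.
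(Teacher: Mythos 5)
Your proposal is correct and follows essentially the same route as the paper's proof: the same chain of inequalities (tightness of the majorizer, minimization over the block, then the upper-bound property) for each of the $s=nr$ block updates, followed by an appeal to~\cref{prop:nonIncreaseRB} for the balancing step, and convergence by monotonicity plus boundedness below. Your additional remarks on the initial rescaling and the $\epsilon$-thresholding subtlety go slightly beyond what the paper's proof addresses but do not change the argument.
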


\begin{proof}
    From~\cref{def:majorizer}, we have:
    \begin{equation*}
        \begin{aligned}
            \phi(z^{(k),\ast}) & = \Bar{\phi}_1(z_1^{(k),\ast},z_1^{(k),\ast},...,z_{s}^{(k),\ast})  \geq  \Bar{\phi}_1(z_1^{(k+1)},z_1^{(k),\ast},...,z_{s}^{(k),\ast}) \\
            & \geq \phi(z_1^{(k+1)},z_2^{(k),\ast},...,z_{s}^{(k),\ast}) = \Bar{\phi}_2(z_2^{(k),\ast},z_1^{(k+1)},z_2^{(k),\ast},...,z_{s}^{(k),\ast})\\
            & \geq  \Bar{\phi}_2(z_2^{(k+1)},z_1^{(k+1)},z_2^{(k),\ast},...,z_{s}^{(k),\ast}) \geq \phi(z_1^{(k+1)},z_2^{(k+1)},...,z_{s}^{(k),\ast}) \geq ... \\
            & \geq \phi(z_1^{(k+1)},z_2^{(k+1)},...,z_{s}^{(k+1)}) \underset{\cref{prop:nonIncreaseRB}}{\geq} \phi(z_1^{(k+1),\ast},z_2^{(k+1),\ast},...,z_{s}^{(k+1),\ast}) 
        \end{aligned}
    \end{equation*}
    where $z_i^{(k+1),\ast}$ are computed using Equations \eqref{eq:scale_aast_final} for $1 \leq i \leq s=nr$.
    
    By assumption, $\phi$ from Problem~\eqref{eq:GenOptiProblem} is a finite nonnegative sum of functions that are all bounded below onto the feasible set $\mathcal{Z}$, hence $\phi$ is also bounded below onto $\mathcal{Z}$. Therefore, both the non-increasing monotone sequences $\{ \phi(z^{(k),\ast}) \}$ and $\{ \phi(z^{(k)}) \}$ converge.
\end{proof}

\paragraph{Convergence of the sequence of iterates}

In the next Theorem, we establish the convergence of the iterates generated by~\cref{alg:meta-Algo} towards coordinate-wise minimum, or critical points of Problem~\eqref{eq:GenOptiProblem} under mild assumptions.
\begin{theorem}[also~\cref{theo:conv_iterates_main}]\label{theo:conv_iterates}
    Let majorizers $\Bar{\phi}_i(y_i,z)$, as defined in~\cref{def:majorizer}, be quasi-convex in $y_i$ for $i=1,...,s$. Additionally, assume that Assumption~\ref{assumptions: majorizer} are satisfied. Consider the subproblems in Equation~\eqref{eq:subproblems}, and assume that both these subproblems and the optimal scaling, obtained using Equations~\eqref{eq:scale_aast_final}, have unique solutions for any point $$(z_1^{(k+1)},...,z_{i-1}^{(k+1)},z_{i}^{(k),\ast},...,z_{s}^{(k),\ast}) \in \mathcal{Z}.$$
    Under these conditions, every limit point $z^{\infty}$ of the iterates $\{z^{(k)}\}_{k \in \mathbb{N}}$ generated by~\cref{alg:meta-Algo} is a coordinate-wise minimum of Problem~\eqref{eq:GenOptiProblem}. Furthermore, if $\phi(\cdot)$ is regular at $z^{\infty}$, then $z^{\infty}$ is a stationary point of Problem~\eqref{eq:GenOptiProblem}.

\end{theorem}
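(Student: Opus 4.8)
The plan is to adapt the BSUM convergence analysis of~\cite[Theorem~2]{RHL_BSUM} to our alternating MM scheme, the only genuinely new ingredient being the optimal balancing step, which I will treat as an auxiliary operation that is harmless because, by~\cref{prop:nonIncreaseRB}, it can only decrease $\phi$, and, by hypothesis, it is single-valued with the continuous closed form of~\cref{prop:opt_scal}. I write $z^{(k),\ast}$ for the post-balancing iterates, $z^{(k+1)}$ for the pre-balancing iterate of cycle $k+1$, and $(z_1^{(k+1)},\dots,z_i^{(k+1)},z_{i+1}^{(k),\ast},\dots,z_s^{(k),\ast})$ for the partially updated points along the block sweep. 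First I would establish boundedness and fix a subsequence: since $\phi$ has bounded level sets and the cost is non-increasing along~\cref{alg:meta-Algo} (\cref{theo:monotoneNI}), every iterate mentioned above lies in $\{\phi\le\phi(z^{(1)})\}$, which is bounded. Given a limit point $z^{\infty}$, pick a subsequence with $z^{(k_j),\ast}\to z^{\infty}$; after finitely many further extractions all the partially updated points, $z^{(k_j+1)}$ and $z^{(k_j+1),\ast}$ converge along $(k_j)$. By~\cref{theo:monotoneNI} the cost converges to some $\phi^{\infty}$, and by continuity of $\phi$ each of these convergent sequences has cost-limit $\phi^{\infty}$; in particular $\phi(z^{\infty})=\phi^{\infty}$.

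Next I would collapse the monotonicity chain to get block-by-block convergence of the sweep. Reading the chain of inequalities in the proof of~\cref{theo:monotoneNI} between $\phi(z^{(k_j),\ast})$ and $\phi(z^{(k_{j+1}),\ast})$, every intermediate quantity is squeezed to $\phi^{\infty}$; in particular $\bar\phi_1(z_1^{(k_j+1)},z^{(k_j),\ast})\to\phi^{\infty}$. Taking any subsequential limit $\hat z_1$ of the bounded sequence $z_1^{(k_j+1)}$, continuity of $\bar\phi_1$ gives $\bar\phi_1(\hat z_1,z^{\infty})=\phi^{\infty}$, and passing to the limit in $\bar\phi_1(z_1^{(k_j+1)},z^{(k_j),\ast})\le\bar\phi_1(y_1,z^{(k_j),\ast})$ for all $y_1\in\mathcal{Z}_1$ shows $\hat z_1$ minimizes $\bar\phi_1(\cdot,z^{\infty})$ over $\mathcal{Z}_1$, with value $\phi^{\infty}$. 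Tightness of the majorizer (\cref{def:majorizer}) gives $\bar\phi_1(z_1^{\infty},z^{\infty})=\phi(z^{\infty})=\phi^{\infty}$, so $z_1^{\infty}$ attains the same minimum; uniqueness of the MM update then forces $\hat z_1=z_1^{\infty}$, hence $z_1^{(k_j+1)}\to z_1^{\infty}$. Iterating over $i=2,\dots,s$, each subproblem being anchored at a point that now converges to $z^{\infty}$, yields $z_i^{(k_j+1)}\to z_i^{\infty}$ for all $i$, so $z^{(k_j+1)}\to z^{\infty}$. Finally, since every $z^{(k_j),\ast}$ is already balanced and the balancing map is single-valued with the continuous closed form of~\cref{prop:opt_scal}, balancedness passes to the limit, so $z^{(k_j+1),\ast}\to z^{\infty}$, $z^{\infty}$ is itself balanced, and the balancing step introduces no spurious limit points.

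From here the conclusion follows as in BSUM. For each block $i$, the previous step shows $z_i^{\infty}\in\argmin_{y_i\in\mathcal{Z}_i}\bar\phi_i(y_i,z^{\infty})$; since $\mathcal{Z}_i$ is convex and $\bar\phi_i(\cdot,z^{\infty})$ is quasi-convex and directionally differentiable, the first-order condition $\bar\phi_i'(z^{\infty};z_i^{\infty},y_i-z_i^{\infty})\ge 0$ holds for all $y_i\in\mathcal{Z}_i$, and the assumed coincidence of the directional derivatives of $\bar\phi_i$ and $\phi$ at the anchor point promotes this to $\phi'(z^{\infty};(0,\dots,y_i-z_i^{\infty},\dots,0))\ge 0$. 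This is precisely coordinate-wise minimality of $z^{\infty}$ for Problem~\eqref{eq:GenOptiProblem}. Summing these per-block inequalities, which is legitimate because $\mathcal{Z}=\prod_i\mathcal{Z}_i$, and invoking regularity of $\phi$ at $z^{\infty}$ in the sense of Tseng upgrades them to $\phi'(z^{\infty};d)\ge 0$ along every feasible direction $d$, i.e.\ $z^{\infty}$ is a stationary point; for the smooth $\phi$ considered here regularity is automatic.

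The hard part will be precisely the interaction of the balancing step with the BSUM machinery: unlike an ordinary block update it is a coupled rescaling of all blocks, so one must verify both that it never undoes the cost decrease the rest of the argument relies on (supplied by~\cref{prop:nonIncreaseRB}) and that, under its uniqueness and continuity, it neither creates limit points that evade the analysis nor breaks the block-by-block convergence of the second step. A secondary technical point, already flagged in~\cref{sec:specificity_beta}, is that the directional-differentiability hypothesis fails at the boundary when $\beta<2$; this is why the $\epsilon$-thresholded updates~\eqref{eq:modifiedUpdates} are used, and with that modification the conclusions hold for the $\epsilon$-perturbed problem, at the price of a possibly suboptimal scaling.
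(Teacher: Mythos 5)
Your proof is correct and follows the same overall architecture as the paper's: adapt the BSUM analysis of \cite{RHL_BSUM}, use \cref{prop:nonIncreaseRB} to splice the balancing step into the monotonicity chain of \cref{theo:monotoneNI}, establish that the whole sweep $z^{(k_j+1)}$ converges to $z^{\infty}$, and then pass to the limit in the per-block minimality conditions before invoking the coincidence of directional derivatives and the regularity hypothesis. The one place where you genuinely diverge is the core step showing $z_1^{(k_j+1)}\to z_1^{\infty}$. The paper argues by contradiction: it assumes $\|z_1^{(k_j+1)}-z_1^{(k_j),\ast}\|\geq\bar\gamma>0$, extracts a limit $\bar s$ of the normalized differences $s^{(k_j)}$, and uses quasi-convexity of $\bar\phi_1(\cdot,z^{(k_j),\ast})$ along the segment joining $z_1^{(k_j),\ast}$ to $z_1^{(k_j+1)}$ to show that the displaced point $z_1^{\infty}+\epsilon\bar\gamma\bar s$ also attains the minimum of $\bar\phi_1(\cdot,z^{\infty})$, contradicting uniqueness. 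You instead squeeze $\bar\phi_1(z_1^{(k_j+1)},z^{(k_j),\ast})\to\phi^{\infty}$ out of the monotonicity chain, take a subsequential limit $\hat z_1$ of the bounded iterates, and identify $\hat z_1=z_1^{\infty}$ directly from continuity, tightness of the majorizer, and uniqueness of the subproblem minimizer. Both routes hinge on the uniqueness assumption; yours is shorter and, notably, never actually invokes quasi-convexity, so that hypothesis becomes superfluous in your version (the paper needs it only to control the interpolated point in the contradiction argument). Your handling of the balancing step (harmless because single-valued and cost-decreasing, with balancedness passing to the limit) and your final passage from block-wise minimality to coordinate-wise minimality and then, under regularity, to stationarity match the paper's; your closing remarks on the $\epsilon$-thresholding and the failure of directional differentiability for $\beta<2$ are commentary the paper defers to its Section on near-zero entries rather than part of the proof.
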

The proof is handled in~\cref{app:proof_iterates} below, and is an adaptation of the convergence to stationary points obtained in \cite[Theorem~2]{RHL_BSUM} for  block successive upperbound minimization (BSUM) framework.

\section{Convergence proof for the iterates of the Meta-Algorithm}\label{app:proof_iterates}
To better analyze the convergence of the iterates generated by~\cref{alg:meta-Algo}, it is important to review some key concepts presented in \cite{RHL_BSUM}:
\begin{itemize}
    \item \textit{Directional derivative:} For a function $\phi: \mathcal{Z} \rightarrow \mathbb{R}$, where $\mathcal{Z} \subseteq \mathbb{R}^m$ is a convex set, the directional derivative of $\phi$ at point $z$ along a vector $d$ is defined as:
     \begin{equation} \phi^{'}(z;d) := \lim_{\lambda \rightarrow 0} \frac{\phi(z + \lambda d) - \phi(z)}{\lambda} \end{equation}
    \item \textit{Stationary points of a function:} For a function $\phi: \mathcal{Z} \rightarrow \mathbb{R}$, where $\mathcal{Z} \subseteq \mathbb{R}^m$ is a convex set, the point $z^\star$ is considered a stationary point of $\phi(.)$ if $\phi^{'}(z^\star;d) \geq 0$ for all $d$ such that $z^\star + d \in \mathcal{Z}$.
    \item \textit{Quasi-convex function:} A function $\phi$ is said to be quasi-convex if it satisfies the following inequality:
\begin{equation} \phi(\theta x + (1-\theta) y) \leq \max ( \phi(x), \phi(y)) \quad \forall \theta \in (0,1), \quad \forall x,y \in \text{dom} \phi \end{equation}
    \item \textit{Regularity of a function at a point}: The function $\phi:\mathcal{Z} \rightarrow \mathbb{R}$ is regular at the point $z \in \text{dom} \phi$ \textit{w.r.t.} coordinates $m_1,...,m_s$ if $\phi^{'}(z;d) \geq 0$ for all $d=(d_1,...,d_s)$ with $\phi^{'}(z;d_i^0) \geq 0$, where $d_i^0:=(0,...,d_i,...,0)$ and $d_i \in \mathbb{R}^{m_i}$ for all $i$.
    \item \textit{Coordinate-wise minimum of a function} $z \in \text{dom} \phi \in \mathbb{R}^m$ is the coordinate-wise minimum of $\phi$ w.r.t. the coordinates in $\mathbb{R}^{m_1},...,\mathbb{R}^{m_s}$ if
    $$\phi(z + d^0_i) \geq \phi(z) \quad \forall d_i \in \mathbb{R}^{m_i} \quad \text{with } z+d^0_i \in \text{dom} \phi \quad \forall i$$
    or equivalently:
    $$\phi^{'}(z;d^0_i) \geq 0 \quad \forall d_i \in \mathbb{R}^{m_i} \quad \text{with } z+d^0_i \in \text{dom} \phi \quad \forall i$$
    \item \textit{Block Successive MM:}~\cref{alg:meta-Algo} updates only one block of variables at a time. Specifically, in iteration $k+1$, the selected block (referred to as block $i$) is updated by solving the subproblem:
\begin{equation}\label{eq:subproblems} \begin{aligned} & z_i^{(k+1)} = \argmin_{z_i \in \mathcal{Z}_i}\Bar{\phi}_i(z_i,z_1^{(k+1)},...,z_{i-1}^{(k+1)},z_{i}^{(k),\ast},...,z_{s}^{(k),\ast})  \end{aligned} \end{equation}

where $\Bar{\phi}_i(.,z)$ is a tight majorizer of $\phi(z)$ as defined in Definition \cref{def:majorizer}.
   
\end{itemize}

To discuss the convergence of~\cref{alg:meta-Algo}, additional assumptions about the majorizers $\Bar{\phi}_i$ are necessary. These assumptions are as follows:

\begin{assumption}\label{assumptions: majorizer} 
\hfill
\begin{enumerate} \item $\Bar{\phi}_i^{'}(y_i,z;d_i)\rvert_{y_i=z_i}=\phi^{'}(z;d_i^0) \quad \forall d_i^0=(0,...,d_i,...,0)$ such that $z_i + d_i \in \mathcal{Z}_i$, $\forall i$. This assumption implies that the directional derivatives of the majorizers and the objective function coincide at the current iterate for each block of variables. \item $\Bar{\phi}_i(y_i,z)$ is continuous in $(y_i,z) \quad \forall i$. \end{enumerate}
\end{assumption}

Following the approach outlined in~\cite{RHL_BSUM}, the convergence outcomes of~\cref{alg:meta-Algo} are applicable in the asymptotic regime as $k \rightarrow \infty$. This convergence relies on the aforementioned assumptions, which include the directional differentiability and the continuity of the function $\phi$. In addition to this, the proof leverages two critical assumptions: firstly, the quasi-convexity of the majorizers $\Bar{\phi}_i(y_i,z)$, and secondly, the uniqueness of their minimizers.
Here-under, we give the proof of~\cref{theo:conv_iterates}.

\begin{proof}
    We follow the main of steps of the general proof of BSUM~\cite{RHL_BSUM}, and slightly adapt them to the specificity of our Algorithm. First, by~\cref{theo:monotoneNI}, we have:
    \begin{equation}\label{eq:NonIncreasingSeq}
        \phi(z^{(0)}) \geq \phi(z^{(1)}) \geq  \phi(z^{(1),\ast}) \geq \phi(z^{(2)}) \geq  \phi(z^{(2),\ast})\geq \cdots
    \end{equation}
    Consider a limit point $z^{\infty}$, by combining Equation \eqref{eq:NonIncreasingSeq} and the continuity of $\phi$, we have:
    \begin{equation}\label{eq:limitPointNNincSeq}
        \lim_{k \rightarrow \infty} \phi(z^{(k)}) = \phi(z^{\infty}) = \lim_{k \rightarrow \infty} \phi(z^{(k),\ast})
    \end{equation}

    In the ensuing discussion, we focus on the subsequence denoted as $\{ z^{(k_j),\ast}\}$, which converges to the limit point $z^{\infty}$. Moreover, given the finite number of variable blocks, there must exist at least one block that undergoes infinite updates within the subsequences $\{k_j\}$. Without loss of generality, we assume block $s$ is updated infinitely often. Consequently, by constraining our attention to these subsequences, we can express this as:

    \begin{equation}
        z_s^{(k_j +1)} = \argmin_{z_s \in \mathcal{Z}_s}\Bar{\phi}_s(z_s,z_1^{(k_j +1)},z_{2}^{(k_j+1)},...,z_{s}^{(k_j),\ast})
    \end{equation}

    To establish the convergence $z^{(k_j + 1)} \rightarrow z^{\infty}$ as $j \rightarrow \infty$, or equivalently, to demonstrate $z_i^{(k_j + 1)} \rightarrow z_i^{\infty}$ for all $i$ as $j \rightarrow \infty$, we consider the block $i=1$. Assume the contrary, supposing $z_1^{(k_j + 1)}$ does not converge to $z_1^{\infty}$. Consequently, by narrowing our focus to a subsequence, there exists $\Bar{\gamma} > 0$ such that:
    \[
    \Bar{\gamma} \leq \gamma^{(k_j)} = \|z_1^{(k_j +1)} - z_1^{(k_j),\ast} \| \quad \forall k_j
    \]

    This implies that as $j \rightarrow \infty$, $z_1^{(k_j +1)}$ consistently deviates from $z_1^{(k_j),\ast}$, which converges to $z_1^{\infty}$. Consequently, by transitivity, $z_1^{(k_j +1)}$ does not converge to $z_1^{\infty}$.

    We introduce the sequence $s^{(k_j)}$, defined as the normalized difference between $z_1^{(k_j +1)}$ and $z_1^{(k_j),\ast}$:

   \begin{equation}\label{eq:Skj_def}
         s^{(k_j)}:=\frac{z_1^{(k_j +1)} - z_1^{(k_j),\ast}}{\gamma^{(k_j)}}
    \end{equation}

    Notably, we observe that $\| s^{(k_j)} \| = 1$ holds for all $\{k_j\}$; thus, $s^{(k_j)}$ is a member of a compact set and consequently possesses a limit point denoted as $\bar{s}$. By further narrowing our focus to a subsequence that converges to $\bar{s}$, we can express the ensuing set of inequalities:

\begin{equation}\label{eq:Ineqseq1}
    \begin{aligned}
        \phi(z^{(k_{j+1}),\ast}) &\leq \phi(z^{(k_{j+1})}) \leq \phi(z^{(k_{j}+1),\ast}) \leq \phi(z^{(k_{j}+1)})
    \end{aligned}
\end{equation}
    which hold thanks to~\cref{eq:NonIncreasingSeq} (consequence of~\cref{theo:monotoneNI}) and given that $k_{j+1} \geq k_{j}+1$. Moreover :
    \begin{equation}\label{eq:Ineqseq2}
        \begin{aligned}
            \phi(z^{(k_{j}+1)}) & \leq \bar{\phi}_1(z_1^{(k_{j}+1)},z^{(k_{j}),\ast}) \quad \text{(\cref{def:majorizer})}\\
                           & = \bar{\phi}_1( z_1^{(k_j),\ast} + \gamma^{(k_j)} s^{(k_j)}      ,z^{(k_{j}),\ast}) \quad \text{(Using \eqref{eq:Skj_def})} \\
                           & \leq \bar{\phi}_1( z_1^{(k_j),\ast} + \epsilon \Bar{\gamma} s^{(k_j)}      ,z^{(k_{j}),\ast}) \quad \text{($\epsilon \in [0,1]$ and $z_1^{(k_j + 1)}= \argmin \bar{\phi}_1(.,z^{(k_{j}),\ast})$)} \\
                           & := \bar{\phi}_1( z_1^{(k_j),\ast} + \theta (z_1^{(k_j +1)} - z_1^{(k_j),\ast})      ,z^{(k_{j}),\ast}) \quad \text{(With $\theta=\epsilon \frac{\Bar{\gamma}}{\gamma^{(k_j)}} \in [0,1]$)} \\
                           & \leq \max \left(\bar{\phi}_1( z_1^{(k_j +1)} ,z^{(k_{j}),\ast}) , \bar{\phi}_1( z^{(k_{j}),\ast},z^{(k_{j}),\ast}) \right) \quad \text{(quasi-convexity of $ \bar{\phi}_1(.,z^{(k_{j}),\ast})$)} \\
                           & = \bar{\phi}_1( z^{(k_{j}),\ast},z^{(k_{j}),\ast}) \quad \text{(by definition of the arg min $z_1^{(k_j +1)}$)} \\
                           & = \phi(z^{(k_{j}),\ast}) \quad \text{(Majorizer is tight at $z^{(k_{j}),\ast}$ )}
        \end{aligned}
    \end{equation}
    Letting $j \rightarrow \infty$ and combining Equations \eqref{eq:Ineqseq1} and \eqref{eq:Ineqseq2}, we have:
    \begin{equation}
        \phi(z^{\infty}) \leq \Bar{\phi}_1(z_1^{\infty} + \epsilon \Bar{\gamma} \Bar{s}) \leq \phi(z^{\infty})
    \end{equation}
    or equivalently:
    \begin{equation}\label{eq:setLimitPoint}
        \phi(z^{\infty}) = \Bar{\phi}_1(z_1^{\infty} + \epsilon \Bar{\gamma} \Bar{s})
    \end{equation}
    Furthermore:
    \begin{equation}
        \begin{aligned}
            \bar{\phi}_1(z_1^{(k_{j+1}),\ast},z^{(k_{j+1}),\ast}) = \phi(z^{(k_{j+1}),\ast})  \leq \phi(z^{(k_{j}+1),\ast}) & \leq \phi(z^{(k_{j}+1)}) \\
            & \leq \bar{\phi}_1(z_1^{(k_{j}+1)},z^{(k_{j}),\ast}) \\
            & \leq \bar{\phi}_1(z_1,z^{(k_{j}),\ast}) \quad \forall z_1 \in \mathcal{Z}_1
        \end{aligned}
    \end{equation}
    Thanks to the continuity of majorizers, and letting $j \rightarrow \infty$, we obtain:
    $$\bar{\phi}_1(z_1^{\infty},z^{\infty}) \leq \bar{\phi}_1(z_1,z^{\infty}) \quad \forall z_1 \in \mathcal{Z}_1 $$
    which implies that $z_1^{\infty}$ is the minimizer of $\bar{\phi}_1(.,z^{\infty})$. By hypothesis, we assumed that the minimizer of each majorizer is unique, which is a contradiction with Equation \eqref{eq:setLimitPoint}. Therefore, the contrary assumption is not true, i.e., $z^{(k_j + 1)} \rightarrow z^{\infty}$ for $j \rightarrow \infty$.

    Since $z_1^{(k_j + 1 )} = \argmin_{z_1 \in \mathcal{Z}_1 } \Bar{\phi}_1 (z_1, z^{(k_{j}),\ast})$, then we have:
    $$\Bar{\phi}_1 (z_1^{(k_j + 1 )}, z^{(k_{j}),\ast}) \leq \Bar{\phi}_1 (z_1, z^{(k_{j}),\ast}) \quad \forall z_1 \in \mathcal{Z}_1$$

    The computation of the limit $j \rightarrow \infty$ gives:
    $$\Bar{\phi}_1 (z_1^{\infty}, z^{\infty}) \leq \Bar{\phi}_1 (z_1, z^{\infty}) \quad \forall z_1 \in \mathcal{Z}_1 $$

    The rest of the proof is identical to \cite{RHL_BSUM}, we detail the remaining steps for the sake of completeness.
    The above inequality implies that all the directional derivatives of $\Bar{\phi}_1(.,z^{\infty})$ w.r.t. block 1 of variables, and evaluated at $z_1^{\infty}$ are nonnegative, that is:
    $$\Bar{\phi}^{'}(z_1,z^{\infty};d_1)\bigg\rvert_{z_1 = z_1^{\infty}} \geq 0 \quad \forall d_1 \in \mathbb{R}^{m_1} \text{ such that } z_1^{\infty} + d_1 \in \mathcal{Z}_1$$

    Following the same rationale for the other blocks of variables, we obtain:
    \begin{equation}\label{eq:Majora_dirDerivatives}
        \Bar{\phi}^{'}(z_i,z^{\infty};d_i)\bigg\rvert_{z_i = z_i^{\infty}} \geq 0 \quad \forall d_i \in \mathbb{R}^{m_i} \text{ such that } z_i^{\infty} + d_i \in \mathcal{Z}_i, \forall i=1,...,s
    \end{equation}
    Using Assumption~\ref{assumptions: majorizer} and Equations \eqref{eq:Majora_dirDerivatives}, we obtain:
    \begin{equation}\label{eq:Func_dirDerivatives}
        \phi^{'}(z^{\infty};d_i^0) \geq 0 \quad \forall d_i^0=(0,...,d_i,...,0) \text{ such that } z_i^{\infty} + d_i \in \mathcal{Z}_i, \forall i=1,...,s
    \end{equation}
    Equations \eqref{eq:Func_dirDerivatives} implies that $z^{\infty}$ is a coordinate-wise minimum of $\phi$.
    Finally, assuming that $\phi$ is regular on its domain, including $z^{\infty}$, then we can write:
    \begin{equation}\label{eq:Func_dirDerivatives_reg}
        \phi^{'}(z^{\infty};d) \geq 0 \quad \forall d=(d_1,...,d_s) \text{ such that } z^{\infty} + d \in \mathcal{Z}
    \end{equation}
    Under this additional regularity assumption of $\phi$, Equation \eqref{eq:Func_dirDerivatives_reg} implies that the limit point $z^{\infty}$ is a stationary point of $\phi$.
    This concludes the proof.
    \end{proof}

\section{MM update rules for the meta-algorithm}\label{app:update_rules}

We consider a generic minimization problem of the form
\begin{equation}\label{eq:sNMFcolpb}
  \underset{X_1 \geq 0}{\argmin}D_{\beta}(T|X_1U) + \mu \sum_{q \leq r}\|X_1[:,q]\|_p^p
\end{equation}
where $T=\mathcal{T}_{[1]} \in \mathbb{R}_+^{m_1 \times m_2 \times m_3 }$, $U=\mathcal{G}_{[1]}(X_2 \kron X_3)^T \in \mathbb{R}_+^{r_1 \times m_2 m_3}$, $D_\beta$ is the beta-divergence and $\mu > 0$. Notably, we considered the decomposition of a third-order tensor $\mathcal{T}$, but the approach can be generalized to any order.

In Problem \eqref{eq:sNMFcolpb}, the regularization term is separable w.r.t. each entry of $X_1[:,q]$ for all $q \leq r$.
For the data fitting term, we use the majorizer proposed in \cite{fevotte2011algorithms}. For the sake of completeness and to ease the understanding of the further developments, we briefly recall it in the following. It consists in first decomposing the $\beta$-divergence between two nonnegative scalars $a$ and $b$ as a sum of a convex, concave and constant terms as follows:

\begin{equation}\label{eq:sNMFJensen}
  d_{\beta}(a|b)= \check{d}_{\beta}(a|b)+\hat{d}_{\beta}(a|b)+\bar{d}_{\beta}(a|b), 
  \end{equation}
where $\check{d}$ is a convex function of $b$, $\hat{d}$ is a concave function of $b$ and $\bar{d}$ is a constant w.r.t. $b$; see~\cref{table:conv_concav_decomp}, and then  
in majorizing the convex part of the $\beta$-divergence using Jensen's inequality and majorizing the concave part by its tangent (first-order Taylor approximation).
  
\begin{center}
\begin{table}[h!]
\begin{center}
\caption{Differentiable convex-concave-constant decomposition of the $\beta$-divergence under the form \eqref{eq:sNMFJensen}~\cite{fevotte2011algorithms}. }
\label{table:conv_concav_decomp}
\begin{tabular}{|c|c|c|c|}
\hline 
      & $\check{d}(a|b)$  &  $\hat{d}(a|b)$ & $\bar{d}(a)$   \\  \hline 
      $\beta=(-\infty,1) \setminus \{0\}$     
 &   $\frac{1}{1 - \beta} a b^{\beta-1}$  & $\frac{1}{\beta} b^{\beta}$  & $\frac{1}{\beta (\beta-1)} a^\beta$  \\
 $\beta=0$      
 & $ab^{-1}$     & $\log(b)$  & $a(\log(a)-1)$  \\
 $\beta \in [1,2] $ 
 & $d_{\beta}(a|b)$ & 0 & 0    \\ \hline  
\end{tabular} 
\end{center}
\end{table}
\end{center}

Since $\mu \sum_{q \leq r}\|X_1[:,q]\|_p^p=\mu \|X_1\|_p^p$ is convex, the subproblem obtained after majorization (step 1 of MM, see~\cref{sec:MM_framework}) to solve has the form:
\begin{equation}\label{eq:sNMFMaj}
  \underset{X_1 \geq 0}{\argmin}G(X_1|\tilde{X}_1) + \mu \|X_1\|_p^p
\end{equation}

where $G(X_1 | \tilde{X_1})$ is the separable, tight majorizer for $D_\beta(T|X_1U)$ at $\tilde{X_1} \geq 0$ of the following form

\begin{equation}\label{eq:20}
\begin{aligned}
     G(X_1 | \tilde{X_1}) & = \sum_{i \leq m_1, k\leq r} l(X_1[i,k] | \tilde{X_1}) \\
     \text{such that }   l(X_1[i,k] | \tilde{X_1}) & =\sum_{j \leq m_2 m_3} \frac{\Tilde{X}_1[i,k] U[k,j]}{\Tilde{T}[i,j] } \check{d} ( T[i,j] | \Tilde{T}[i,j] \frac{X_1[i,k]}{\Tilde{X}_1[i,k]} ) \\
     & + \sum_{j \leq m_2 m_3} \Big[ U[k,j] \hat{d}^{'}(T[i,j]|\tilde{T}[i,j]) \Big] X_1[i,k] + \text{cst}
\end{aligned}
\end{equation}

according to \cite[Lemma~1]{fevotte2011algorithms} where $\Tilde{T}=\Tilde{X}_1 U$.

As explained in~\cref{sec:MM_framework}, the optimization problem \eqref{eq:sNMFMaj} is a particular instance of the family of problems covered by the framework proposed in \cite{doi:10.1137/20M1377278} in the case no additional equality constraints are required. Assuming $\mu> 0$ and the regularization terms satisfy Assumption~\ref{ass1}, the minimizer of Problem \eqref{eq:sNMFMaj} exists and is unique in $(0,\infty)^{m_1 \times r_1}$ as soon as $\beta < 2$.

Moreover, according to \cite{doi:10.1137/20M1377278}, it is possible to derive closed-form expressions for the minimizer of Problem \eqref{eq:sNMFMaj} for the following values of $\beta$: $\beta \in \{0,1,3/2\}$ (if constants $C>0$ in Assumption~\ref{ass1}) or $\beta \in (-\infty, 1 ] \cup\{5/4,4/3,3/2\}$ (if constants $C=0$ in Assumption~\ref{ass1} when $p=1$ for instance).
Outside these values for $\beta$, an iterative scheme is required to numerically compute the minimizer. We can cite the Newton's method, the Muller's method and the the procedure developed in \cite{rootpoly} which is based on the explicit calculation of the intermediary root of a canonical form of cubic. This procedure is suited for providing highly accurate numerical results in the case of badly conditioned polynomials.

In this section, we will consider the interval $\beta \in [1,2)$. For this setting, we can show that computing the minimizer of Problem \eqref{eq:sNMFMaj} corresponds to solving, for a single element $X_1[i,k]$:
\begin{equation}\label{eq:sNMFpolybetah}
    \begin{aligned}
       & \nabla_{X_1[i,k]}\left[ G(X_1|\tilde{X}_1) + \mu \|X_1\|_p^p \right] = 0\\
        & \Leftrightarrow a X_1[i,k]^{p-1+2-\beta} + b X_1[i,k] - c  = 0 \text{ since } X_1[i,k] \in (0,\infty).
    \end{aligned}
\end{equation}
where the second equation has been obtained by multiplying both sides by $X_1[i,k]^{\beta-2}$, and:
\begin{itemize}
    \item $a=p\mu > 0 $ by hypothesis,
    \item $b=\sum_j U[k,j] \left( \frac{\tilde{T}[i,j]}{\tilde{X}_1[i,k]} \right)^{\beta-1} \geq 0$
    \item $c=\sum_j U[k,j] T[i,j] \left( \frac{\tilde{T}[i,j]}{\tilde{X}_1[i,k]} \right)^{\beta-2} \geq 0$.
\end{itemize}

At this stage, one may observe that finding the solution $X_1$ boils down to solving several polynomial equations in parallel. Nevertheless, depending on the values of $\beta$ and $p$, we can further simplify the update.

\paragraph{Case $\beta$=1, p=1}
Equation \eqref{eq:sNMFpolybetah} becomes $a X_1[i,k] + b X_1[i,k] - c  = 0$. The positive (real) root is computed as follows:
\begin{equation}\label{eq:sNMFupdatehkentry}
    \begin{aligned}
       \hat{X}[i,k] & =\frac{\left.c\right|_{\beta=1}}{a+\left.b\right|_{\beta=1}} \\
                   & = \tilde{X}_1[i,k]\frac{\sum_j U[k,j]   \frac{T[i,j]}{\tilde{T}[i,j]} }{\mu + \sum_j U[k,j]}
    \end{aligned}
\end{equation}
which is nonnegative.
In matrix format,
\begin{equation}\label{eq:sNMFupdatehkentryvec}
       \hat{X}_1 = \tilde{X}_1 \odot \frac{\Big[\frac{[T]}{[\Tilde{X}_1 U ]} U^\top \Big]}{\Big[\mu e_{r1 \times m_1} + e_{m_1 \times m_2 m_3} U^\top \Big]}
\end{equation}
where $\odot$ denotes the Hadamard product, $[A]/[B]$ denotes the elementwise division between matrices $A$ and $B$.
Note that in~\cite{Hoyer2002Non}, the regularization parameters $\mu$ is mistakenly written in the numerator. This error has been corrected in subsequent works without an explicit mention of the error~\cite{leroux2015sparse}.

\paragraph{Case $\beta$=1, p=2}
Equation \eqref{eq:sNMFpolybetah} becomes $a X_1[i,k]^{2} + b X_1[i,k] - c  = 0$. The positive (real) root is computed as follows:
\begin{equation}\label{eq:updatewkentry}
    \begin{aligned}
       \hat{X_1}[i,k]=\frac{\sqrt{\left(\sum_j U[k,j]\right)^2+8\mu\tilde{X}_1[i,k]\sum_j U[k,j]\frac{T[i,j]}{\tilde{T}[i,j]}}-\sum_j U[k,j]}{4\mu}
    \end{aligned}
\end{equation}
with $\mu > 0$.
Note that although the closed-form expression in Equation \eqref{eq:updatewkentry} has a negative term in the numerator of the right-hand side, it can be easily checked that it always remains nonnegative given $T[i,j]$, $U[k,j]$ and $\tilde{X}[i,k]$ are nonnegative for all $i,j,k$. 

Equation \eqref{eq:updatewkentry} can be expressed in matrix form as follows:

\begin{equation}\label{eq:updateW}
    \begin{aligned}
       \hat{X}_1=\frac{\left[C^{.2}+S\right]^{.\frac{1}{2}}-C}{4 \mu}
    \end{aligned}
\end{equation}

where $[A]^{.\alpha}$ denotes the elementwise $\alpha$-exponent, $C=e U^\top$ with $e$ is a all-one matrix of size $m_1 \times m_2 m_3$ and $S=8\mu \tilde{X}_1 \odot \left( \frac{\left[ T \right]}{\left[ \tilde{X}_1U \right]} U^T\right) $.

Some interesting insights are discussed here-under:
\begin{itemize}
    \item Computing the limit $\lim_{\mu \to 0} \hat{X}_1(\mu)$ makes Equation \eqref{eq:updateW} tends to the original Multiplicative Updates introduced by Lee and Seung \cite{Lee1999Learning}. Indeed let us compute this limit in the scalar case from Equation \eqref{eq:updatewkentry} and pose $\alpha=\sum_n u[j,k]$ and $\eta=\tilde{w}[j]\sum_j U[k,j]\frac{T[i,j]}{\tilde{T}[i,j]}$ for convenience:
    \begin{equation}
        \begin{aligned}
           \lim_{\mu \to 0} \frac{\sqrt{\alpha^2+8\mu \eta}-\alpha}{4 \mu} & = \frac{"0"}{"0"} \\
           & \underset{\text{H}}{=}\lim_{\mu \to 0} \frac{\frac{\partial}{\partial_{\mu}} \sqrt{\alpha^2+8\mu \eta}-\alpha}{\frac{\partial}{\partial_{\mu}}4 \mu}\\
           & = \lim_{\mu \to 0} (\alpha^2+8\mu\eta)^{-\frac{1}{2}} \eta = \frac{\eta}{\alpha}=\tilde{X}_1[i,k]  \frac{\sum_j U[k,j]\frac{T[i,j]}{\tilde{T}[i,j]}}{\sum_j U[k,j]}
        \end{aligned}
    \end{equation}
    In matrix form we have:
    \begin{equation}
        \begin{aligned}
           \lim_{\mu \to 0} \hat{X}_1(\mu) = \tilde{X}_1 \odot \frac{\left[ \frac{\left[ T \right]}{\left[ \tilde{X}_1U \right]} U^T \right]}{\left[ e U^T \right]}
        \end{aligned}
    \end{equation}
    which are the MU proposed by Lee and Seung for $\beta=1$.
    \item Regarding the analysis of univariate polynomial equation given in \eqref{eq:sNMFpolybetah}, interestingly the existence of a unique positive real root could have been also established by using Descartes rules of sign. Indeed we can count the number of real positive roots that $p(X_1[i,k])=a X_1[i,k]^{p-1+2-\beta}+ b X_1[i,k] - c$ has (for $\beta \in [1,2] $). More specifically, let $v$ be the number of variations in the sign of the coefficients $a,b,c$ (ignoring coefficients that are zero). Let $n_p$ be the number of real positive roots. Then:
    \begin{enumerate}
        \item $n_p \leq v$,
        \item $v-n_p$ is an even integer.
    \end{enumerate}
     Let us consider the case $\beta=1$ and $p=2$ as an example: given the polynomial $p(X_1[i,k])=a X_1[i,k]^{2} + b X_1[i,k] - c$, assuming that $c$ is positive. Then $v=1$, so $n_p$ is either 0 or 1 by rule 1. But by rule 2, $v-n_p$ must be even, hence $n_p=1$. Similar conclusion can be made for any $\beta \in [1,2] $.

\end{itemize}

Finally, similar rationale can be followed to derive MM updates in closed-form for $\beta=3/2$. The case $\beta=2$ is less interesting to tackle with MM updates rules since each subproblem is $L$-smooth, allowing the use of highly efficient methods such as the Projected Accelerated Gradient Descent algorithm from \cite{nesterov2013introductory}, or the famous HALS algorithm. 
The derivation of the MM updates in the case $\beta = 0$ with $\ell_2^2$ regularizations on the factors is below in~\cref{app:beta_zero_l2reg}.

\section{Factor updates for $\beta=0$ with $\ell_2^2$ regularizations}\label{app:beta_zero_l2reg}
In this section, we present the steps followed to derive MM update when the data fitting term corresponds to the well-known "Itakura-Saito" divergence, that is for $\beta=0$, and with $\ell_2^2$ regularization on  $\{X_i[:,q]\}_{i \leq n}$ for all $q \leq r$. Since the update of all factors can be derived in a similar fashion, we only detail the update for $X_1 \in \mathbb{R}_+^{m_1 \times r_1}$, that is for $i=1$, without loss of generality. The MM update for $X_1$ boils down to tackle the following subproblem:
\begin{equation}\label{eq:probinWIS}
  \underset{X_1 \geq 0}{\argmin}D_{0}(T|X_1U) +  \mu \sum_{q \leq r} \|X_1[:,q]\|_2^2
\end{equation}
where $T=\mathcal{T}_{[1]} \in \mathbb{R}_+^{m_1 \times m_2 \times m_3 }$, $U=\mathcal{G}_{[1]}(X_2 \kron X_3)^T \in \mathbb{R}_+^{r_1 \times m_2 m_3}$ and $\mu > 0$.
We follow the methodology detailed in~\cref{sec:MM_framework}. In Problem \eqref{eq:probinWIS}, the regularization terms are separable \textit{w.r.t.} each entry of $X_1[:,q]$ for all $q \leq r$. For the data fitting term, we use the results from \cite{fevotte2011algorithms}. Since $\sum_{q \leq r} \|X_1[:,q]\|_2^2= \|X_1\|_F^2$, this leads to solve the following problem:
\begin{equation}\label{eq:prob_inw_IS}
    \argmin_{X_1 \geq 0} G(X_1 | \tilde{X_1}) + \mu \|X_1\|_F^2
\end{equation}

where $G(X_1 | \tilde{X_1})$ is the separable, tight majorizer for $D_{0}(T|X_1U)$ at $\tilde{X_1}$ as proposed in \cite[Lemma~1]{fevotte2011algorithms}.

Given that the objective function in Problem~\eqref{eq:prob_inw_IS} is separable with respect to each entry $X_1[i,k]$ of $X_1$ where $1 \leq i \leq m_1$ and $1 \leq k \leq r_1$, we can update each entry separately. To find the value of $X_1[i,k]$, we solve 
\begin{equation}\label{eq:subProbem_ISL2_sep}
    \nabla_{X_1[i,k]}\left[ G(X_1 | \tilde{X_1}) + \mu ||X_1||_F^2 \right] = 0 
\end{equation}

and check if the solution is unique and satisfies the nonnegativity constraints. Considering the discussion in~\cref{sec:MM_framework} and the findings of~\cite[Proposition1]{doi:10.1137/20M1377278}, we are aware that this minimizer exists and is unique over the range of $(0,+\infty)$ due to the condition of $\mu>0$ and the regularization function squared $\ell_2$ norm satisfying Assumption~\ref{ass1} with constants $C=2$. For now, we put this result aside and simply compute the solution of Equation~\eqref{eq:subProbem_ISL2_sep} and then check if it exists and is unique over the positive real line. 

From Lemma~\cite[Lemma~1]{fevotte2011algorithms} and~\cref{table:conv_concav_decomp} for $\beta=0$, one can show that solving \eqref{eq:subProbem_ISL2_sep} reduces to finding $X_1[i,k] > 0$ of the following scalar equation:
\begin{equation}\label{eq:poly_ISl2s}
    - \Bar{a} (X_1[i,k])^{-2} + 2\mu X_1[i,k] + \Bar{c} = 0
\end{equation}
where $\Bar{a} = (\Tilde{X}_1[i,k])^{2} \sum_{j} U[k,j] \frac{T[i,j]}{(\Tilde{T}[i,j])^{2}}, \geq 0$, $\Tilde{T}=\Tilde{X}_1 U$ and $\Bar{c}=\sum_j \frac{U[k,j]}{T[i,j]}$ \footnote{The solution $X_1[i,k]=0$ only exists if $\Bar{c}=0$, that is, by definition of $\Bar{c}$, when the column $U[k,:]$ is zero.}. Posing $x=X_1[i,k]$, Equation \eqref{eq:poly_ISl2s} can be equivalently rewritten as follows:
\begin{equation} 
\label{eq:poly_ISl2s_2} x^3 + p x^2 + q x + r = 0 ,
\end{equation}
where $p=\frac{\Bar{c}}{2\mu} (\geq 0)$, $q=0$ and $r=-\frac{\Bar{a}}{2\mu} (\leq 0)$. Equation \eqref{eq:poly_ISl2s_2} can be rewritten in the so-called normal form by posing $x = z - p/3$ as follows:
\begin{equation} 
\label{beta1over2_poly_2} z^3 + a z  + b = 0 ,
\end{equation}
where $a=\frac{1}{3}(3q - p^2)=\frac{-p^2}{3}$ and $b=\frac{1}{27}(2p^3 - 9pq + 27r)=\frac{1}{27}(2p^3 + 27r)$. Under the condition $\frac{b^2}{4} + \frac{a^3}{27} > 0$, Equation \eqref{beta1over2_poly_2} has one real root and two conjugate imaginary roots. This condition boils down to $\frac{1}{4}(2p^3 + 27r)^2 - p^6 > 0$. By definition of $p$ (non-negative) and $r$ (non-positive) above, this condition holds if $p > 0$ and $r<-\frac{4p^3}{27}$. The first condition on $p$ is satisfied assuming that $\Bar{c}$ is positive. The second condition on $r$ is satisfied by further assuming $\mu > \sqrt{\frac{1}{27} \frac{(\Bar{c})^3}{\Bar{a}}}$
\footnote{In practice, this condition means that $\mu$ should not be chosen too small.}. Therefore, the positive real root has the following closed-form expression:
\begin{equation} 
\label{sol} z = \sqrt[3]{-\frac{b}{2} + \sqrt{\frac{b^2}{4} + \frac{a^3}{27}}} +  \sqrt[3]{-\frac{b}{2} - \sqrt{\frac{b^2}{4} + \frac{a^3}{27}}},
\end{equation}
and hence 
\begin{equation} 
\label{sol_2} X_1[i,k] =z - \frac{\Bar{c}}{6 \mu},
\end{equation} 
With a bit of technicalities, one can show this root is positive, under the assumptions $\Bar{c} > 0$ and $\mu > \sqrt{\frac{1}{27} \frac{(\Bar{c})^3}{\Bar{a}}}$.
In matrix form, we obtain the following update 
$$
X_1 =  Z - \frac{[\Bar{C}]}{[6 \mu ]}, 
$$
where $Z=\Big[ \frac{-B}{2} + \Big[ \frac{[B]^{.2}}{4} + \frac{[A]^{.3}}{27} \Big]^{.(1/2)}\Big]^{.(1/3)} + \Big[ \frac{-B}{2} - \Big[ \frac{[B]^{.2}}{4} + \frac{[A]^{.3}}{27} \Big]^{.(1/2)}\Big]^{.(1/3)}$, $A=-\frac{1}{3} \Big[ \frac{[\Bar{C}]}{[2\mu]}\Big]^{.2}$, $B=\frac{1}{27} \Big( 2 \Big[  \frac{[\Bar{C}]}{[2\mu]}\Big]^{.3} + 27 \frac{[-\Bar{A}]}{[2\mu]}\Big)$, $\Bar{A}= [\tilde X_1]^{.2} \odot 
\left(\frac{[T]}{[\tilde X_1 U]^{.2}} U^\top\right)$ and $\Bar{C}=\frac{[E]}{[\tilde X_1 U]} U^\top$, with $E$ denoting a matrix full of ones of size $m_1 \times m_2 m_3$. \\

Note that we skipped in the discussion above the following cases:
\begin{enumerate}
    \item $\frac{b^2}{4} + \frac{a^3}{27} = 0$, that is when $r=-\frac{4p^3}{27}$, or equivalently $\mu = \sqrt{\frac{1}{27} \frac{(\Bar{c})^3}{\Bar{a}}}$: here, Equation \eqref{beta1over2_poly_2} has one nonnegative real root given by $z= \sqrt{- \frac{a}{3}} = \frac{1}{3} \frac{\Bar{c}}{\mu}$ (if $b<0$) and hence $X_1[i,k]=z - \frac{\Bar{c}}{6 \mu} = \frac{\Bar{c}}{6 \mu} \geq 0$. If $b>0$, the root is given by $z=\frac{\Bar{c}}{6 \mu}$, and then $X_1[i,k]=0$.
    \item $\frac{b^2}{4} + \frac{a^3}{27} < 0$, that is when $\mu < \sqrt{\frac{1}{27} \frac{(\Bar{c})^3}{\Bar{a}}}$: then Equation \eqref{beta1over2_poly_2} has one nonnegative real root given among $z=\frac{1}{3} \frac{\Bar{c}}{\mu} \cos{(\frac{\Phi}{3} + \frac{2 k \pi}{3})}$ with $k=0,1,2$ and where $\cos{\Phi}=\frac{1}{2} \frac{|2 p^3 + 27 r|}{p^3}$.
\end{enumerate}

Those cases are treated in the implementation of the algorithm.



\section{Scaling slices and the core with tensor Sinkhorn}\label{app:NTD_sinkhornbalancing}

At first glance the sNTD does not fit in the HRSI framework. While there is a form of scale ambiguity in NTD, it is not of the form we introduced in Assumption~\textbf{A1} in Section~\cref{sec:HRSI_intro}. The scaling ambiguity works pairwise between the core and each factor, for instance on the first mode,
\begin{equation}
  (\mathcal{G} \times_1 \Lambda) \times_1 (X_1\Lambda^{-1}) \times_2 X_2 \times_3 X_3 =  \mathcal{G} \times_1 X_1 \times_2 X_2 \times_3 X_3
\end{equation}
for a diagonal matrix $\Lambda$ with positive entries. This means that the core matrix has scaling ambiguities affecting its rows, columns, and fibers. It then makes little sense to talk about scaling the columns of $\mathcal{G}$ within the HRSI framework.

The fact that the scaling invariances do not fit the HRSI framework does not change the fact that the regularizations will enforce implicitly a particular choice of the parameters $\mathcal{G}$ and $X_1,X_2,X_3$ at optimality. Therefore it should still be valuable to design a balancing procedure to enhance convergence speed, as we have proposed for sNMF and rCPD. 

The scalar scale invariance detailed in~\cref{sec:Showcases} does not encompass all scaling invariances that occur in NTD. Again, in general, for each mode, the core and factor can be rescaled by inserting a diagonal matrix in-between, e.g. a matrix $\Lambda_{1}$ for the first mode:
$\mathcal{G} \times_1 X_1 = (\mathcal{G} \times_1 \Lambda_1) \times_1 (X_1\Lambda_1^{-1})$, where $\Lambda_1$ has strictly positive diagonal elements. The scaling on each mode affects all the slices of the core, therefore the HRSI framework does not directly apply. Nevertheless, it is possible to derive an algorithmic procedure similar to the Sinkhorn algorithm to scale the factors and the core. We provide detailed derivations for the special case of sNTD, but the proposed method extends to any regularized NTD with homogeneous regularization functions.

To understand the connection between the Sinkhorn algorithm, which aims at scaling a matrix or tensor so that the slices are on the unit simplex~\cite{Sinkhorn_Knopp_1967,BACHARACH,LAMOND1981239}, and the proposed procedure, consider the following: the results of~\cref{sec:Sparsity_scale_inva_models} applied to scale invariance on the first mode inform that at optimality, $\beta_{X_1}[q]:=2\|X_1^\ast[:,q]\|_2^2 = \|\mathcal{G}^\ast_{[1]}[:,q]\|_1$, where $\mathcal{G}_{[i]}$ is the mode $i$ unfolding of the core tensor $\mathcal{G}$. This observation implies that solutions of sparse NTD must in particular be solutions to a system of equations of the form:
\begin{align}\label{eq:scale_NTD_systemW}
\beta_{X_1}[q] = \|\mathcal{G}_{[1]}[:,q]\|_1  \\\label{eq:scale_NTD_systemH}
\beta_{X_2}[q] = \|\mathcal{G}_{[2]}[:,q]\|_1  \\\label{eq:scale_NTD_systemQ}
\beta_{X_3}[q] = \|\mathcal{G}_{[3]}[:,q]\|_1 
\end{align}
Optimally balancing sparse NTD can therefore be reduced to two operations: first, compute the optimal regularizations $\beta_{X_i}$, then transform the core and the factors so that\eqref{eq:scale_NTD_systemW}-\eqref{eq:scale_NTD_systemQ} holds. Supposing we know the $\beta_{X_i}$ values, computing the scalings of the slices of the core tensor in each mode is exactly what the tensor Sinkhorn algorithm is designed for~\cite{Csiszar_1975,pmlr-v70-sugiyama17a}. However, since the $\beta_{X_i}$ values are not known, we cannot directly apply a tensor Sinkhorn algorithm.
Tensor Sinkhorn usually involves an alternating normalization procedure, which we shall mimic in the following.

We propose a scaling heuristic which, starting from initial diagonal matrices $\Lambda_1,\Lambda_2,\Lambda_3$ set to identity matrices, alternatively balances each mode optimally:
\begin{align}\label{eq:opti_prob_scal_sNTD}
    \underset{\Lambda_1 \text{ diagonal}}{\argmin} \|\mathcal{G} \times_1 \Lambda_1 \times_2 \Lambda_2 \times_3 \Lambda_2 \|_1 + \|X_1 \Lambda_1^{-1} |_F^2  \\
    \underset{\Lambda_3 \text{ diagonal}}{\argmin} \|\mathcal{G} \times_1 \Lambda_1 \times_2 \Lambda_2 \times_3 \Lambda_3 \|_1 + \|X_2 \Lambda_2^{-1} \|_F^2 \\
    \underset{\Lambda_3 \text{ diagonal}}{\argmin} \|\mathcal{G} \times_1 \Lambda_1 \times_2 \Lambda_2 \times_3 \Lambda_3 \|_1 + \|X_3 \Lambda_3^{-1}\|_F^2 ~.
\end{align}
This procedure computes estimates of the $\beta_{X_i}$ marginals iteratively for each mode, and scales the core tensor and the factors accordingly on the fly. It can be seen as a modified tensor Sinkhorn algorithm where the $\beta_{X_i}$ marginals are learned iteratively from the input factors $X_1,X_2,X_3$ and core $\mathcal{G}$ in order to satisfy Equations~\eqref{eq:scale_NTD_systemW}-~\eqref{eq:scale_NTD_systemQ}. Each minimization step boils down to a balancing with two factors in the spirit of what is showcased for sNMF in~\cref{sec:Showcases}. Balancing the first mode yields
\begin{equation}
    \mathcal{G}[q,:,:] \leftarrow \frac{\beta_{X_1}[q]}{\|\Lambda_2 \mathcal{G}[q,:,:] \Lambda_3 \|_1}\mathcal{G}[q,:,:]  \; \text{and} \; X_1[:,q] \leftarrow \sqrt{\frac{\beta_{X_1}[q]}{2\|X_1[:,q]\|_2^2}}X_1[:,q]
\end{equation}
with $\beta_{X_1} = \left(\sqrt{2}\|X_1\|_F\|\mathcal{G} \times_2 \Lambda_2 \times_3 \Lambda_3 \|_1 \right)^{2/3}$. The other two modes are updated in a similar way. Note that in the alternating scaling algorithm, the scales are not explicitly stored. Rather they are embedded in the core at each iteration because of the scaling update.

As the cost decreases at each iteration and is bounded below, the algorithm is guaranteed to converge. The update rule is well-defined if the factors and the core have non-zero columns/slices. Additionally, since each subproblem has a unique minimizer that is attained, and given that the objective function to minimize in Equation~\eqref{eq:opti_prob_scal_sNTD} is continuously differentiable, every limit point of this block-coordinate descent algorithm is a critical point (see \cite[Proposition~2.7.1]{Bertsekas_1999}). If the objective function is a proper closed function and is continuous over its domain (not necessarily continuously differentiable), then any limit point of the sequence is a coordinate-wise minimum (see \cite[Theorem~14.3]{Beck_2017}).

While this guarantees that the scales of the factors and core will be improved, unlike for the tensor Sinkhorn, we are not able to guarantee that the proposed heuristic computes an optimal scaling although unreported experiments hint towards this direction. The proposed adaptive tensor Sinkhorn algorithm is summarized for the sNTD problem in~\cref{alg:sinkhorn-Algo}. 
Finally, we handle the null components and the entries threshold $\epsilon$ for the scaling as explained in Section~\cref{sec:specificity_beta}.

We did not report the results of the Sinkhorn scaling since they were overall similar to what is obtained with the scalar scaling. The scalar scaling is moreover much easier to implement.

\begin{algorithm}[ht!]
\caption{Balancing algorithm for sNTD \label{alg:sinkhorn-Algo}}
\begin{algorithmic}[1] 
\REQUIRE $X_1,X_2,X_3,\mathcal{G}$, maximal number of iterations itermax.
\STATE Initialize $\beta_{X_1}[q]$, $\beta_{X_2}[q]$ and $\beta_{X_3}[q]$ respectively with $\|X_1[:,q]\|_2^2, \|X_2[:,q]\|_2^2, \|X_3[:,q]\|_2^2$ for all $q$.
\FOR{$i$ = 1 : itermax} 
    \FOR{$q$ = 1:$r_1$}
        \STATE $\beta_{X_1}[q] = \left(\sqrt{2\beta_{X_1}[q]}\|\mathcal{G}[q,:,:]\|_1\right)^{\frac{2}{3}}$
        \STATE $\mathcal{G}[q,:,:] = \frac{\beta_{X_1}[q]}{\|\mathcal{G}[q,:,:]\|_1}\mathcal{G}[q,:,:]$ or $\mathcal{G}[q,:,:] = 0$ if $\beta_{X_1}[q]=0$
    \ENDFOR
    \FOR{$q$ = 1:$r_2$}
        \STATE $\beta_{X_2}[q] = \left(\sqrt{2\beta_{X_2}[q]}\|\mathcal{G}[:,q,:]\|_1\right)^{\frac{2}{3}}$
        \STATE $\mathcal{G}[:,q,:] = \frac{\beta_{X_2}[q]}{\|\mathcal{G}[:,q,:]\|_1}\mathcal{G}[:,q,:]$ or $\mathcal{G}[:,q,:] = 0$ if $\beta_{X_2}[q]=0$
    \ENDFOR
    \FOR{$q$ = 1:$r_3$}
        \STATE $\beta_{X_3}[q] = \left(\sqrt{2\beta_{X_3}[q]}\|\mathcal{G}[:,:,q]\|_1\right)^{\frac{2}{3}}$
        \STATE $\mathcal{G}[:,:,q] = \frac{\beta_{X_3}[q]}{\|\mathcal{G}[:,:,q]\|_1}\mathcal{G}[:,:,q]$ or $\mathcal{G}[:,:,q] = 0$ if $\beta_{X_3}[q]=0$
    \ENDFOR
\ENDFOR
\FOR{$q$ = 1:$r_1$}
    \STATE $X_1[:,q] = \sqrt{\frac{\beta_{X_1}[q]}{2\|X_1[:,q]\|_F^2}}X_1[:,q]$ or $X_1[:,q]=0$ if $\|X_1[:,q]\|^2_2=0$
\ENDFOR
\FOR{$q$ = 1:$r_2$}
    \STATE $X_2[:,q] = \sqrt{\frac{\beta_{X_2}[q]}{2\|X_2[:,q]\|_F^2}}X_2[:,q]$ or $X_2[:,q]=0$ if $\|X_2[:,q]\|^2_2=0$
\ENDFOR
\FOR{$q$ = 1:$r_3$}
    \STATE $X_3[:,q] = \sqrt{\frac{\beta_{X_3}[q]}{2\|X_3[:,q]\|_F^2}}X_3[:,q]$ or $X_3[:,q]=0$ if $\|X_3[:,q]\|^2_2=0$
\ENDFOR
\RETURN $X_1,X_2,X_3,\mathcal{G}$
\end{algorithmic}  
\end{algorithm} 

\section{Additional experiments: Factor match score evaluation}
\Cref{fig:fms} shows the Factor Match Score (FMS)~\cite{Acar2011Scalable} for each experiments conducted in~\cref{sec:Showcases}. There is significant variability in the results, but overall balancing does not generally improve FMS. Therefore when scaling-swamps occur, typically the loss is not well minimized but the estimated components are well estimated up to scaling.

\begin{figure}
    \centering
    \includegraphics[width=5.4cm]{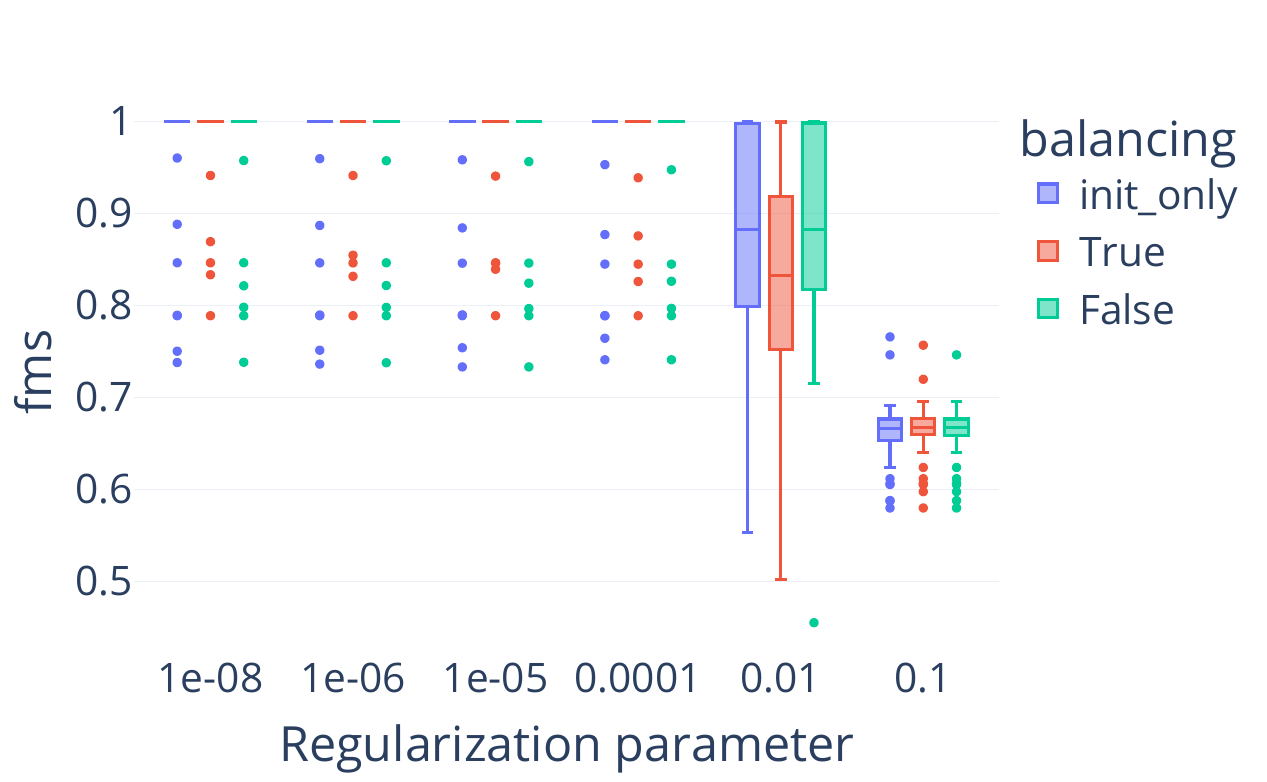}
    \includegraphics[width=5.4cm]{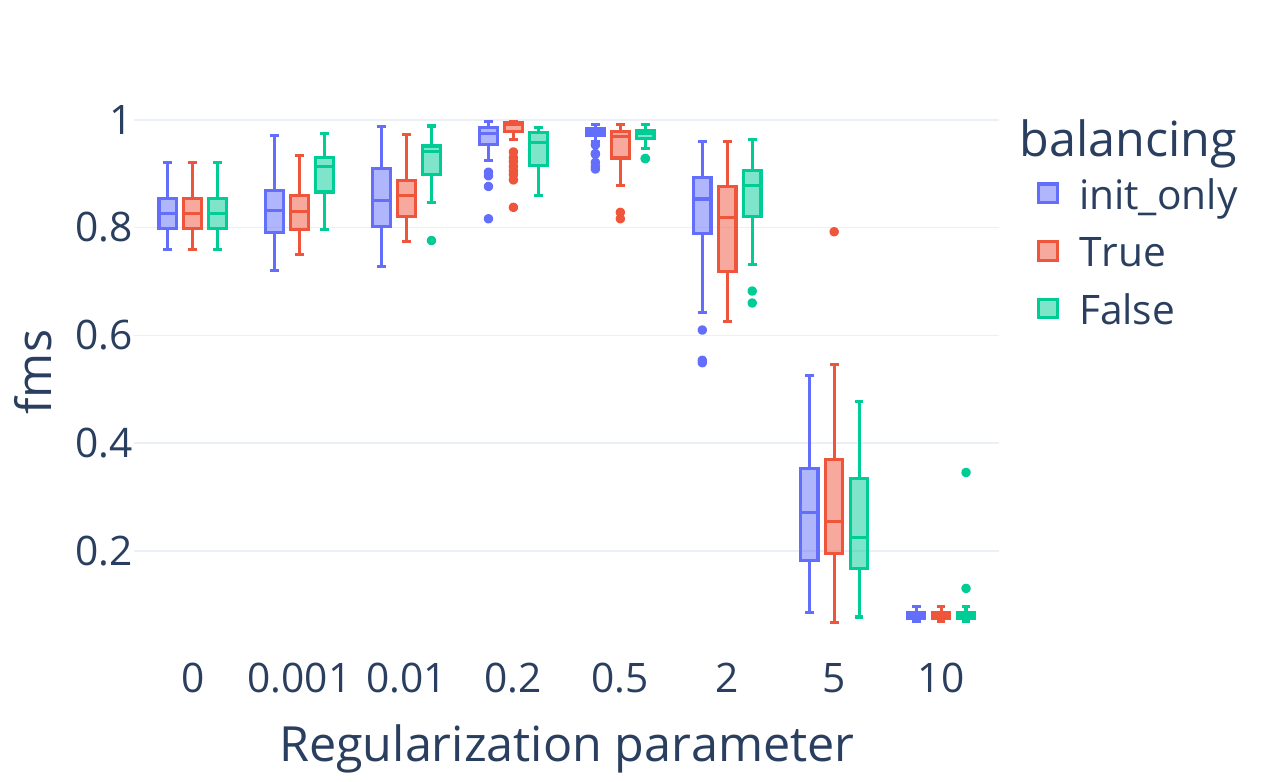}
    \includegraphics[width=5.4cm]{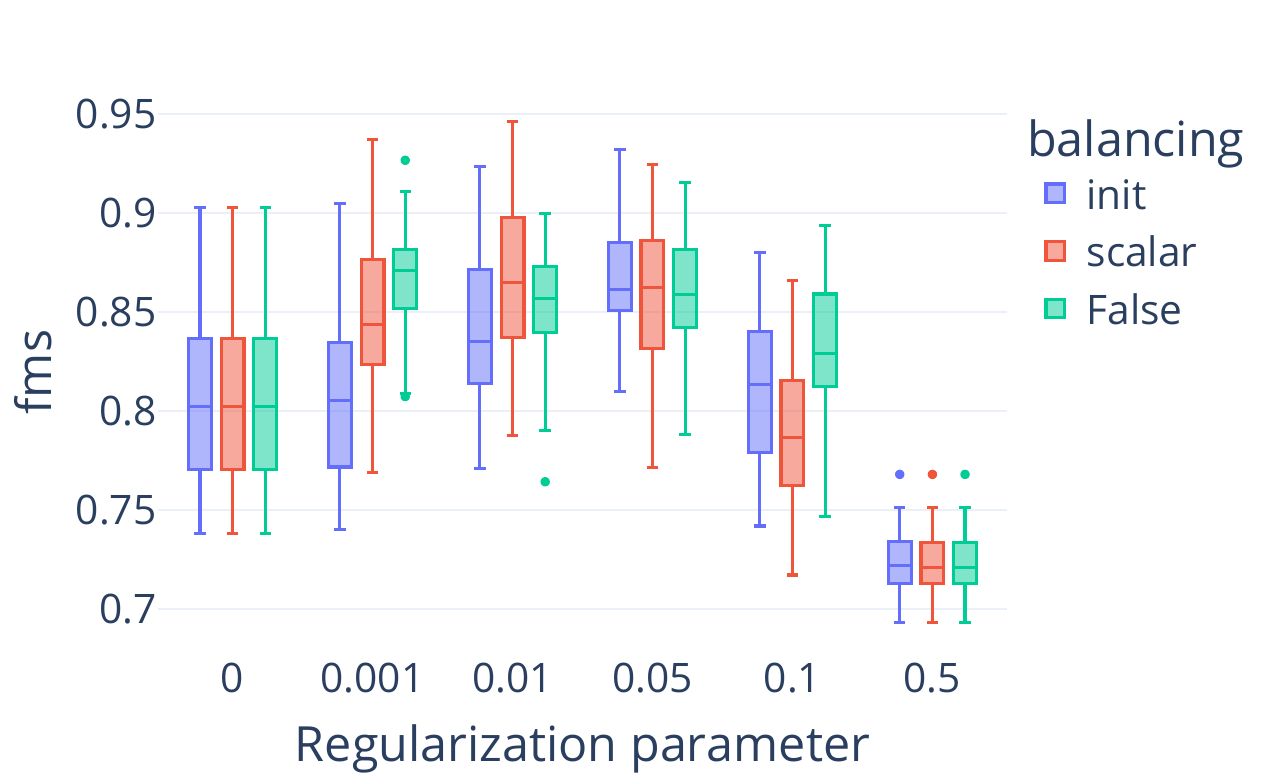}
    \caption{Left: sNMF experiment, center: rNCPD experiment, right: sNTD experiment.}
    \label{fig:fms}
\end{figure}

\bibliographystyle{unsrtnat}
\bibliography{references}  






\end{document}